\documentclass{article} 
\usepackage{iclr2027_conference,times}

\usepackage[T1]{fontenc}
\usepackage[utf8]{inputenc}
\usepackage{placeins}
\usepackage{amsmath,amssymb,amsfonts,amsthm,mathtools}
\usepackage{bm,bbm,mathrsfs}
\usepackage{microtype}
\usepackage{xcolor}
\usepackage{graphicx}
\usepackage{booktabs,tabularx,array,makecell,multirow}
\usepackage{enumitem}
\usepackage{algorithm,algorithmic}
\usepackage{caption,subcaption}
\usepackage{wrapfig}
\usepackage{adjustbox}
\usepackage{nicefrac}
\usepackage{siunitx}
\usepackage{tikz}
\usetikzlibrary{positioning,arrows.meta,calc,matrix}
\usepackage{pgfplots}
\usepackage{pgfplotstable}
\pgfplotsset{compat=1.18}

\usepackage{hyperref}
\hypersetup{colorlinks=true,linkcolor=blue!50!black,citecolor=blue!50!black,urlcolor=blue!50!black}
\usepackage[nameinlink,capitalise,noabbrev]{cleveref}
\usepackage{titletoc}

\setlist[itemize]{topsep=3pt,itemsep=2pt,parsep=0pt,leftmargin=1.5em}
\setlist[enumerate]{topsep=3pt,itemsep=2pt,parsep=0pt,leftmargin=1.7em}

\newcolumntype{P}[1]{>{\raggedright\arraybackslash}p{#1}}
\newcolumntype{Y}{>{\raggedright\arraybackslash}X}

\makeatletter
\@ifundefined{theorem}{%
  \theoremstyle{plain}
  \newtheorem{theorem}{Theorem}[section]
}{}
\@ifundefined{proposition}{\newtheorem{proposition}[theorem]{Proposition}}{}
\@ifundefined{lemma}{\newtheorem{lemma}[theorem]{Lemma}}{}
\@ifundefined{corollary}{\newtheorem{corollary}[theorem]{Corollary}}{}
\@ifundefined{assumption}{%
  \theoremstyle{definition}
  
}{}
\@ifundefined{definition}{%
  \theoremstyle{definition}
  
}{}
\@ifundefined{example}{%
  \theoremstyle{definition}
  
}{}
\@ifundefined{remark}{%
  \theoremstyle{remark}
  
}{}
\@ifundefined{condition}{%
  \theoremstyle{definition}
  
}{}
\makeatother

\crefname{assumption}{Assumption}{Assumptions}
\Crefname{assumption}{Assumption}{Assumptions}

\providecommand{\R}{\mathbb{R}}
\providecommand{\bR}{\mathbb{R}}

\providecommand{\E}{\mathbb{E}}

\providecommand{\1}{\mathbbm{1}}

\providecommand{\cB}{\mathcal{B}}

\providecommand{\cD}{\mathcal{D}}

\providecommand{\cL}{\mathcal{L}}

\providecommand{\cN}{\mathcal{N}}

\providecommand{\cT}{\mathcal{T}}

\providecommand{\Var}{\operatorname{Var}}
\providecommand{\Cov}{\operatorname{Cov}}

\providecommand{\Law}{\operatorname{Law}}

\providecommand{\diag}{\operatorname{diag}}

\providecommand{\tr}{\operatorname{tr}}

\makeatletter
\@ifundefined{ceil}{}{}
\@ifundefined{floor}{}{}
\@ifundefined{paren}{}{}
\@ifundefined{bracket}{}{}
\@ifundefined{set}{}{}
\@ifundefined{abs}{}{}
\@ifundefined{norm}{\DeclarePairedDelimiter{\norm}{\lVert}{\rVert}}{}
\@ifundefined{ip}{\DeclarePairedDelimiterX{\ip}[2]{\langle}{\rangle}{#1,\,#2}}{}
\@ifundefined{gendivx}{\DeclarePairedDelimiterX{\gendivx}[2]{(}{)}{#1\;\delimsize\|\;#2}}{}
\makeatother

\newcommand{\papertablestyle}{\small\setlength{\tabcolsep}{4pt}\renewcommand{\arraystretch}{1.12}}

\usepackage{hyperref}
\usepackage{url}

\title{EGGROLL, Unrolled: \\ Understanding and Improving Low-Rank \\ Evolution Strategies at Scale}

\author{Ege~C.~Kaya,  Abolfazl~Hashemi \\
Elmore Family School of Electrical and Computer Engineering\\
Purdue University\\
West Lafayette, IN 47907, USA \\
\texttt{\{kayae,abolfazl\}@purdue.edu}
}

\iclrfinalcopy 
\begin{document}

\maketitle

\begin{abstract}
EGGROLL \citep{sarkar2026evolution} makes evolution strategies (ES) practical for LLMs by replacing dense Gaussian weight perturbations with low-rank Gaussian products, often of rank one. This choice is computationally attractive but geometrically severe: Each rank-one perturbation lies in a zero-volume subset of the ambient matrix space, despite having identity covariance. We characterize the EGGROLL update mean field at finite rank and nonzero perturbation radius as a resolvent applied to the gradient of the perturbation-smoothed objective. This transformation can make the mean field nonconservative and reverse the local stability of an optimum. EGGROLL nevertheless recovers the gradient exactly on quadratic objectives at every rank and radius. In finite populations, the additional sampling variance of rank-one perturbations relative to dense Gaussian ES decays inversely with matrix width under a local affine model, and is only $0.098\%$ at width $4096$. Finally, we introduce \emph{LOO-ROLL}, a leave-one-out estimator that replaces EGGROLL's two antithetic evaluations per direction by one. At equal evaluation cost, LOO-ROLL halves estimator MSE in transformer blocks. Across fourteen post-training settings up to 14B parameters, matched-time comparisons with EGGROLL yield eleven improvements in individual paired tests and no significant loss. At 1.7B, 8B, and 14B parameters, matched-time gains are $2.9$, $14.1$, and $7.9$ percentage points on GSM8K and $12.2$, $8.4$, and $8.1$ points on MATH-500.
\end{abstract}

\section{Introduction}
\label{sec:intro}

Evolution strategies (ES) \citep{schwefel1977numerische,rechenberg1978evolutionsstrategien,beyer2002evolution,salimans2017evolution} optimize a scalar fitness using perturbed function evaluations. Their forward-only computation has motivated memory-efficient language-model fine-tuning \citep{malladi2023fine,pmlr-v235-gautam24a,zhao2025second}, while their parallel populations support scalable post-training \citep{sarkar2026evolution,hoy2026matching,xu2026quantized}. A remaining difficulty is moving a large population of independently perturbed parameter settings through the network efficiently. Modern accelerators obtain high throughput when arithmetic reuses data already brought into fast memory \citep{williams2009roofline,dao2022flashattention}. Giving each member a dense perturbation requires generating or reading another $mn$ numbers per weight matrix, making memory traffic grow with the population.

EGGROLL \citep{sarkar2026evolution} perturbs each weight matrix with a rank-$r$ Gaussian product,
\begin{equation}\label{eq:perturbation}
E_r = \frac{1}{\sqrt r}AB^\top = \frac{1}{\sqrt r}\sum_{s=1}^r a_s b_s^\top, \qquad a_s\sim\mathcal N(0,I_m),\quad b_s\sim\mathcal N(0,I_n),
\end{equation}
which permits the evaluation of very large populations at near-inference throughput. The normalization by $1/\sqrt r$ gives $\mathbb E[\operatorname{vec}(E_r)\operatorname{vec}(E_r)^\top]=I_{mn}$ like in dense Gaussian ES, yet each realization has rank at most $r$. When $r<\min(m,n)$, the possible realizations occupy zero volume in the ambient matrix space \citep{absil2008optimization}. For an activation $x$, the perturbed computation is
\begin{equation}\label{eq:adapter}
(W+\sigma E_r)x=Wx+\frac{\sigma}{\sqrt r}A(B^\top x).
\end{equation}
The population shares the base weights $W$, while each member receives a correction through two thin multiplications with $A$ and $B^\top$. These require $O(r(m+n))$ additional storage and arithmetic rather than the $O(mn)$ of dense Gaussian ES. Low-rank adapters are already widely used in LLM training \citep{hu2022lora,dettmers2023qlora,zhang2023adaptive,liu2024dora}. EGGROLL adopts this representation to estimate an update to network weights without backpropagation.

EGGROLL performs well in LLM post-training even in the seemingly prohibitive regime of $r=1$. The original analysis provides asymptotic guarantees, but does not identify the field at the finite ranks and radii used in practice. We ask: \emph{(i) what mean field does EGGROLL follow, (ii) how accurately does a finite population estimate it, and (iii) what improvements does our analysis suggest?}

Our answers reveal the mean dynamics of EGGROLL and the sampling accuracy of a finite population. The mean field is a resolvent-filtered gradient, which can be nonconservative and can make a local optimum repelling. The discrepancy vanishes on quadratics, and the first local finite-rank correction is proportional to $\sigma^2/r$. The sampling cost is milder: rank-$r$ perturbations add $2(m+n+1)/r$ to the dense-Gaussian variance factor $mn+1$. Rank one therefore retains nearly all of the sampling efficiency of dense Gaussian ES for wide matrices. We then introduce \emph{LOO-ROLL}. \href{https://github.com/ESHyperscale/HyperscaleES}{The released EGGROLL implementation} uses two antithetic fitness evaluations per direction. LOO-ROLL uses the other population members as leave-one-out baselines, requiring one evaluation per direction while preserving EGGROLL's expected update. The saving can then be used on additional directions or optimization steps. Across fourteen settings, two model families, and models up to 14B, eleven matched-time gains are individually significant and nine survive Holm-Bonferroni correction \citep{holm1979simple}. The analysis also suggests rank extrapolation to mitigate the bias introduced by finite rank and control variates to mitigate the high variance caused by ES. However, in tested LLM runs, the additional cost of these modifications do not justify their usage, hence we treat them in \cref{app:secondaryremedies}.

\noindent\textbf{We summarize our contributions as follows.}
\begin{itemize}[leftmargin=1.4em,itemsep=1pt,topsep=2pt]
\item We derive the exact finite-rank EGGROLL population field as a resolvent-filtered gradient and show that it can be nonconservative and locally destabilize an optimum.
\item We prove exactness on quadratics, identify the leading finite-rank correction, and show that the surplus finite-population variance relative to dense Gaussian ES decays with matrix width.
\item We introduce LOO-ROLL, replacing antithetic pairs with one-evaluation leave-one-out directions while preserving the mean EGGROLL field. Across fourteen LLM post-training settings up to 14B parameters, eleven matched-time gains are individually significant, nine survive Holm--Bonferroni correction, and none significantly favors EGGROLL.
\end{itemize}

\noindent\textbf{Related work.}\label{sec:related}
Random-direction finite differences and simultaneous perturbation are classical zeroth-order methods \citep{spall1992multivariate,nesterov2017random,duchi2015optimal}. MeZO fine-tunes LLMs with inference-level memory, and subsequent work adds variance reduction \citep{malladi2023fine,pmlr-v235-gautam24a}. Baselines reduce variance in score-function estimators \citep{williams1992simple,wierstra2014natural}, with REINFORCE leave-one-out applied to LLM response groups \citep{ahmadian2024back} and curvature-aware zeroth-order tuning \citep{seung2026low}. LOO-ROLL applies this principle to EGGROLL's Gaussian-product population. Scalar product-normal Stein identities are established \citep{gaunt2017stein,gaunt2018products}. Our matrix operator retains the shared row and column factors and connects their geometry to optimization dynamics. Extended comparisons appear in \cref{app:related}.

\section{The finite-rank mean field}\label{sec:law}\label{sec:setup}
We start our inquiry by asking what finite-rank EGGROLL updates follow in expectation. We consider one parameter matrix $W\in\mathbb R^{m\times n}$. Appendix~\ref{app:network} combines independent perturbations across the matrix blocks of a network. Let $f:\mathbb R^{m\times n}\to\mathbb R$ be the fitness to maximize, and use the Frobenius inner product $\langle A,B\rangle_F=\operatorname{tr}(A^\top B)$. Sampling perturbed weights produces the smoothed objective
\begin{equation}\label{eq:smoothed}
F_{r,\sigma}(W)=\mathbb E[f(W+\sigma E_r)],
\end{equation}
where $\sigma>0$ is the perturbation radius. EGGROLL weights each perturbation by its observed fitness. Averaging this update gives the mean field
\begin{equation}\label{eq:field}
g_{r,\sigma}^f(W)=\frac1\sigma\mathbb E[E_rf(W+\sigma E_r)].
\end{equation}
This is the infinite-population update that we will later approximate with a finite population. We write $E_\infty$ for a dense standard Gaussian matrix, corresponding to the $r\to\infty$ limit of the Gaussian-product law. For a perturbation density $p$, the \emph{score} is $\nabla\log p$. A dense standard Gaussian matrix $E_\infty$ has score $-E_\infty$. Under the usual regularity conditions, Stein's Gaussian identity \citep{stein1981estimation} gives
\begin{equation}
\frac1\sigma\mathbb E[E_\infty f(W+\sigma E_\infty)]
=\mathbb E[\nabla f(W+\sigma E_\infty)]=\nabla F_{\infty,\sigma}(W).
\end{equation}
The same identity does not automatically hold for the Gaussian-product law, since at finite rank, $-E_r$ is not its exact score.

The reported experiments, fitness shaping, and released LLM implementation of \citet{sarkar2026evolution} use paired antithetic perturbations $+E_r$ and $-E_r$:
\begin{equation}\label{eq:antithetic}
\widehat g_{r,\sigma,N}(W)=\frac1N\sum_{s=1}^N E_r^{(s)}
\frac{f(W+\sigma E_r^{(s)})-f(W-\sigma E_r^{(s)})}{2\sigma}.
\end{equation}
Here $N$ is the number of independent directions, each evaluated with both signs. The symmetry of $E_r$ makes $\widehat g_{r,\sigma,N}$ an unbiased estimator of the mean field. Evaluating both signs on the same prompt batch also isolates the effect of the perturbation. Any fitness component shared by the pair, such as the difficulty of the prompts in that batch, cancels when the two evaluations are subtracted. For a smooth objective, a Taylor expansion around $W$ makes this explicit:
\begin{equation}\label{eq:centraldifference}
\frac{f(W+\sigma E_r)-f(W-\sigma E_r)}{2\sigma}
=\langle\nabla f(W),E_r\rangle_F+O(\sigma^2\|E_r\|_F^3),
\end{equation}
where the constant and other even-order terms cancel. EGGROLL normally also centers and standardizes fitness values across the population. Our main analysis studies the raw, unstandardized field in \cref{eq:field}. \Cref{app:normalization} treats the standardization and quantifies its difference from the raw field.

The definitions above lead to two deterministic questions. First, how does the EGGROLL mean field $g_{r,\sigma}^f$ differ from the gradient $\nabla F_{r,\sigma}$ of the objective smoothed by the same perturbations? Second, when they differ, is $g_{r,\sigma}^f$ still \emph{conservative}, meaning that it is the gradient of some scalar objective? Sampling with $E_r$ smooths $f$ into $F_{r,\sigma}$, while weighting those samples by the dense-Gaussian score produces the mean field. We first characterize the discrepancy between these two fields and then determine when the EGGROLL field remains conservative.

Both $F_{r,\sigma}$ and $g_{r,\sigma}^f$ can be written as convolutions. The smoothed objective $F_{r,\sigma}$ convolves $f$ with the Gaussian-product law, while each component of the mean field $g_{r,\sigma}^f$ convolves $f$ with the same law weighted by the corresponding entry of $E_r/\sigma$. Hence, a Fourier frequency analysis lets us compare the smoothed gradient and the EGGROLL mean field. Letting $\Phi_r(T)=\mathbb E[e^{i\langle T,E_r\rangle_F}]$ denote the characteristic function of the perturbation law, we have the following result.
\begin{theorem}[Exact Gaussian-product law]\label{thm:law}
For $T\in\mathbb R^{m\times n}$,
\begin{equation}\label{eq:char}
\Phi_r(T)=\det\left(I_m+\frac{TT^\top}{r}\right)^{-r/2}, \qquad \nabla_T\Phi_r(T)=-\Phi_r(T)\left(I_m+\frac{TT^\top}{r}\right)^{-1}T.
\end{equation}
\end{theorem}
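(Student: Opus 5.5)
We will compute $\Phi_r$ by conditioning on the column factor $B$ and using independence across the $r$ rank-one terms. Since $\langle T,E_r\rangle_F=\frac{1}{\sqrt r}\sum_{s=1}^r a_s^\top T b_s$ and the pairs $(a_s,b_s)$ are i.i.d., we have $\Phi_r(T)=\varphi(T/\sqrt r)^r$, where $\varphi(S)=\mathbb E[e^{i a^\top S b}]$ for independent $a\sim\mathcal N(0,I_m)$ and $b\sim\mathcal N(0,I_n)$. This reduces the problem to the rank-one characteristic function.

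For $\varphi$, we first condition on $b$. Then $a^\top Sb$ is Gaussian with variance $\|Sb\|^2$, so $\mathbb E[e^{ia^\top Sb}\mid b]=\exp(-\tfrac12 b^\top S^\top S b)$. Next we integrate this against the standard Gaussian density of $b$. The standard Gaussian integral $\mathbb E[\exp(-\tfrac12 b^\top M b)]=\det(I_n+M)^{-1/2}$ holds for symmetric $M\succeq 0$ and can be checked by diagonalizing $M$. It gives $\varphi(S)=\det(I_n+S^\top S)^{-1/2}$. Sylvester's determinant identity $\det(I_n+S^\top S)=\det(I_m+SS^\top)$ converts this to the row-side form. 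Substituting $S=T/\sqrt r$ and raising to the $r$-th power yields $\Phi_r(T)=\det(I_m+TT^\top/r)^{-r/2}$. Every step is an absolutely convergent integral of a bounded integrand, so Fubini justifies the conditioning.

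For the gradient, we write $\Phi_r=\exp(-\tfrac r2\log\det M(T))$ with $M(T)=I_m+TT^\top/r$. Here $M(T)\succ0$, so the log-determinant is smooth. Jacobi's formula gives $d\log\det M=\operatorname{tr}(M^{-1}dM)$, and $dM=(dT\,T^\top+T\,dT^\top)/r$. Using symmetry of $M^{-1}$ and the cyclic property of the trace, $\operatorname{tr}(M^{-1}dM)=\tfrac2r\langle M^{-1}T,dT\rangle_F$. Hence $\nabla_T\log\det M=\tfrac2r M^{-1}T$. The chain rule then gives $\nabla_T\Phi_r=-\tfrac r2\Phi_r\cdot\tfrac2r M^{-1}T=-\Phi_r M^{-1}T$, which is the claimed formula.

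The main place needing care is the Gaussian integral over $b$ together with the Sylvester swap. In particular, one must check that the conditional characteristic function is real and positive so that no branch issues arise for the power $-r/2$. This holds because $\det(I_m+TT^\top/r)\ge1$. The gradient computation is routine matrix calculus once the matrix differential and the Frobenius pairing are handled consistently.
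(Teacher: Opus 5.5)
Your proof is correct and follows the paper's route: factor over the $r$ independent rank-one terms, condition on one Gaussian factor, apply the Gaussian quadratic-form identity, and differentiate $\log\det$ using Jacobi's formula and trace cyclicity. The only difference is that you condition on $b$ and so need Sylvester's identity to reach the $I_m$ form, whereas the paper conditions on $a$, obtaining $\exp(-\|T^\top a\|^2/(2r))$ and hence $\det(I_m+TT^\top/r)^{-1/2}$ directly.
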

The proof is given in \cref{app:field}.
For the Fourier mode $f_T(W)=e^{i\langle T,W\rangle_F}$, this gives
\begin{equation}\label{eq:multiplier}
g_{r,\sigma}^{f_T}(W)=i\Phi_r(\sigma T)\left(I_m+\frac{\sigma^2TT^\top}{r}\right)^{-1}T e^{i\langle T,W\rangle_F}.
\end{equation}
For the same mode, direct substitution into \cref{eq:smoothed} gives
\begin{equation}
F_{r,\sigma}(W)=\Phi_r(\sigma T)e^{i\langle T,W\rangle_F},\qquad \nabla F_{r,\sigma}(W)=i\Phi_r(\sigma T)T e^{i\langle T,W\rangle_F}.
\end{equation}
Comparing this expression with \cref{eq:multiplier}, smoothing contributes the common scalar factor $\Phi_r(\sigma T)$, while the score mismatch tilts the gradient direction $T$ to
\begin{equation}\label{eq:Jmode}
J(T)=\left(I_m+\frac{\sigma^2TT^\top}{r}\right)^{-1}T.
\end{equation}
\Cref{fig:lawgeometry} visualizes this transformation on the diagonal subspace $T=\operatorname{diag}(t_1,t_2)$.
\begin{figure}[t]
\centering
\includegraphics[width=\textwidth]{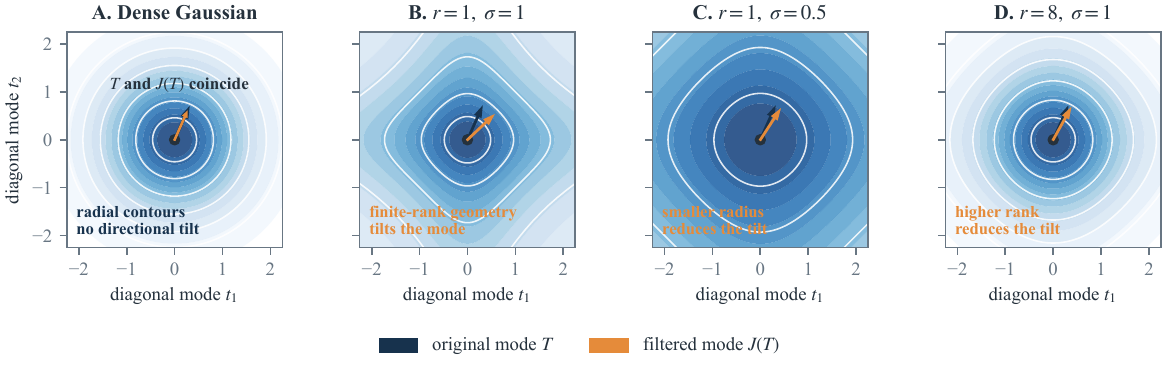}
\caption{Geometry on the diagonal subspace $T=\operatorname{diag}(t_1,t_2)$. Darker contours denote weaker attenuation. Blue arrow gives the gradient-mode direction $T$, and orange arrow gives the population-mode direction $J(T)$. \textbf{A:} A dense Gaussian has radial contours and does not tilt the mode. \textbf{B:} The rank-one product law is nonradial and tilts $J(T)$ away from $T$. \textbf{C:} Decreasing $\sigma$ reduces attenuation and tilt. \textbf{D:} Increasing rank moves the geometry toward the dense-Gaussian limit.}
\label{fig:lawgeometry}
\end{figure}

To translate the frequency-wise multiplier back into weight space, define the differential operator
\begin{equation}\label{eq:Ldef}
(\mathcal L V)_{ij}=-\sum_{k=1}^m\sum_{\ell=1}^n\partial_{i\ell}\partial_{k\ell}V_{kj},
\end{equation}
where $\partial_{ab}$ differentiates with respect to $W_{ab}$. On a mode $Ce^{i\langle T,W\rangle_F}$, $\mathcal L$ multiplies $C$ by $TT^\top$. Combining these frequency-wise transformations recovers the relationship between the complete mean field and smoothed gradient.
\begin{theorem}[The finite-rank resolvent]\label{thm:resolvent}
If $f$ is locally integrable and of at most polynomial growth,
\begin{equation}\label{eq:resolvent}
g_{r,\sigma}^f=\left(I+\frac{\sigma^2}{r}\mathcal L\right)^{-1}\nabla F_{r,\sigma}.
\end{equation}
The equality holds in the sense of tempered distributions. It holds pointwise if $f(W)=\int e^{i\langle T,W\rangle_F}\mu_f(dT)$ with $\int(1+\|T\|_F^3)|\mu_f|(dT)<\infty$.
\end{theorem}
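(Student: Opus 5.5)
The plan is to prove the identity one Fourier mode at a time and then extend by linearity and duality. Concretely, I will show that applying $I+\frac{\sigma^2}{r}\mathcal L$ to $g_{r,\sigma}^f$ gives $\nabla F_{r,\sigma}$. Proving this forward form avoids having to define the inverse operator on general distributions. I then invert it on the Fourier side, where the symbol $I_m+\frac{\sigma^2}{r}TT^\top$ is a positive definite matrix acting on the left and is uniformly invertible: its inverse has operator norm at most $1$.

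\textbf{Step 1 (single modes).} For $f_T(W)=e^{i\langle T,W\rangle_F}$, write $g^{f_T}_{r,\sigma}(W)=\frac1\sigma\mathbb E[E_r e^{i\langle \sigma T,E_r\rangle_F}]e^{i\langle T,W\rangle_F}$. The expectation equals $-\frac{i}{\sigma}\cdot\frac{1}{\sigma}\nabla_T\big[\Phi_r(\sigma T)\big]$ after the chain rule. Inserting the gradient formula of \cref{thm:law} gives \cref{eq:multiplier}. As noted in the text, $\mathcal L$ acts on $Ce^{i\langle T,W\rangle_F}$ by $C\mapsto TT^\top C$: each $\partial_{ab}$ contributes $iT_{ab}$, and the leading minus sign cancels $i^2$. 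Hence
\[
\Bigl(I+\tfrac{\sigma^2}{r}\mathcal L\Bigr)g^{f_T}_{r,\sigma}=i\Phi_r(\sigma T)Te^{i\langle T,W\rangle_F}=\nabla F_{r,\sigma}.
\]

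\textbf{Step 2 (pointwise case).} If $f=\int e^{i\langle T,W\rangle_F}\mu_f(dT)$, I integrate the mode identity against $\mu_f$. Fubini applies to $g$ because $|\Phi_r|\le1$ and $\|J(T)\|\le\|T\|_F$. Differentiating twice under the integral to apply $\mathcal L$ costs at most a factor $\|T\|_F^2$, so the integrand is bounded by $\|T\|_F^3$. This is exactly where the hypothesis $\int(1+\|T\|_F^3)|\mu_f|(dT)<\infty$ is used; the $\nabla F$ side needs only the first moment. Alternatively, one can directly define $g=\int i\Phi_r(\sigma T)J(T)e^{i\langle T,W\rangle_F}\mu_f(dT)$ and $\nabla F$ similarly. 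The two sides then agree frequency by frequency because $(I_m+\frac{\sigma^2}{r}TT^\top)J(T)=T$.

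\textbf{Step 3 (tempered distributions), the main obstacle.} For locally integrable $f$ of polynomial growth, $f$ is tempered, but $\hat f$ is merely a distribution. Moreover, $g$ and $F$ are defined by expectations, not by Fourier integrals. I would pair with a Schwartz test function $\varphi$ and use Fubini, which is justified because $E_r$ has all moments and $f$ grows polynomially. This gives
\[
\langle g_{r,\sigma}^f,\varphi\rangle=\bigl\langle f,\ \tfrac1\sigma\mathbb E[E_r\varphi(\cdot-\sigma E_r)]\bigr\rangle .
\]
The Fourier transform of the right-hand test function is $\hat\varphi$ multiplied by the smooth symbol built from $\nabla\Phi_r$. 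The key check is that $\Phi_r$ and its derivatives are smooth with polynomially bounded derivatives, so multiplication by them preserves Schwartz space. This holds because $\det(I+TT^\top/r)^{-r/2}$ is a smooth function of $T$ whose derivatives are bounded by polynomials times $\Phi_r$. The same holds for the resolvent factor $(I+\sigma^2TT^\top/r)^{-1}$, whose derivatives are bounded.

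With that in hand, the mode computation of Step 1 transfers to the adjoint test functions. It yields $\langle (I+\frac{\sigma^2}{r}\mathcal L)g,\varphi\rangle=\langle\nabla F,\varphi\rangle$ for all $\varphi$. Because the resolvent multiplier is smooth with bounded derivatives, applying its adjoint to $\varphi$ keeps it Schwartz. This lets us invert $I+\frac{\sigma^2}{r}\mathcal L$ on $\mathcal S'$ uniquely and conclude \cref{eq:resolvent}.
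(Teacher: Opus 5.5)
Your proposal is correct and follows essentially the paper's route. It computes the per-frequency multiplier from \cref{thm:law}, inverts the left-multiplication symbol $I_m+\sigma^2TT^\top/r$, and uses Fubini under the Fourier moment condition for the pointwise case; your Step~3 duality argument, which moves $\Phi_r$, $\nabla\Phi_r$ and the resolvent symbol onto $\hat\varphi$ as Schwartz-space multipliers, makes explicit what the paper compresses into ``in the distributional sense.'' One cosmetic slip: in Step~1, $-\tfrac{i}{\sigma^2}\nabla_T[\Phi_r(\sigma T)]$ is the full coefficient $\tfrac1\sigma\mathbb E[E_re^{i\langle\sigma T,E_r\rangle_F}]$, not the expectation itself.
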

The inverse $(I+\sigma^2\mathcal L/r)^{-1}$ is the \emph{resolvent} of the positive operator $\sigma^2\mathcal L/r$ \citep{bauschke2020correction}. Thus, the mean field is obtained by spectrally damping the smoothed gradient rather than merely rescaling it. The distributional statement covers polynomially growing objectives such as quadratics, while the stronger Fourier condition gives an ordinary pointwise equality. \Cref{app:field} gives the proof and regularity details. If $T$ has singular values $s_j$, the resolvent scales its corresponding directions by $(1+\sigma^2s_j^2/r)^{-1}$. Larger singular values are thus attenuated more strongly, and the unequal factors can rotate the mode direction despite identity covariance.

The resolvent identity also quantifies how far the EGGROLL mean field can depart from the gradient of the correspondingly smoothed objective.
\begin{corollary}[Exact score-mismatch control]\label{cor:mismatch}
Let $J_{r,\sigma}=(I+\sigma^2\mathcal L/r)^{-1}$. If $\nabla F_{r,\sigma}$ belongs to the domain of $\mathcal L$, then
\begin{equation}\label{eq:mismatchbound}
g_{r,\sigma}^f-\nabla F_{r,\sigma}=-\frac{\sigma^2}{r}J_{r,\sigma}\mathcal L\nabla F_{r,\sigma},\qquad
\|g_{r,\sigma}^f-\nabla F_{r,\sigma}\|_{L^2}\le\frac{\sigma^2}{r}\|\mathcal L\nabla F_{r,\sigma}\|_{L^2}.
\end{equation}
\end{corollary}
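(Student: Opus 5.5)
The plan is to derive both claims from \cref{thm:resolvent} together with the elementary resolvent identity $I-J_{r,\sigma}=\frac{\sigma^2}{r}J_{r,\sigma}\mathcal L$. Throughout, I would work on the Fourier side. For a matrix-valued function $V$ with Fourier transform $\widehat V(T)$, the operator $\mathcal L$ acts by left multiplication with the positive semidefinite matrix $TT^\top$. The resolvent $J_{r,\sigma}$ then acts by left multiplication with $M(T)=(I_m+\sigma^2TT^\top/r)^{-1}$, as in \cref{eq:Jmode}. The natural domain of $\mathcal L$ in $L^2$ is the set of $V$ with $TT^\top\widehat V(T)\in L^2$.

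\textbf{Identity.} Apply \cref{thm:resolvent}, which gives $g_{r,\sigma}^f=J_{r,\sigma}\nabla F_{r,\sigma}$. Subtracting $\nabla F_{r,\sigma}$ yields $g-\nabla F=(J_{r,\sigma}-I)\nabla F$. The multiplier of $J_{r,\sigma}-I$ is
\[
M(T)-I=M(T)\bigl(I-(I+\sigma^2TT^\top/r)\bigr)=-\frac{\sigma^2}{r}M(T)TT^\top .
\]
This shows $J_{r,\sigma}-I=-\frac{\sigma^2}{r}J_{r,\sigma}\mathcal L$ on the domain of $\mathcal L$. Applying it to $\nabla F_{r,\sigma}$ gives the first equation in \cref{eq:mismatchbound}. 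The domain hypothesis is exactly what makes $\mathcal L\nabla F_{r,\sigma}$ a well-defined $L^2$ function, so the distributional identity of \cref{thm:resolvent} can be read in $L^2$.

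\textbf{Bound.} Take norms and use Plancherel applied entrywise with the Frobenius norm. It then suffices that $J_{r,\sigma}$ is a contraction on $L^2$, i.e. $\|M(T)X\|_F\le\|X\|_F$ for every $T$ and matrix $X$. This holds because $M(T)$ is symmetric with eigenvalues $(1+\sigma^2s_j^2/r)^{-1}\in(0,1]$, so its operator norm is at most $1$. Integrating this pointwise-in-$T$ inequality against $|\widehat{\mathcal L\nabla F}(T)|^2$ gives $\|J_{r,\sigma}\mathcal L\nabla F\|_{L^2}\le\|\mathcal L\nabla F\|_{L^2}$, which is the stated estimate. Equivalently, one can note that $\mathcal L$ is a nonnegative self-adjoint operator, so its resolvent has norm at most one by the spectral theorem.

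\textbf{Main obstacle.} The delicate point is the functional-analytic bookkeeping. \Cref{thm:resolvent} is stated for tempered distributions, while the corollary lives in $L^2$, so I must check that the two notions of $J_{r,\sigma}$ agree there. They do, because $M(T)$ is a bounded smooth multiplier with polynomially bounded derivatives, so it maps $\mathcal S'$ to $\mathcal S'$ and restricts to a bounded operator on $L^2$. I would also check that $\mathcal L$ is self-adjoint on the stated domain. This follows since its symbol $TT^\top$, acting on $\mathbb R^{m\times n}$ by left multiplication, is real symmetric, being of the form $T T^\top\otimes I_n$ on vectorizations.
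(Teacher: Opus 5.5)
Your proposal is correct and matches the paper's proof: the paper likewise writes $g_{r,\sigma}^f=J_{r,\sigma}\nabla F_{r,\sigma}$, uses the resolvent identity $J_{r,\sigma}-I=-\tfrac{\sigma^2}{r}J_{r,\sigma}\mathcal L$, and bounds $\|J_{r,\sigma}\mathcal L\nabla F_{r,\sigma}\|_{L^2}\le\|\mathcal L\nabla F_{r,\sigma}\|_{L^2}$ because the Fourier multiplier has eigenvalues in $[0,1]$, combined with Parseval. The only differences are that the paper gets the identity abstractly from $J_{r,\sigma}(h+\tfrac{\sigma^2}{r}\mathcal L h)=h$ rather than from your equivalent symbol computation, and it leaves implicit the $\mathcal S'$-versus-$L^2$ consistency that you check.
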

Thus the departure from the smoothed gradient is controlled by its variation in the matrix-coupled directions measured by $\mathcal L$, and vanishes linearly with $\sigma^2/r$. The proof appears in \cref{app:field}.

\section{When does finite rank change the dynamics?}\label{sec:conservativity}\label{sec:benign}
The resolvent characterizes the EGGROLL mean field, but does not guarantee that it optimizes any scalar objective. We therefore ask which perturbation laws always produce a conservative field, equivalently, one with a symmetric Jacobian \citep{spivak1965calculus}. Real trigonometric polynomials suffice as a test class because their finite Fourier sums can isolate arbitrary frequencies. Failure on this class rules out a universal guarantee for any broader class containing it.
\begin{theorem}[Universal conservativity]\label{thm:conservative}
Let $d\ge2$ and let $Z\in\mathbb R^d$ be centered with $\mathbb E[\|Z\|]<\infty$. The field
\begin{equation}
x\mapsto \frac1\sigma\mathbb E[Zf(x+\sigma Z)]
\end{equation}
is conservative for every real trigonometric polynomial $f$ and every $\sigma>0$ if and only if the law of $Z$ is spherically symmetric.
\end{theorem}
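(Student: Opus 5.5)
We work mode by mode in Fourier space, as in \cref{thm:law}. Let $\phi(t)=\mathbb E[e^{i\langle t,Z\rangle}]$ be the characteristic function of $Z$. Since $\mathbb E\|Z\|<\infty$, $\phi$ is $C^1$ and $\nabla\phi(t)=i\,\mathbb E[Ze^{i\langle t,Z\rangle}]$. For the complex mode $e_t(x)=e^{i\langle t,x\rangle}$ the field is
\[
G_t(x)=\frac1\sigma\mathbb E[Ze^{i\langle t,x+\sigma Z\rangle}]=\frac{-i}{\sigma}\nabla\phi(\sigma t)\,e^{i\langle t,x\rangle}.
\]
A real trigonometric polynomial is a finite sum $\sum_k(c_ke_{t_k}+\bar c_ke_{-t_k})$, and the field is linear in $f$. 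Its Jacobian is therefore a finite sum of terms $-\tfrac{1}{\sigma}c_k\nabla\phi(\sigma t_k)\,t_k^\top e^{i\langle t_k,x\rangle}$ plus conjugates. Taking $f=\cos\langle t,x\rangle$ or $\sin\langle t,x\rangle$ for a single frequency isolates one pair. Since $e^{\pm i\langle t,x\rangle}$ are linearly independent functions of $x$ for $t\neq0$, symmetry of the Jacobian for all such $f$ is equivalent to symmetry of the rank-one matrix $\nabla\phi(s)\,s^\top$ for every $s=\sigma t\in\mathbb R^d$. Here $\nabla\phi(-s)=\overline{\nabla\phi(s)}$, and we must apply this to both the real and imaginary parts. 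Conversely, a finite sum of symmetric pieces is symmetric, and a field on $\mathbb R^d$ with symmetric $C^1$ Jacobian is a gradient. So conservativity for all trigonometric polynomials and all $\sigma$ is equivalent to
\[
\nabla\phi(s)\parallel s\quad\text{for all } s\neq0,
\]
applied separately to $\operatorname{Re}\phi$ and $\operatorname{Im}\phi$. The reason is that $vs^\top$ is symmetric with $s\ne0$ iff $v=\lambda s$ for some real $\lambda$.

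Next we turn the condition $\nabla\psi(s)=\lambda(s)s$ (for $\psi=\operatorname{Re}\phi$ or $\operatorname{Im}\phi$) into radiality. Take any $C^1$ curve $\gamma$ on a sphere $\{\|s\|=\rho\}$. Then $\tfrac{d}{du}\psi(\gamma(u))=\lambda\langle\gamma,\gamma'\rangle=0$. Because $d\ge2$, the sphere is connected, so $\psi$ is constant on every sphere. Hence $\phi(s)=h(\|s\|)$ is radial. By uniqueness of characteristic functions, $\phi(Qs)=\phi(s)$ for every orthogonal $Q$ implies $QZ\overset{d}{=}Z$, which is spherical symmetry. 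For the converse, assume $Z$ is spherically symmetric. Then $\phi$ is radial and real, since $Z\overset{d}{=}-Z$, and $C^1$. Away from the origin, $\nabla\phi(s)=h'(\|s\|)\,s/\|s\|\parallel s$, and at $s=0$ the condition is trivial. The Jacobian of each mode is then symmetric, so every trigonometric-polynomial field is a gradient. Indeed we can write the potential explicitly: for $f=\cos\langle t,x\rangle$ it is $-\phi(\sigma t)\cos\langle t,x\rangle/(\sigma^2\|t\|^2)\cdot(\text{const})$, and the general case follows by linearity. The centering hypothesis handles the zero frequency, where the field is $\mathbb E[Z]/\sigma\cdot c_0=0$, which is trivially conservative.

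The main obstacle is the bookkeeping in the first step, namely the reduction from conservativity of a real field to the per-frequency condition. The real and imaginary parts of $\nabla\phi$ must both be shown to be parallel to $s$. The plan is to treat $f=\cos\langle t,x\rangle$ and $f=\sin\langle t,x\rangle$ separately and evaluate the Jacobian at two points $x$ with $\langle t,x\rangle\in\{0,\pi/2\}$. This extracts the matrices $\operatorname{Re}\nabla\phi(\sigma t)\,t^\top$ and $\operatorname{Im}\nabla\phi(\sigma t)\,t^\top$ individually. Varying $\sigma>0$ and $t$ then covers all $s\ne0$.
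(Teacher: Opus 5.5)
Your proposal is correct and follows the paper's proof essentially step for step. Fourier modes reduce conservativity to $\nabla\phi(s)\parallel s$; you split into real and imaginary parts where the paper allows a complex scalar $\lambda(t)$, which is equivalent. Vanishing tangential derivatives on connected spheres then give radiality, and uniqueness of characteristic functions gives spherical symmetry.

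A few slips do not affect the argument:
\begin{enumerate}
\item The mode Jacobian carries $+\tfrac1\sigma$, not $-\tfrac1\sigma$.
\item The conjugation identity should read $\nabla\phi(-s)=-\overline{\nabla\phi(s)}$.
\item Your aside on an explicit potential should give $-\tfrac{h'(\sigma\|t\|)}{\sigma\|t\|}\cos\langle t,x\rangle$, not a multiple of $\phi(\sigma t)$. This aside is not needed anyway, since the Poincar\'e lemma already supplies the potential.
\end{enumerate}
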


Spherical symmetry makes the characteristic function $\Phi(t)$ radial, so $\nabla\Phi(t)$ is parallel to the frequency $t$. Every Fourier mode of the mean field then remains aligned with the corresponding gradient mode. Conversely, if $\nabla\Phi(t)$ is not parallel to $t$ at some frequency, a real trigonometric objective can combine such modes to produce a nonsymmetric Jacobian. The Gaussian-product characteristic function depends on the individual singular values of its matrix argument rather than only on its Frobenius norm. Its law is therefore not spherically symmetric at finite rank, and the EGGROLL mean field need not be conservative. The proof and an explicit cosine example appear in \cref{app:conservativity}.

Nonconservativity alone does not necessarily show that the altered field harms optimization. The following construction shows that finite rank can change even the local stability of a strict optimum.

\begin{proposition}[An optimum can become repelling]\label{prop:unstable}
Set $r=\sigma=1$, $\varepsilon=0.01$, and
\begin{equation}
T_1=\begin{pmatrix}-2&-2\\-2&-1\end{pmatrix},\qquad T_2=\begin{pmatrix}-2&2\\2&-1\end{pmatrix}.
\end{equation}
Define
\begin{equation}\label{eq:unstablef}
f(W)=\cos\langle T_1,W\rangle_F+\frac14\cos\langle T_2,W\rangle_F+\varepsilon\sum_{i,j}\cos W_{ij}.
\end{equation}
Then zero is a strict local maximum of both $f$ and $F_{1,1}$, but the Jacobian of $g_{1,1}^f$ at zero has a positive eigenvalue $\lambda_+\approx0.02925$.
\end{proposition}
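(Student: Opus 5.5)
The plan is to reduce everything to Fourier modes via \cref{thm:law} and \cref{eq:multiplier}, so that all three objects ($f$, $F_{1,1}$, and $g_{1,1}^f$) have explicit second-order behaviour at $W=0$. Write $\cos\langle T,W\rangle_F=\operatorname{Re}e^{i\langle T,W\rangle_F}$. By linearity and \cref{eq:multiplier}, each cosine term contributes
\[
g^{\cos}_{1,1}(W)=-\Phi_1(T)\,J(T)\sin\langle T,W\rangle_F ,
\]
with $J(T)=(I+TT^\top)^{-1}T$, and $F_{1,1}$ picks up $\Phi_1(T)\cos\langle T,W\rangle_F$. Since every term is even in $W$, the point zero is critical for $f$, for $F_{1,1}$, and for $g$, because $\sin 0=0$. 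Differentiating at zero and writing $t=\operatorname{vec}(T)$ and $j=\operatorname{vec}(J(T))$ gives the following second-order objects:
\[
\nabla^2 f(0)=-\sum c_T\, tt^\top,\qquad
\nabla^2F_{1,1}(0)=-\sum c_T\,\Phi_1(T)\,tt^\top,\qquad
Dg(0)=-\sum c_T\,\Phi_1(T)\,j\,t^\top,
\]
where $c_T\in\{1,\tfrac14,\varepsilon\}$ are the coefficients in \cref{eq:unstablef}.

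Next I evaluate the pieces. For the coordinate modes $T=e_ie_j^\top$ we have $TT^\top=e_ie_i^\top$. This gives $\Phi_1=\det(I+e_ie_i^\top)^{-1/2}=2^{-1/2}$ and $J(T)=T/2$. The $\varepsilon$-terms therefore contribute exactly $-\varepsilon I$ to $\nabla^2 f(0)$, $-2^{-1/2}\varepsilon I$ to $\nabla^2F_{1,1}(0)$, and $-2^{-3/2}\varepsilon I$ to $Dg(0)$. Strictness of the maxima is then immediate. Both Hessians equal a negative semidefinite rank-two part plus a negative multiple of the identity, hence they are negative definite. For the two main modes,
\[
T_1T_1^\top=\begin{pmatrix}8&6\\6&5\end{pmatrix},\qquad
T_2T_2^\top=\begin{pmatrix}8&-6\\-6&5\end{pmatrix}.
\]
In both cases $\det(I+TT^\top)=18$, so $\Phi_1(T_1)=\Phi_1(T_2)=18^{-1/2}$. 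I then compute $J(T_1)$ and $J(T_2)$ explicitly by inverting these $2\times2$ matrices.

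The remaining claim concerns $Dg(0)=-M-2^{-3/2}\varepsilon I$, where $M=\Phi_1(T_1)j_1t_1^\top+\tfrac14\Phi_1(T_2)j_2t_2^\top$ has rank at most two. The nonzero eigenvalues of $M=[j_1\ j_2]\,\mathrm{diag}(\Phi_1(T_1),\tfrac14\Phi_1(T_2))\,[t_1\ t_2]^\top$ coincide with those of the $2\times2$ matrix
\[
K=\begin{pmatrix}\Phi_1(T_1)\,\langle t_1,j_1\rangle & \tfrac14\Phi_1(T_2)\,\langle t_1,j_2\rangle\\ \Phi_1(T_1)\,\langle t_2,j_1\rangle & \tfrac14\Phi_1(T_2)\,\langle t_2,j_2\rangle\end{pmatrix}.
\]
The diagonal entries are positive because $(I+TT^\top)^{-1}$ is positive definite. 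A negative eigenvalue of $K$ therefore requires the off-diagonal cross terms $\langle T_1,J(T_2)\rangle_F$ and $\langle T_2,J(T_1)\rangle_F$ to have opposite signs and a product large enough that $\det K<0$. This is precisely the nonsymmetry produced by the resolvent tilt. It is also why $T_1$ and $T_2$ are chosen as sign-flipped versions of each other.

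The main obstacle is this exact arithmetic. I need to compute the four Frobenius inner products and verify that $K$ has an eigenvalue $\mu_-<-2^{-3/2}\varepsilon\approx-0.00354$. Then $\lambda_+=-\mu_--2^{-3/2}\varepsilon>0$, and this should evaluate to about $0.02925$. The other $d-2=2$ eigenvalues of $Dg(0)$ equal $-2^{-3/2}\varepsilon<0$, so $\lambda_+$ is the unique positive eigenvalue.
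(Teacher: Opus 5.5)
Your plan is essentially the paper's proof. Both compute the per-mode Jacobians $-\Phi_1(T)\operatorname{vec}(J(T))\operatorname{vec}(T)^\top$ and note that the coordinate modes contribute the scalar shift $-\varepsilon I_4/(2\sqrt2)$. Your $2\times2$ matrix $K$ is, by Sylvester's determinant identity, exactly the paper's factorization $\det(\lambda I_4-A_0)=\lambda^2\bigl(\lambda^2+\tfrac{35\sqrt2}{144}\lambda-\tfrac1{81}\bigr)$.

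\textbf{The sign heuristic is backwards.} Since $K_{11},K_{22}>0$, opposite-sign off-diagonal entries would give $\det K>K_{11}K_{22}>0$. Together with $\operatorname{tr}K>0$, this would rule out any negative eigenvalue. What you actually need is same-sign cross terms with $K_{12}K_{21}>K_{11}K_{22}$.

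The sign-flip construction delivers exactly this. With $D=\operatorname{diag}(1,-1)$ one has $T_2=DT_1D$ and $J(T_2)=DJ(T_1)D$. This forces $\langle T_1,J(T_1)\rangle_F=\langle T_2,J(T_2)\rangle_F=7/6$ and $\langle T_1,J(T_2)\rangle_F=\langle T_2,J(T_1)\rangle_F=-3/2$, so the two cross terms are equal, not opposite. With $\phi=1/(3\sqrt2)$ this gives
\[
\det K=\frac{\phi^2}{4}\Bigl(\frac{49}{36}-\frac94\Bigr)=-\frac1{81},\qquad \operatorname{tr}K=\frac{35}{24}\phi=\frac{35\sqrt2}{144},
\]
so $\mu_-=(35-\sqrt{1737})/(144\sqrt2)\approx-0.03279$ and $\lambda_+=-\mu_--\varepsilon/(2\sqrt2)\approx0.02925$.

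\textbf{The correct mechanism.} The resolvent damps each mode's self-term $\langle T_q,J(T_q)\rangle_F$ from $13$ to $7/6$. It damps the cross term only from $-3$ to $-3/2$. This breaks the inequality $K_{11}K_{22}\ge K_{12}K_{21}$, which Cauchy--Schwarz would guarantee if each $J(T_q)$ were a positive multiple of $T_q$.
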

The construction and its trajectories are detailed in \cref{app:conservativity,fig:witnessdynamics}.
\begin{figure}[t]
\centering
\includegraphics[width=\textwidth]{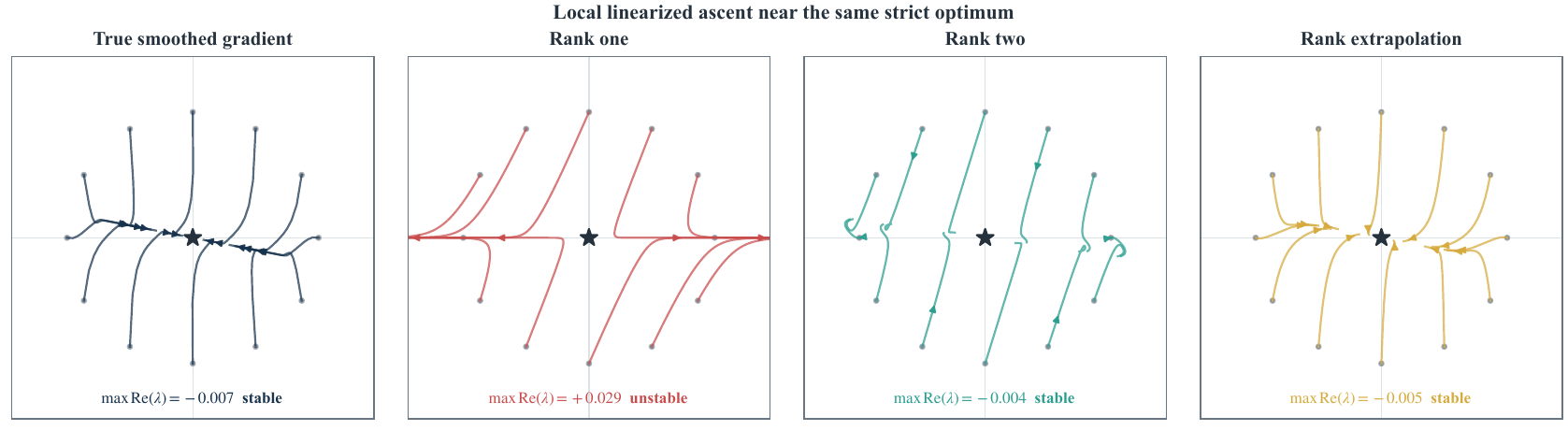}
\caption{Local linearized dynamics for the cosine objective in \cref{eq:unstablef}. Curves solve $\dot\delta=A\delta$, where $A$ is each field's Jacobian at the origin, and are shown in the same plane. The origin is a strict local maximum of both $f$ and $F_{1,1}$. Rank-one EGGROLL repels along one direction, while rank two is locally attracting. The rank-extrapolation comparison is developed in \cref{app:secondaryremedies}.}
\label{fig:witnessdynamics}
\end{figure}

The counterexample establishes that finite rank can change the dynamics, but not which properties of the objective permit this change. Quadratic objectives provide a dividing line. For any quadratic $f$ and any realization of $E_r$, the antithetic difference equals the directional derivative:
\begin{equation}
\frac{f(W+\sigma E_r)-f(W-\sigma E_r)}{2\sigma}=\langle\nabla f(W),E_r\rangle_F.
\end{equation}
Multiplying by $E_r$ and taking expectations recovers $\nabla f(W)$ by identity covariance.
\begin{proposition}[Exactness on quadratic objectives]\label{prop:quadraticexact}
For $f(W)=\tfrac12\langle W,\mathcal HW\rangle_F+\langle C,W\rangle_F+c$ with self-adjoint $\mathcal H$, $g_{r,\sigma}^f(W)=\nabla f(W)$ for every $r\ge1$ and $\sigma>0$.
\end{proposition}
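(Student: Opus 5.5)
The plan is to show that the mean field depends only on the odd part of $f(W+\sigma E_r)$ in $E_r$, and that for a quadratic this odd part is exactly linear in $E_r$. First, the law of $E_r$ is symmetric: replacing $a_1$ by $-a_1$ maps $E_r$ to $-E_r$ and preserves the law. Hence
\begin{equation*}
g_{r,\sigma}^f(W)=\frac1\sigma\mathbb E[E_rf(W+\sigma E_r)]=\frac1{2\sigma}\mathbb E\bigl[E_r\bigl(f(W+\sigma E_r)-f(W-\sigma E_r)\bigr)\bigr].
\end{equation*}
Integrability is automatic, because $f$ has quadratic growth and Gaussian products have moments of every order. Concretely, $\|E_r\|_F\le r^{-1/2}\sum_s\|a_s\|\|b_s\|$, and each factor has all moments finite.

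Next, expand the quadratic and use the self-adjointness of $\mathcal H$:
\begin{equation*}
f(W\pm\sigma E_r)=f(W)\pm\sigma\langle \mathcal HW+C,E_r\rangle_F+\tfrac{\sigma^2}{2}\langle E_r,\mathcal HE_r\rangle_F.
\end{equation*}
The constant and second-order terms cancel in the antithetic difference. Since $\nabla f(W)=\mathcal HW+C$, the difference quotient equals $\langle\nabla f(W),E_r\rangle_F$ exactly, for every realization.

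It then remains to verify the identity covariance
\begin{equation*}
\mathbb E[E_r\langle G,E_r\rangle_F]=G \quad\text{for all } G,
\end{equation*}
that is, $\mathbb E[(E_r)_{ij}(E_r)_{kl}]=\delta_{ik}\delta_{jl}$. This follows by expanding
\begin{equation*}
(E_r)_{ij}(E_r)_{kl}=\frac1r\sum_{s,t}a_{s,i}b_{s,j}a_{t,k}b_{t,l}.
\end{equation*}
By independence and zero mean, only the terms with $s=t$ survive, and each gives $\delta_{ik}\delta_{jl}$. Summing $r$ such terms and dividing by $r$ yields the claim, so $g_{r,\sigma}^f(W)=\nabla f(W)$.

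There is no real obstacle. The only care needed is to justify the symmetrization and the linearity of expectation through moment bounds, rather than invoking \cref{thm:resolvent}. As a consistency check via the resolvent route, a quadratic has $\nabla F_{r,\sigma}$ affine, so its second derivatives vanish and $\mathcal L\nabla F_{r,\sigma}=0$. Then \cref{cor:mismatch} gives $g_{r,\sigma}^f=\nabla F_{r,\sigma}$, and $\nabla F_{r,\sigma}=\nabla f+\tfrac{\sigma^2}{2}\nabla\mathbb E\langle E_r,\mathcal HE_r\rangle_F=\nabla f$, because that last term is constant in $W$. This agrees with the direct computation.
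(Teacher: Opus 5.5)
Your proposal is correct and is the paper's own argument: symmetrize to the antithetic form, note that for a quadratic the difference quotient equals $\langle\nabla f(W),E_r\rangle_F$ exactly, and finish with identity covariance. One slip in the symmetrization step: for $r\ge2$, negating only $a_1$ does not map $E_r$ to $-E_r$. Negate all of $a_1,\ldots,a_r$ (or all $b_s$) simultaneously instead; this preserves the law by independence and the symmetry of each Gaussian factor.
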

Quadratic exactness clarifies the source of the instability in \cref{prop:unstable}. The objective and its quadratic Taylor approximation have the same value, gradient, and Hessian at the origin, but EGGROLL evaluates the objective at finite perturbations around that point. At $\sigma=1$, these evaluations reach regions where the cosine terms are poorly quadratically approximated. The stability reversal therefore comes from higher-order variation over the perturbation scale. For a general smooth objective, the following result identifies the first such departure from quadratic behavior.
\begin{proposition}[Small-radius expansion]\label{prop:smallradius}
Fix $m,n,r$. If $f\in C^5$ and $M_5=\sup_W\|D^5f(W)\|_{\mathrm{op}}<\infty$, then, uniformly in $W$ as $\sigma\to0$,
\begin{equation}\label{eq:smallradius}
g_{r,\sigma}^f=\nabla f+\frac{\sigma^2}{2}\nabla\Delta f+\frac{\sigma^2}{r}\mathcal Df+R_{r,\sigma},
\quad \sup_W\|R_{r,\sigma}(W)\|_F\le\frac{\sigma^4M_5}{120}\mathbb E[\|E_r\|_F^6],
\end{equation}
where $\Delta f=\sum_{k,\ell}\partial_{k\ell}^2f$ and $(\mathcal Df)_{ij}=\sum_{k,\ell}\partial_{i\ell}\partial_{k\ell}\partial_{kj}f$.
\end{proposition}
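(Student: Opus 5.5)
The plan is a direct Taylor expansion of $f(W+\sigma E_r)$ inside the definition \cref{eq:field}, followed by exact computation of the moments of $E_r$ up to order four. I expand $f$ around $W$ to fourth order with an integral (or Lagrange) remainder of order five:
\[
f(W+\sigma E)=\sum_{k=0}^{4}\frac{\sigma^k}{k!}D^kf(W)[E^{\otimes k}]+\rho_5,\qquad |\rho_5|\le\frac{\sigma^5M_5}{120}\|E\|_F^5 .
\]
I then multiply by $E_r/\sigma$ and take expectations term by term.

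The law of $E_r$ is invariant under $E_r\mapsto-E_r$ (flip $a_s\mapsto-a_s$). Hence all odd moments vanish, which kills the $k=0,2,4$ terms. The $k=1$ term gives $\mathbb E[E_r\langle\nabla f,E_r\rangle_F]=\nabla f$ by identity covariance. The remainder contributes at most $\frac1\sigma\mathbb E[\|E_r\|_F\,|\rho_5|]\le\frac{\sigma^4M_5}{120}\mathbb E\|E_r\|_F^6$. This bound is uniform in $W$ since $M_5$ is a global supremum, and $\mathbb E\|E_r\|_F^6<\infty$ because $E_r$ is a polynomial in Gaussians. Everything therefore reduces to the cubic term
\[
\frac{\sigma^2}{6}\,\mathbb E\bigl[E_r\,D^3f(W)[E_r,E_r,E_r]\bigr],
\]
which requires the fourth moments $\mathbb E[(E_r)_{ij}(E_r)_{kl}(E_r)_{pq}(E_r)_{uv}]$.

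The main step is computing these fourth moments. I write the moment as
\[
r^{-2}\sum_{s,t,u,v}\mathbb E[a_{si}a_{tk}a_{up}a_{vv'}]\,\mathbb E[b_{sj}b_{tl}b_{uq}b_{vv''}],
\]
where $(p,q)$ and $(v',v'')$ denote the third and fourth index pairs. Both factors factor by Isserlis' theorem, and they are nonzero only when the rank indices $s,t,u,v$ pair up. Two cases arise.

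\emph{Matched pairings.} The pairing of the row indices equals the pairing of the column indices, and the rank indices follow that pairing. Counting $r(r-1)$ off-diagonal choices plus $r$ diagonal ones gives exactly $r^2$ per pairing. After dividing by $r^2$, this reproduces the dense Gaussian Isserlis formula with coefficient one.

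\emph{Mismatched pairings.} The row pairing differs from the column pairing. This is possible only when $s=t=u=v$, so there are $r$ terms, and each of the $3\cdot3-3=6$ mismatched combinations appears with weight $1/r$.

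Contracting with the symmetric tensor $D^3f$ completes the computation. The dense part gives $3\nabla\Delta f$, exactly as in the Gaussian case. Each mismatched combination, after relabeling using the symmetry of $D^3f$, gives the same contraction $\sum_{k,\ell}\partial_{i\ell}\partial_{k\ell}\partial_{kj}f=(\mathcal Df)_{ij}$, for a total of $\frac6r\mathcal Df$. Multiplying by $\sigma^2/6$ yields $\frac{\sigma^2}{2}\nabla\Delta f+\frac{\sigma^2}{r}\mathcal Df$, as claimed.

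The obstacle I expect is the bookkeeping of the six mismatched pairings. I must verify that every one of them, and not merely some, collapses to $\mathcal D f$, rather than to another contraction such as $\nabla\Delta f$ or the transpose-type term $\sum\partial_{kj}\partial_{k\ell}\partial_{i\ell}f$. I would enumerate them explicitly: fix the output index pair $(i,j)$ on the first factor, then list which of the remaining three factors its row index and its column index pair with. Equality of the resulting contractions then follows from the symmetry of third derivatives. As a sanity check, the final formula must vanish when $f$ is quadratic, consistent with \cref{prop:quadraticexact}.
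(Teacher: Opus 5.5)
Your proposal is correct and follows essentially the same route as the paper: a Taylor expansion with a fifth-order remainder, then Isserlis applied separately to the row and column factors. Matched pairings get coefficient $r/r^2+r(r-1)/r^2=1$, and the six mismatched ones get $1/r$; the only cosmetic difference is that you expand the one-sided field and invoke the sign symmetry of $E_r$, whereas the paper expands the equivalent antithetic difference. Your worry about the mismatched contractions resolves as you expect: in each of the six, the row and column partners of the output entry are distinct factors, which forces the contraction $\sum_{k,\ell}\partial_{i\ell}\partial_{kj}\partial_{k\ell}f=(\mathcal Df)_{ij}$, and the ``transpose-type'' term you mention is the same expression by commutativity of partial derivatives.
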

The proof is given in \cref{app:local,app:fourthmoments}.
The gradient-Laplacian term is the ordinary cost of dense Gaussian smoothing. The additional third-derivative term is specific to finite rank and need not be a gradient field. Both vanish on quadratics. The expansion concerns the complete displacement $\sigma E_r$, whose root-mean-square Frobenius norm is $\sigma\sqrt{mn}$, so its accuracy depends on dimension and derivative bounds as well as radius. Proofs and nonasymptotic bounds under standard smoothness appear in \cref{app:local}. Conventional convergence consequences are collected in \cref{app:convergence}.

\section{The sampling cost of finite rank}\label{sec:variance}
The preceding sections determine the mean update averaged over the perturbation law. An actual EGGROLL step replaces this expectation with an average over finitely many directions. We now ask how accurately that population estimates its mean. To isolate this sampling variability, we condition on any randomness in fitness evaluation and use the local affine model $f(W+\sigma E)=f(W)+\sigma\langle G,E\rangle_F$, where $G=\nabla f(W)$. One antithetic direction then gives $X=E_r\langle E_r,G\rangle_F$, with $\mathbb E[X]=G$. Because $X$ is unbiased, its total variance, the trace of its covariance, equals $\mathbb E[\|X-G\|_F^2]$.
\begin{theorem}[Finite-population variance]\label{thm:variance}
For independent $X_1,\ldots,X_N$ and $\widehat G_N=N^{-1}\sum_sX_s$,
\begin{equation}\label{eq:populationmse}
\mathbb E[\|\widehat G_N-G\|_F^2]=\frac{\kappa_r}{N}\|G\|_F^2,
\qquad \kappa_r:=mn+1+\frac{2(m+n+1)}r.
\end{equation}
\end{theorem}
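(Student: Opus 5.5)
Since the $X_s$ are independent and each has mean $G$, the cross terms in $\|\widehat G_N-G\|_F^2$ vanish in expectation, so $\mathbb E[\|\widehat G_N-G\|_F^2]=N^{-1}\mathbb E[\|X-G\|_F^2]=N^{-1}(\mathbb E[\|X\|_F^2]-\|G\|_F^2)$. The proof therefore reduces to the single-direction second moment
\begin{equation*}
\mathbb E[\|X\|_F^2]=\mathbb E\big[\|E_r\|_F^2\langle E_r,G\rangle_F^2\big],
\end{equation*}
and the claim is equivalent to showing that this equals $(mn+2+2(m+n+1)/r)\|G\|_F^2$. This is a fourth-moment computation for the Gaussian-product law.

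Write $E_r=r^{-1/2}\sum_s a_sb_s^\top$. Then $\|E_r\|_F^2=r^{-1}\sum_{s,t}(a_s^\top a_t)(b_s^\top b_t)$ and $\langle E_r,G\rangle_F=r^{-1/2}\sum_u a_u^\top Gb_u$, so the target is
\begin{equation*}
r^{-2}\sum_{s,t,u,v}\mathbb E\big[(a_s^\top a_t)(b_s^\top b_t)(a_u^\top Gb_u)(a_v^\top Gb_v)\big].
\end{equation*}
Because the pairs $(a_s,b_s)$ are independent and centered, and $a$ is independent of $b$, a term vanishes unless every index appears an even number of times among the $a$-factors. The same parity condition holds for the $b$-factors, and both are governed by the same index multiset. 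The surviving patterns are therefore:
\begin{enumerate}
\item all four indices equal ($r$ terms);
\item $s=t\neq u=v$ ($r(r-1)$ terms);
\item $\{s,t\}=\{u,v\}$ with $s\ne t$ ($2r(r-1)$ terms).
\end{enumerate}

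Each pattern is evaluated with Gaussian moment identities, conditioning first on $b$ and then on $a$.

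For the pattern $s=t\neq u=v$, the term factors as $\mathbb E[\|a\|^2\|b\|^2]\,\mathbb E[(a^\top Gb)^2]=mn\|G\|_F^2$.

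For the cross pattern, take $s=u$, $t=v$. The term is
\begin{equation*}
\mathbb E\big[(a_s^\top a_t)(b_s^\top b_t)(a_s^\top Gb_s)(a_t^\top Gb_t)\big].
\end{equation*}
Taking expectations over $a_s$ first, using $\mathbb E[a_sa_s^\top]=I$, and proceeding similarly over the other vectors, this reduces to $\mathbb E[(a_t^\top Gb_t)^2]$-type traces and equals $\|G\|_F^2$.

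For the diagonal pattern, use $\mathbb E[\|a\|^2aa^\top]=(m+2)I$ and the analogous identity for $b$, giving $\mathbb E[\|a\|^2\|b\|^2(a^\top Gb)^2]=(m+2)(n+2)\|G\|_F^2$.

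Summing,
\begin{equation*}
r^{-2}\Big[r(m+2)(n+2)+r(r-1)mn+2r(r-1)\Big]\|G\|_F^2=\Big[mn+2+\tfrac{2(m+n+1)}{r}\Big]\|G\|_F^2,
\end{equation*}
since $(m+2)(n+2)-mn-2=2(m+n+1)$. Subtracting $\|G\|_F^2$ yields $\kappa_r$.

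The main obstacle is the careful enumeration of index patterns, in particular getting the multiplicity $2$ in the cross terms. These are the terms that make $\kappa_\infty=mn+1$ agree with the dense-Gaussian value $\mathbb E[\|E\|^2\langle E,G\rangle^2]=(mn+2)\|G\|^2$, and I would use that agreement as a sanity check. I would also verify the formula at $r=1$, $m=n=1$ against the product-normal moment $\mathbb E[a^4b^4]=9$: the formula gives $9\|G\|_F^2$ for $\mathbb E[\|X\|_F^2]$, hence $\kappa_1=8$.
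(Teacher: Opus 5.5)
Your proposal is correct and follows essentially the same route as the paper: expand $E_r$ over its rank-one summands, keep only the patterns where all four factors come from one summand or two come from each of two summands, and evaluate the Gaussian fourth moments of the independent factors. This is the same $r/r^2+r(r-1)/r^2=1$ versus $1/r$ bookkeeping as in the paper's fourth-moment appendix, and your counts $r(m+2)(n+2)+r(r-1)mn+2r(r-1)$ are right; the only difference is that you contract to $\mathbb E[\|E_r\|_F^2\langle E_r,G\rangle_F^2]$ before computing, whereas the paper first derives the full entrywise covariance of $X$ and then takes its trace.
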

The full covariance and proof appear in \cref{app:variance}. Dense Gaussian directions share the factor $mn+1$, while finite rank adds $2(m+n+1)/r$. For a square matrix of width $w$, the relative variance increase over dense Gaussian ES is therefore $2(2w+1)/[r(w^2+1)]=O(1/(rw))$. To give an LLM-scale example, at $w=4096$, as in Qwen3-8B, the rank-one increase is only $0.098\%$. Rank one can therefore estimate its mean field almost as efficiently as dense Gaussian ES estimates its own.

A complementary question is how the mean field itself changes with rank. For $\|u\|_{\infty,p}:=\sup_W\|u(W)\|_F/(1+\|W\|_F)^p$, the kernel expansion in \cref{sec:expansion} gives a uniform statement.
\begin{theorem}[Fixed-radius rank expansion]\label{thm:rankexpansion}
If $|f(W)|\le C_f(1+\|W\|_F^p)$, then for every fixed $\sigma>0$ there exist $R_{\sigma,p}<\infty$ and $r_0$ such that, for $r\ge r_0$,
\begin{equation}\label{eq:rankexpansion}
\left\|g_{r,\sigma}^f-g_{\infty,\sigma}^f-\frac1r\mathcal B_\sigma f\right\|_{\infty,p}\le\frac{R_{\sigma,p}}{r^2}.
\end{equation}
If $\mathcal B_\sigma f$ is nonzero, the $1/r$ rate is tight in this norm.
\end{theorem}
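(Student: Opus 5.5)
We write the mean field as an integral against the law of $E_r$ and expand that law in powers of $1/r$. By \cref{eq:field}, $g_{r,\sigma}^f(W)=\sigma^{-1}\mathbb E[E_rf(W+\sigma E_r)]$. Conditioned on $B$, $E_r=r^{-1/2}AB^\top$ is Gaussian in $\operatorname{vec}$ form with row covariance $I_m$ and column covariance $S_r:=B^\top B/r$, where $B^\top B$ is here the $n\times n$ matrix $\sum_s b_sb_s^\top$. So $E_r\mid B\sim\mathcal N(0,I_m\otimes S_r)$. By the law of large numbers $S_r\to I_n$, and $\Delta_r:=\sqrt r(S_r-I_n)$ converges to a symmetric Gaussian (GOE-type) matrix with all moments bounded uniformly in $r$. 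Define
\[
h(S;W):=\sigma^{-1}\mathbb E_{Z\sim\mathcal N(0,I_m\otimes S)}\big[Zf(W+\sigma Z)\big].
\]
This gives the exact representation $g_{r,\sigma}^f(W)=\mathbb E[h(S_r;W)]$ and $g_{\infty,\sigma}^f(W)=h(I_n;W)$.

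Next we Taylor expand $h$ in $S$ around $I_n$ to third order with integral remainder. After the substitution $Z=Z_0S^{1/2}$, differentiating in $S$ only hits the Gaussian density. Each derivative therefore produces a polynomial in $Z$ times the density, and $f$ itself is never differentiated. For $S$ in a neighborhood $\{\|S-I\|_{\rm op}\le 1/2\}$, the polynomial growth $|f|\le C_f(1+\|W\|_F^p)$ gives
\[
\|D_S^kh(S;W)\|\le C_{k,\sigma,p}(1+\|W\|_F)^p,\qquad k\le 3,
\]
uniformly over that neighborhood. On that event we take expectations term by term. The zeroth-order term is $g_{\infty,\sigma}^f$. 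The first-order term vanishes because $\mathbb E[S_r-I]=0$. The second-order term is $\tfrac12\mathbb E[D^2h(I)[S_r-I,S_r-I]]=\tfrac1{2r}D^2h(I)[\mathbb E\,\Delta_r\otimes\Delta_r]$. The Wishart second moments are exact at every $r$: $\mathbb E[(S-I)_{ij}(S-I)_{kl}]=(\delta_{ik}\delta_{jl}+\delta_{il}\delta_{jk})/r$. This term is therefore exactly $\tfrac1r\mathcal B_\sigma f$, with $\mathcal B_\sigma f:=\tfrac12 D^2h(I)[\mathbb E\,\Delta\otimes\Delta]$. The third-order remainder is bounded by $C(1+\|W\|_F)^p\,\mathbb E\|S_r-I\|^3=O(r^{-3/2})$. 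This is where care is needed: to reach the claimed $r^{-2}$ we push the Taylor expansion to fourth order. The third Wishart cumulant of $S_r-I$ is $O(r^{-2})$, since the third central moments of a sample covariance scale as $1/r^2$. The fourth-order remainder is $O(\mathbb E\|S_r-I\|^4)=O(r^{-2})$. Both therefore contribute $O(r^{-2})(1+\|W\|_F)^p$.

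The main obstacle is the complement event $\{\|S_r-I\|_{\rm op}>1/2\}$. There $h(S_r;W)$ can be large: $S_r$ may be nearly singular for $r<n$, and $S^{1/2}$-scaling makes $h$ grow with $\|S\|$. We bound it crudely by
\[
\|h(S;W)\|_F\le\sigma^{-1}C_f\,\mathbb E\big[\|Z\|_F(1+(\|W\|_F+\sigma\|Z\|_F)^p)\big]\le C(1+\|W\|_F)^p(1+\|S\|_{\rm op})^{(p+1)/2}.
\]
This needs no lower bound on $S$, because $h$ is an expectation over $Z$ rather than a density evaluation. We do the same for the Taylor polynomial terms. Matrix Chernoff/Bernstein bounds give $\mathbb P(\|S_r-I\|_{\rm op}>1/2)\le 2n e^{-cr}$. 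Combined with Cauchy--Schwarz and bounded moments of $\|S_r\|$, the contribution of this event is $O(e^{-cr})\le O(r^{-2})$ for $r\ge r_0$. This defines $r_0$ and $R_{\sigma,p}$.

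For tightness, suppose $\mathcal B_\sigma f\ne0$, so there is $W_0$ with $\mathcal B_\sigma f(W_0)\neq0$. Then $r\|g_{r,\sigma}^f-g_{\infty,\sigma}^f\|_{\infty,p}\ge r\|(\cdot)(W_0)\|_F/(1+\|W_0\|_F)^p\to\|\mathcal B_\sigma f(W_0)\|_F/(1+\|W_0\|_F)^p>0$ by the expansion just proved, so no rate $o(1/r)$ is possible.
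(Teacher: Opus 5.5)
Your proof is correct, and it takes a genuinely different route from the paper.

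\textbf{Comparison with the paper.} The paper works in frequency space. It treats the kernel transform $m_t(T)=i\phi_t(T)(I_m+tTT^\top)^{-1}T$ as smooth in $t=1/r$ down to $t=0$, and Taylor-expands in $t$ with polynomial envelopes on all $T$-derivatives. It then uses the uniform decay $\phi_t(T)\le(1+t_0\|T\|_F^2)^{-1/(2t_0)}$, which is what forces $r\ge r_0$. Finally, Cauchy--Schwarz and Plancherel turn a Sobolev bound on the remainder transform into a weighted $L^1$ bound on the remainder density. You instead condition on the column factors. This turns rank-$r$ EGGROLL into dense Gaussian smoothing with random column covariance $S_r=\frac1r\sum_s b_sb_s^\top$. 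You then run a fourth-order delta-method expansion of $S\mapsto h(S;W)$ using exact moments of $S_r$. Your route avoids Fourier analysis and the weighted Sobolev estimate, and it explains the $1/r$ as the variance of a sample covariance. It extends to higher orders, since the $k$th cumulant of $S_r$ is $O(r^{1-k})$, and it needs $r_0$ only to absorb an exponentially small bad-event term. The paper's route, in exchange, produces the coefficient directly in closed form, split into a smoothing-law part and a score-mismatch part.

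\textbf{Points to tighten.} First, your remark about the substitution is backwards. After writing $Z=Z_0S^{1/2}$, the $S$-derivatives fall on $f$, which need not be differentiable. What you need is to keep $Z$ as the integration variable and differentiate the density of $\mathcal N(0,I_m\otimes S)$ in $S$. That gives polynomials in $Z$ and $S^{-1}$ times the density, with a uniform Gaussian envelope exactly on $\{\|S-I\|_{\mathrm{op}}\le1/2\}$. The substitution is the right tool only for your crude bound on the complement.

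Second, the fourth-order expansion needs the derivative bounds for $k\le4$, not $k\le3$; the same argument gives them.

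Third, the theorem refers to the paper's $\mathcal B_\sigma f$, defined through $\widehat\beta_\sigma$ in \eqref{eq:Bsymbol}. You must identify your coefficient with it rather than define it. This is a short Fourier check. For $f=e^{i\langle T,\cdot\rangle_F}$ one has $h(S;W)=iTS\,e^{-\sigma^2\operatorname{tr}(TST^\top)/2}e^{i\langle T,W\rangle_F}$. Contracting $\tfrac12D_S^2h(I_n;W)$ with $\mathbb E[\Delta_{ij}\Delta_{kl}]=\delta_{ik}\delta_{jl}+\delta_{il}\delta_{jk}$ gives $ie^{-\sigma^2\|T\|_F^2/2}\bigl[\tfrac{\sigma^4}{4}\operatorname{tr}((TT^\top)^2)T-\sigma^2TT^\top T\bigr]e^{i\langle T,W\rangle_F}$, which is exactly $\widehat\beta_\sigma(T)e^{i\langle T,W\rangle_F}$. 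The trace term comes from the exponential and the $TT^\top T$ term from the prefactor $TS$. Both kernels are Schwartz, so the two operators coincide for every polynomially growing $f$.
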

The coefficient $\mathcal B_\sigma f$ identifies the leading finite-rank error, motivating Richardson extrapolation using ranks $r$ and $\alpha r$. However, each estimate retains the dense-Gaussian variance coefficient $mn+1$. Extrapolation therefore improves the bias order while requiring more evaluations and amplifying sampling noise. \Cref{sec:expansion,app:secondaryremedies} gives the proof and develops this tradeoff.

\section{EGGROLL leads to LOO-ROLL}\label{sec:remedies}
The preceding analysis identifies finite-rank bias and the large directional variance already present in dense Gaussian ES. The antithetic implementation adds a third cost: two evaluations per direction. LOO-ROLL reduces this cost by using the rest of the population to construct a leave-one-out baseline. At iteration $t$, sample $N\ge2$ independent directions and write $F_{t,s}=f(W_t+\sigma E_{t,s})$. Define
\begin{equation}\label{eq:looeggroll}
\widehat g_t^{\mathrm{LOO}}=\frac1{N\sigma}\sum_{s=1}^N E_{t,s}(F_{t,s}-\overline F_{t,-s}),
\qquad \overline F_{t,-s}=\frac1{N-1}\sum_{j\ne s}F_{t,j}.
\end{equation}
Since $F_{t,s}-\overline F_{t,-s}=N(F_{t,s}-\overline F_t)/(N-1)$, the baseline requires only the population mean and no additional evaluations.
\begin{proposition}[Unbiasedness of LOO-ROLL]
\label{prop:looroll}
Let $\mathcal F_t$ denote the sigma-field generated by the optimization history before iteration $t$. Conditional on $\mathcal F_t$, suppose that $N\ge2$ and $E_{t,1},\ldots,E_{t,N}$ are independent, centered, and have distribution $\Law(E_r)$. Assume that $F_{t,s}$ and $\norm{E_{t,s}}_F F_{t,s}$ are conditionally integrable. Then,
\begin{equation}
\E\left[\widehat g_t^{\mathrm{LOO}}\mid\mathcal F_t\right]=g_{r,\sigma}^f(W_t).
\end{equation}
\end{proposition}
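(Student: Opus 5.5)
The plan is to expand the estimator by linearity of conditional expectation and handle the two pieces of each summand separately. Conditional on $\mathcal F_t$, the iterate $W_t$ is fixed (it is $\mathcal F_t$-measurable), so each $F_{t,s}=f(W_t+\sigma E_{t,s})$ is a measurable function of $E_{t,s}$ alone. Write
\begin{equation}
\widehat g_t^{\mathrm{LOO}}=\frac1N\sum_{s=1}^N\left[\frac1\sigma E_{t,s}F_{t,s}-\frac1{\sigma(N-1)}\sum_{j\ne s}E_{t,s}F_{t,j}\right].
\end{equation}
Integrability is needed only to make every term well defined and to justify linearity. The assumption that $\norm{E_{t,s}}_F F_{t,s}$ is conditionally integrable covers the diagonal terms. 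For the cross terms, we use $\E[\norm{E_{t,s}}_F\,|F_{t,j}|\mid\mathcal F_t]=\E[\norm{E_{t,s}}_F]\,\E[|F_{t,j}|\mid\mathcal F_t]$ by conditional independence. This is finite because $\E\norm{E_r}_F\le(\E\norm{E_r}_F^2)^{1/2}=\sqrt{mn}$ by identity covariance, and because $F_{t,j}$ is conditionally integrable.

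\textbf{Diagonal terms.} Since $E_{t,s}\sim\Law(E_r)$ independently of $\mathcal F_t$, we have $\E[\sigma^{-1}E_{t,s}f(W_t+\sigma E_{t,s})\mid\mathcal F_t]=g_{r,\sigma}^f(W_t)$, directly by the definition of the mean field in \cref{eq:field}. Averaging over $s$ contributes exactly $g_{r,\sigma}^f(W_t)$.

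\textbf{Cross terms.} For $j\ne s$, conditional independence of $E_{t,s}$ and $E_{t,j}$ given $\mathcal F_t$ makes $E_{t,s}$ and $F_{t,j}$ conditionally independent. The expectation therefore factorizes as $\E[E_{t,s}\mid\mathcal F_t]\,\E[F_{t,j}\mid\mathcal F_t]$, and this vanishes because the directions are centered. Hence the whole baseline contribution has zero conditional mean, and the claim follows.

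The only delicate step is the factorization in the cross terms. It requires the conditional independence to hold jointly given $\mathcal F_t$, together with integrability of the product $E_{t,s}F_{t,j}$ so that the factorization is legitimate. This is the role of the integrability check above. The rest is bookkeeping, and the identity $F_{t,s}-\overline F_{t,-s}=N(F_{t,s}-\overline F_t)/(N-1)$ is not needed for the proof.
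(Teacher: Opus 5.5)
Your proposal is correct and follows essentially the same route as the paper's proof. The term $\sigma^{-1}E_{t,s}F_{t,s}$ reproduces $g_{r,\sigma}^f(W_t)$ by definition, and the leave-one-out baseline depends only on the other directions, so it is conditionally independent of $E_{t,s}$ and contributes zero by centering; your explicit integrability check for the cross products $E_{t,s}F_{t,j}$ via $\E\norm{E_r}_F\le\sqrt{mn}$ is a justification the paper leaves implicit.
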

This unbiasedness statement holds before population score standardization. The practical implementation retains EGGROLL's standardization, whose effect on the expected update is treated in \cref{app:normalization}. To compare variances, we use the same affine model as in \cref{sec:variance}. Write $\widehat g^{\mathrm{LOO}}(W;N)$ and $\widehat g^{\mathrm{anti}}(W;N)$ for the estimators formed from $N$ directions, and let $d=mn$.
\begin{proposition}[Variance of LOO-ROLL in the affine model]\label{prop:loomse}
Suppose $f(W+\sigma E)=f(W)+\sigma\ip{G}{E}_F$ and write $d=mn$, with $\kappa_r$ as defined in \cref{eq:populationmse}. For $N\ge2$ independent directions, the LOO-ROLL estimator satisfies
\begin{equation}
\E\left[\norm{\widehat g^{\mathrm{LOO}}(W;N)-G}_F^2\right]=\left[\frac{\kappa_r}{N}+\frac{d+1}{N(N-1)}\right]\norm{G}_F^2. \label{eq:loomse}
\end{equation}
For $G\ne0$, comparing the variance of $2N$ LOO-ROLL directions with $N$ antithetic directions at equal evaluation cost gives
\begin{equation}
\frac{\E[\norm{\widehat g^{\mathrm{LOO}}(W;2N)-G}_F^2]}{\E[\norm{\widehat g^{\mathrm{anti}}(W;N)-G}_F^2]}=\frac12+\frac{d+1}{2\kappa_r(2N-1)}. \label{eq:looequalmse}
\end{equation}
\end{proposition}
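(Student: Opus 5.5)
Under the affine model the fitness values are $F_s=f(W)+\sigma\ip{G}{E_s}_F$. The constant $f(W)$ cancels in $F_s-\overline F_{-s}$, so
\[
\widehat g^{\mathrm{LOO}}=\frac1N\sum_{s=1}^N E_s\Bigl(\ip{G}{E_s}_F-\frac1{N-1}\sum_{j\ne s}\ip{G}{E_j}_F\Bigr)
=\frac1N\sum_s X_s-\frac1{N(N-1)}\sum_{s\ne j}E_s\ip{G}{E_j}_F,
\]
with $X_s=E_s\ip{E_s}{G}_F$ as in \cref{thm:variance}. Write the first sum as $\widehat G_N$ and call the second term $Y$. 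The plan is to expand $\E\norm{\widehat G_N-G-Y}_F^2$ into three pieces, $\E\norm{\widehat G_N-G}_F^2$, the cross term $-2\E\ip{\widehat G_N-G}{Y}_F$, and $\E\norm{Y}_F^2$. The first piece is $\kappa_r\norm{G}_F^2/N$ by \cref{thm:variance}.

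For the cross term I will use independence and centering. Take an off-diagonal summand $E_s\ip{G}{E_j}_F$ with $s\ne j$ and pair it against $X_k-G$. If $k\ne s$ and $k\ne j$, the term has zero mean, since $X_k-G$ is independent of the rest and centered. If $k=s$, the expectation factors through $\E[E_j]=0$. If $k=j$, the expectation factors through $\E[E_s]=0$. So the cross term vanishes. For $\E\norm{Y}_F^2$ I expand $\sum_{s\ne j}\sum_{s'\ne j'}\E\ip{E_s}{E_{s'}}_F\ip{G}{E_j}_F\ip{G}{E_{j'}}_F$. The only surviving index patterns are $(s',j')=(s,j)$ and $(s',j')=(j,s)$.
\begin{itemize}
\item The pattern $(s,j)$ gives $\E\norm{E_s}_F^2\,\E\ip{G}{E_j}_F^2=d\norm{G}_F^2$, using identity covariance.
\item The pattern $(j,s)$ gives $\E\bigl[\ip{E_s}{G}_F\ip{E_j}{G}_F\ip{E_s}{E_j}_F\bigr]$. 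Conditioning on $E_j$, this equals $\E\bigl[\ip{G}{E_j}_F\,\ip{G}{E_j}_F\bigr]=\norm{G}_F^2$, since $\E[E_s\ip{E_s}{A}_F]=A$ by identity covariance.
\end{itemize}
Each of the $N(N-1)$ ordered pairs therefore contributes $(d+1)\norm{G}_F^2$. Dividing by $N^2(N-1)^2$ gives $\E\norm{Y}_F^2=(d+1)\norm{G}_F^2/(N(N-1))$, which yields \cref{eq:loomse}.

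Only second moments of $E_r$ appear here, so no rank-specific fourth-moment computation is needed beyond what $\kappa_r$ already encodes. The step that needs care is the index bookkeeping in $\E\norm{Y}_F^2$, in particular the transposed pattern $(s',j')=(j,s)$, which is the source of the "$+1$". I would also check that mixed patterns vanish, for example $s'=s$ with $j'\ne j$, where the two factors $\ip{G}{E_j}_F$ and $\ip{G}{E_{j'}}_F$ are independent and centered.

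For \cref{eq:looequalmse}, the antithetic estimator under the affine model reduces exactly to $\widehat G_N$, because the central difference equals $\ip{G}{E_s}_F$. Its MSE is therefore $\kappa_r\norm{G}_F^2/N$. Substituting $2N$ into \cref{eq:loomse} gives $\bigl[\kappa_r/(2N)+(d+1)/(2N(2N-1))\bigr]\norm{G}_F^2$. Dividing by $\kappa_r\norm{G}_F^2/N$, which is legitimate since $G\ne0$, gives $\tfrac12+(d+1)/\bigl(2\kappa_r(2N-1)\bigr)$.
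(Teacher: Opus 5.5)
Your proof is correct and follows essentially the same route as the paper: isolate the diagonal terms $X_s$ (handled by \cref{thm:variance}), show the off-diagonal terms are orthogonal to them, and compute the off-diagonal second moment as $(d+1)\norm{G}_F^2/(N(N-1))$. The only difference is bookkeeping. The paper writes the estimator as a sample covariance and groups the off-diagonal terms into symmetric unordered pairs $z_iy_j+z_jy_i$, each with second moment $2(d+1)\norm{G}_F^2$. You instead expand over ordered pairs and track the transposed index pattern explicitly, which is the same computation.
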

Proofs of both propositions are given in \cref{app:loo}.
Antithetic sampling and baseline subtraction are both established ways to reduce the variance of score-function estimators \citep{salimans2017evolution,williams1992simple,wierstra2014natural}. EGGROLL spends two evaluations on each independent direction and subtracts their fitnesses, canceling reward components shared by the pair. LOO-ROLL instead removes the population reward offset with a leave-one-out baseline, allowing every evaluation to use a new independent direction. At fixed $N$, estimating the baseline adds only an order-$N^{-2}$ term to the order-$N^{-1}$ directional variance. With $2N$ evaluations, LOO-ROLL uses twice as many independent directions and its variance approaches one half of EGGROLL's.

LOO-ROLL preserves EGGROLL's computational structure. Each evaluation uses a low-rank perturbation and the same shared base weights and batched forward computations. The implementation retains seed-based reconstruction, layerwise updates, and weight synchronization. The baseline requires neither additional model-sized state nor communication rounds. Evaluations previously spent on antithetic partners can therefore fund more directions or further updates.

\section{Experiments}\label{sec:experiments}
We begin by testing the theoretical predictions before examining their consequences for LLM post-training and LOO-ROLL. Complete protocols, additional figures, and numerical aggregates appear in \cref{app:empirical,app:experimentdetails}.

\noindent\textbf{Sampling variance and stability follow the predictions.}
Monte Carlo variance agrees to within $0.9\%$ at every tested rank with the exact finite-population prediction $\kappa_r\|G\|_F^2/N$ from \cref{eq:populationmse,tab:variancecheck}. The nonquadratic counterexample reproduces the predicted Jacobian eigenvalue $0.02925$ at rank one and $-0.00444$ at rank two (\cref{fig:witnessdynamics}). Cubic and Fourier objectives recover the $\sigma^2/r$ and fixed-radius $1/r$ corrections. On a quadratic objective, the finite-population formula predicts both the one-step change and the 25-step trajectories from the objective, step size, rank, and population size (\cref{fig:finitepopulationvalidation}). At step size $0.3$ and $N=16$, expected loss grows at rank one but decreases at rank eight and under dense Gaussian ES. Here, the instability arises entirely from sampling variance, since the mean fields agree. \Cref{app:syntheticchecks} gives the full experiments.

\noindent\textbf{Transformer blocks recover the finite-rank surcharge.}
We perturb $16\times16$ attention and MLP sub-blocks of Qwen3-0.6B, evaluating next-token prediction loss on a fixed minibatch. With 8,192 directions and three perturbation seeds, attention-block MSE ratios relative to dense Gaussian ES are $1.264,1.127,1.089,1.028$ at ranks $1,2,4,8$, close to the affine predictions $1.257,1.128,1.064,1.032$ (\cref{fig:fieldaudit}). Across ranks, the cosine similarity between the Monte Carlo estimate of the EGGROLL mean field and the backpropagated block gradient remains between $0.980$ and $0.984$. Thus the experiment resolves the predicted rank-dependent variance while finding little rank dependence in the mean direction at the tested radii. In a separate audit at $N=128$, the mean updates obtained with and without population score standardization have cosine similarity $0.99970$ at rank one (\cref{tab:normalizationaudit}).

\begin{figure*}[!ht]
    \centering
    \includegraphics[width=\textwidth]{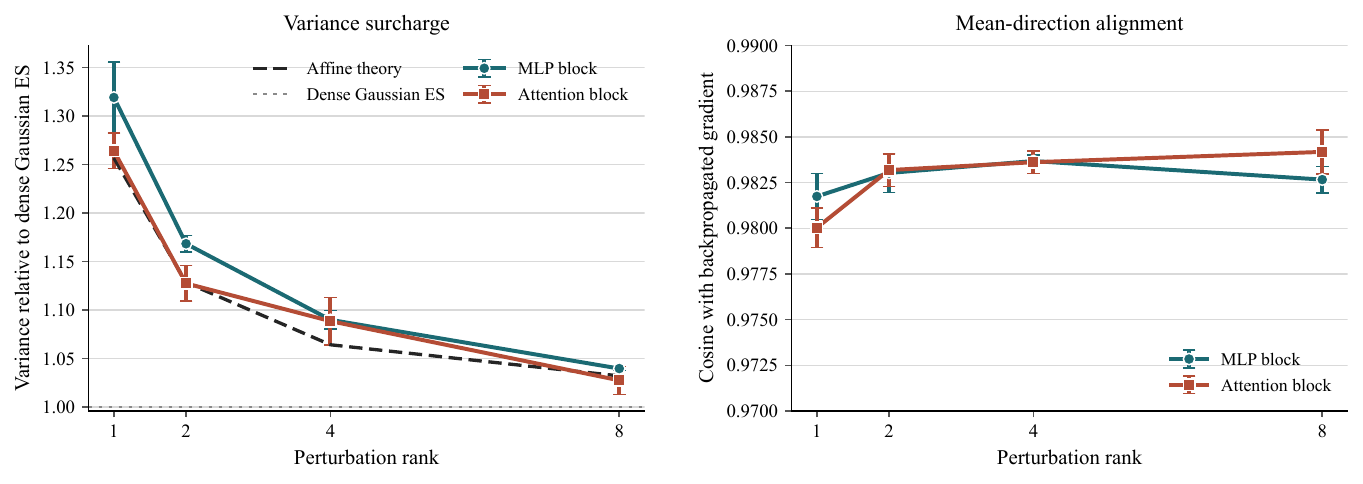}
\caption{Finite-rank audit of two Qwen3-0.6B blocks. \textbf{Left:} single-direction MSE relative to dense Gaussian ES. The dashed curve is the exact affine prediction $[257+66/r]/257$ for a $16\times16$ block. \textbf{Right:} the mean of 8,192 antithetic directions remains aligned with the backpropagated block gradient. Points average four radii within each seed, and bars are standard errors over three seeds.}
    \label{fig:fieldaudit}
\end{figure*}

\noindent\textbf{Increasing rank does not yield a reproducible reward advantage.}
Across Countdown and GSM8K comparisons, rank-eight minus rank-one intervals contain zero, while rank eight increases iteration time by $12$--$25\%$ (\cref{tab:fixedvalidation}). Raising $\sigma$ from $10^{-3}$ to $4\times10^{-3}$ reduces mean reward by $70$--$100\%$ in the 0.6B and 1.7B studies (\cref{fig:endtoend,tab:five-seed}). These results show that performance is substantially more sensitive to radius than rank in these settings.

Having validated the mean-field and variance predictions, we now evaluate LOO-ROLL in complete LLM post-training runs and ask whether its saved evaluations translate into better performance at equal wall time. We compare Qwen3 models from 0.6B to 14B \citep{yang2025qwen3} and SmolLM2-1.7B \citep{allal2025smollm2} over five paired training seeds per setting. Both methods start from the same pretrained checkpoint and retain the same learning rate, rank, radius, prompt batches, and score processing. Equal-iteration runs use the same direction count. Equal-evaluation runs double LOO-ROLL's directions. Equal-time runs continue LOO-ROLL to the paired EGGROLL runtime. Timing includes model loading and checkpoint writing, and accumulates all resumed training.

GSM8K \citep{cobbe2021training} uses all 1,319 test questions and MATH-500 uses all 500 problems. MATH training excludes MATH-500. Countdown and next-token prediction use 256 shared validation examples. Checkpoints are selected by update count or stopping budget, without choosing the best test score. \Cref{tab:looprotocol} provides exact hardware, direction counts, batches, updates, and measured runtimes.

\noindent\textbf{At equal evaluation cost, update MSE is approximately halved.}
Across two transformer blocks, two radii, and three seeds, using the same number of directions increases MSE by at most $5.2\%$, falling to $0.8\%$ at $N=128$. Doubling the directions at equal evaluation cost reduces MSE by $49.0$--$50.1\%$ across $N=16,32,64,128$ (\cref{tab:loorollmse}), agreeing with \cref{eq:looequalmse}.

\begin{table}[t]
\centering
\papertablestyle
\caption{Signed percentage change in LOO-ROLL MSE relative to antithetic EGGROLL. Positive values mean higher MSE. The equal-direction comparison uses $N$ directions for both estimators, while the equal-evaluation comparison uses $2N$ LOO-ROLL directions against $N$ antithetic directions. Values average two transformer blocks, two radii, and three seeds.}
\label{tab:loorollmse}
\begin{tabular*}{\textwidth}{@{\extracolsep{\fill}}rcc@{}}
\toprule
Antithetic directions $N$ & Equal directions & Equal evaluations \\
\midrule
16  & $+5.2\%$ & $\bm{-49.1\%}$ \\
32  & $+2.5\%$ & $\bm{-49.0\%}$ \\
64  & $+1.2\%$ & $\bm{-50.1\%}$ \\
128 & $+0.8\%$ & $\bm{-49.5\%}$ \\
\bottomrule
\end{tabular*}
\end{table}

\begin{table}
\centering
\papertablestyle
\caption{LOO-ROLL minus antithetic EGGROLL reward. GSM8K and MATH-500 results use the full test sets, while Countdown uses 256 shared examples. Values are mean $\pm$ standard deviation of five paired training-seed differences, in percentage points. Bold marks positive mean improvements.}
\label{tab:loorollreward}
\begin{tabular*}{\textwidth}{@{\extracolsep{\fill}}llrrr@{}}
\toprule
Model & Task & Equal iterations & Equal evaluations & Equal wall time \\
\midrule
SmolLM2-1.7B & Countdown & $-0.20\%\pm0.22$ & $-0.08\%\pm0.34$ & $-0.10\%\pm0.46$ \\
SmolLM2-1.7B & GSM8K & $\bm{+0.50\%\pm0.64}$ & $\bm{+0.23\%\pm0.74}$ & $\bm{+4.18\%\pm1.54}$ \\
Qwen3-0.6B & Countdown & $-4.09\%\pm3.97$ & $-1.05\%\pm1.49$ & $-0.62\%\pm3.46$ \\
Qwen3-0.6B & GSM8K & $\bm{+5.35\%\pm1.76}$ & $-4.82\%\pm4.87$ & $\bm{+24.90\%\pm2.38}$ \\
Qwen3-1.7B & Countdown & $-3.89\%\pm1.22$ & $-0.31\%\pm1.56$ & $\bm{+1.68\%\pm1.03}$ \\
Qwen3-1.7B & GSM8K & $0.00\%\pm1.01$ & $-0.26\%\pm0.81$ & $\bm{+2.85\%\pm0.83}$ \\
Qwen3-1.7B & MATH-500 & $\bm{+0.56\%\pm1.17}$ & $-0.08\%\pm1.25$ & $\bm{+12.20\%\pm1.56}$ \\
Qwen3-8B & GSM8K & $\bm{+0.85\%\pm0.66}$ & $-1.08\%\pm1.21$ & $\bm{+14.13\%\pm0.88}$ \\
Qwen3-8B & MATH-500 & $\bm{+0.24\%\pm1.36}$ & $\bm{+0.60\%\pm1.65}$ & $\bm{+8.36\%\pm1.68}$ \\
Qwen3-14B & GSM8K & $\bm{+0.41\%\pm0.89}$ & $-0.08\%\pm0.79$ & $\bm{+7.85\%\pm1.35}$ \\
Qwen3-14B & MATH-500 & $\bm{+1.12\%\pm0.70}$ & $\bm{+0.60\%\pm0.47}$ & $\bm{+8.08\%\pm1.23}$ \\
\bottomrule
\end{tabular*}
\end{table}
\noindent\textbf{Additional updates yield gains across scale and task.}
\Cref{tab:loorollreward,tab:loorolldense} report paired mean changes and sample standard deviations. At matched time, GSM8K accuracy rises from $38.10\%\pm2.03$ to $63.00\%\pm0.80$ at 0.6B, from $65.87\%\pm0.78$ to $80.00\%\pm0.78$ at 8B, and from $71.22\%\pm0.67$ to $79.08\%\pm0.92$ at 14B. On SmolLM2-1.7B, accuracy rises from $18.54\%\pm0.51$ to $22.73\%\pm1.15$. MATH-500 gains are $12.20$, $8.36$, and $8.08$ percentage points at 1.7B, 8B, and 14B. At 14B, accuracy rises from $44.68\%\pm0.72$ to $52.76\%\pm1.13$, with every seed improving. LOO-ROLL completes 522.8 updates on average against EGGROLL's 200 in the same approximately 29-hour budget.

Next-token prediction loss in \cref{tab:loorolldense} also falls at equal time, by $1.20\%$ and $2.16\%$ relative to the EGGROLL baselines at 1.7B and 8B. The corresponding differences at equal evaluations are small, so the improvement follows from the additional updates that the saved computation permits. Across the fourteen primary settings, eleven gains are significant in individual paired tests, nine survive Holm correction, and none significantly favors EGGROLL (\cref{tab:looholm}). The average paired time ratio is approximately $1.001$. At equal evaluations, thirteen differences are unresolved, with a smaller $0.60$-point gain on 14B MATH-500 ($p=0.0459$). The improvements use the EGGROLL learning rate and perturbation radius, without separate tuning for LOO-ROLL. At a fixed direction count, the gap narrows as the population grows. The practical advantage comes from spending the saved evaluations on more directions or updates.

\section{Conclusion}\label{sec:conclusion}\label{sec:discussion}
We showed that finite-rank EGGROLL follows a resolvent-filtered smoothed gradient that can be nonconservative and destabilize an optimum, yet is exact on quadratics and adds little sampling variance for wide matrices. We proposed LOO-ROLL, which replaces antithetic evaluations with leave-one-out baselines while retaining EGGROLL's low-rank, batched, forward-only computation. At equal evaluation cost it approximately halves update MSE, and across fourteen LLM post-training settings with up to 14B parameters, eleven matched-time gains are individually significant and none significantly favors EGGROLL. The main theory is limited to the unstandardized field and a local affine variance model, and the experiments to two model families and three tasks. Future work should treat adaptive score processing and dynamic allocation of rank, radius, and population.

\newpage

\subsection*{AI use statement}
Generative AI tools assisted with literature discovery, experimental orchestration, and language and formatting revisions. The authors independently checked all citations. The authors made all final scientific and editorial decisions and take responsibility for the complete contents of the submission.

\subsection*{Ethics statement}
This work studies optimization methods using publicly available pretrained models and benchmark datasets. It involves no human subjects, private data, or deployment, and raises no ethical concerns beyond those generally associated with language-model research.

\subsection*{Reproducibility statement}
All theorem assumptions are stated in the main text, and complete proofs are provided in the appendix. The supplementary material includes the complete training and evaluation code, experiment configurations, and analysis scripts. LOO-ROLL is implemented as a 37-line patch to the \href{https://github.com/ESHyperscale/eggroll-vllm/tree/bcc215e8784f5f44d24985145c0a71e74283cf1f}{EGGROLL vLLM code at commit \texttt{bcc215e8784f5f44d24985145c0a71e74283cf1f}}, comprising 15 insertions and 22 deletions in one file. The algorithmic patch is provided separately from the shared model-compatibility changes so that the modification can be inspected and reproduced directly. Reward-based runs use rank one, $\sigma=10^{-3}$, and learning rate $2\times10^{-4}$. Next-token prediction uses 32 directions and $\sigma=10^{-2}$ at 0.6B and 1.7B or $3\times10^{-3}$ at 8B, with learning rate $4\times10^{-4}$ only for the 8B setting. Equal-evaluation comparisons double the LOO-ROLL direction count. Experiments use one NVIDIA H100 80GB GPU and 14 CPU cores, except 14B MATH training, which uses two H100 GPUs and 28 CPU cores for both methods. Paired comparisons begin from the same pretrained checkpoint and share seeds, prompt batches, learning rates, ranks, perturbation radii, and score processing. Timing includes model loading, updates, and checkpoint writing, with elapsed time accumulated across resumed runs. Checkpoints are fixed by update count or stopping budget rather than test performance. GSM8K and MATH-500 use their complete 1,319- and 500-example test sets, with MATH-500 excluded from training. Countdown and next-token prediction use 256 shared validation examples. \Cref{tab:looprotocol} reports direction counts, batch sizes, update budgets, and measured runtimes for every primary comparison.


\newpage

\bibliography{iclr2027_conference}
\bibliographystyle{iclr2027_conference}

\newpage

\appendix
\crefalias{section}{appendix}
\crefalias{subsection}{appendix}
\crefalias{subsubsection}{appendix}
\crefname{appendix}{Appendix}{Appendices}
\Crefname{appendix}{Appendix}{Appendices}
\section*{Appendix Table of Contents}
\addcontentsline{toc}{section}{Appendix}
\markboth{Appendix}{Appendix}
\startcontents[appendix]
\printcontents[appendix]{l}{1}{\setcounter{tocdepth}{3}}
\newpage

\section{Additional related work}\label{app:related}
\label{detail:sec:related}

Random-direction finite differences and simultaneous perturbation are classical gradient-free methods \citep{spall1992multivariate,nesterov2017random,duchi2015optimal}. Their nonconvex analyses make the dependence on dimension and smoothing radius explicit \citep{balasubramanian2022zeroth}. MeZO shows that an in-place two-forward-pass implementation can fine-tune large language models with inference-level memory, and subsequent work adds stochastic variance reduction \citep{malladi2023fine,pmlr-v235-gautam24a}. Residual feedback uses a stored function value to obtain a one-query estimator for online zeroth-order optimization \citep{zhang2024boosting}. These methods use dense or coordinatewise perturbations. EGGROLL instead constructs each matrix perturbation as a low-rank Gaussian product and batches the resulting population during inference, giving rise to the finite-rank population field analyzed here.

Baselines have long been used to reduce the variance of score-function and policy-gradient estimators without changing their expectation \citep{williams1992simple}. In REINFORCE leave-one-out, the reward of each sampled response is compared with the average reward of the other responses generated from the same prompt \citep{ahmadian2024back}. LOREN also uses REINFORCE leave-one-out in curvature-aware zeroth-order LLM tuning \citep{seung2026low}. LOO-ROLL specializes the baseline to EGGROLL's Gaussian-product population and evaluates the resulting alternative to its antithetic training estimator.

The original EGGROLL analysis proves a fixed-rank, high-dimensional vanishing-radius guarantee and convergence of its Gaussian-score update to dense Gaussian ES at rate $O(1/r)$ under regularity assumptions \citep{sarkar2026evolution}. Our results concern the complementary regime of finite dimension, finite radius, and finite population. Recent work compares full Gaussian ES with gradient-based LLM post-training and studies its behavior in flat, linear, and quadratic geometries \citep{hoy2026matching,levi2026on}.

Orthogonal and structured perturbations can reduce estimator variance or the cost of applying a direction, while structured control variates use additional problem information \citep{choromanski2018structured,tang2020variance}. Guided and active-subspace ES concentrate exploration in learned subspaces \citep{maheswaranathan2019guided,choromanski2019complexity}. Natural ES and CMA-ES adapt a Gaussian search law \citep{wierstra2014natural,hansen2016cma}. EGGROLL makes a different choice, where the Gaussian-product law is fixed by the desired matrix computation. We analyze the mean field and sampling error that follow from that choice.

Stein operators for scalar products of independent normal variables, and for sums and broader products built from such variables, are well established \citep{gaunt2017stein,gaunt2018products}. These identities act on one scalar product-normal variable. The entries of $AB^\top$ instead share row and column factors, so the matrix law couples different coordinates. Our operator identity keeps this dependence, produces a matrix differential operator acting on the population field, and connects its resolvent to finite-rank optimization dynamics.

\citet{wang2020zeroth} show that convergence can fail for structured zeroth-order perturbations and proposed mixing them with isotropic Gaussian directions. Their use of isotropy concerns an ambient Gaussian component. EGGROLL already has identity covariance, yet its law is not spherically symmetric in the vectorized matrix space. Our conservativity theorem isolates this stronger distributional requirement. Exact spherical symmetry cannot be achieved by a nondegenerate law supported entirely on rank-deficient matrices when $m,n\ge2$, since a uniformly rotated nonzero matrix is full rank almost surely after reshaping. Occasional dense directions could supply an ambient isotropic component, though those evaluations would no longer have the purely low-rank cost of EGGROLL.

\section{Mean-field and finite-rank theory}\label{app:field}

\subsection{Proof of Theorem~\ref{thm:law}}
\begin{proof}[Proof of \cref{thm:law}]
For a pair $(a, b) \in \bR^m \times \bR^n$, condition on $a$:
\begin{equation}
\E_b\big[e^{i\ip{T}{ab^\top}_F/\sqrt r}\big] = \E_b\big[e^{i\tr(T^\top ab^\top)/\sqrt r}\big] = \E_b\big[e^{ib^\top T^\top a/\sqrt r}\big].
\end{equation}
Because we have conditioned on $a$, $c := T^\top a \in \bR^n$ is fixed. Furthermore, since $b \sim \cN(0, I_n)$, $b^\top c$ is a one-dimensional Gaussian, i.e.,
\begin{equation}
b^\top c \sim \cN(0, \norm{c}^2).
\end{equation}
Hence,
\begin{equation}
\E_b\big[e^{ib^\top T^\top a/\sqrt r}\big] = \E_b\big[e^{ib^\top c/\sqrt r}\big] = \exp\Bigl(-\frac{1}{2r}\norm{c}^2\Bigr) = \exp\Bigl(-\frac{1}{2r}\norm{T^\top a}^2\Bigr),
\end{equation}
by the Gaussian characteristic function. We have thus obtained the expectation conditional on $a$. To have the full expectation, we now evaluate
\begin{equation}
\E_a\Bigl[\exp\Bigl(-\frac{1}{2r}\norm{T^\top a}^2\Bigr)\Bigr] = \E_a\Bigl[\exp\Bigl(-\frac{1}{2r}a^\top TT^\top a\Bigr)\Bigr].
\end{equation}
By the Gaussian quadratic-form identity \citep{muirhead2009aspects}, this is
\begin{equation}
\det\Bigl(I_m+\frac{TT^\top}{r}\Bigr)^{-1/2}.
\end{equation}
This means that one rank-one term contributes
\begin{equation}
\E\big[e^{i\ip{T}{ab^\top}_F/\sqrt r}\big] = \det\Bigl(I_m+\frac{TT^\top}{r}\Bigr)^{-1/2}.
\end{equation}
Now,
\begin{equation}
E_r = \frac{1}{\sqrt{r}} \sum_{s=1}^r a_s b_s^\top,
\end{equation}
so
\begin{equation}
\Phi_r(T) = \E\big[e^{i\ip{T}{E_r}_F}\big] = \prod_{s=1}^r \E\big[e^{i\ip{T}{a_sb_s^\top}_F/\sqrt r}\big] = \det\Bigl(I_m+\frac{TT^\top}{r}\Bigr)^{-r/2},
\end{equation}
since all $r$ pairs $(a_s, b_s)$ are independent.
We finish by computing the gradient of the resulting determinant. Write $S=I_m+TT^\top/r$. Since $\log\Phi_r(T)=-(r/2)\log\det S$, its differential is
\begin{equation}
d\log\Phi_r(T)=-\frac{r}{2}\tr(S^{-1}dS), \qquad dS=\frac{(dT)T^\top+T(dT)^\top}{r}.
\end{equation}
Substituting $dS$ cancels the factor $r$ and gives
\begin{equation}
d\log\Phi_r(T)=-\frac12\tr\left(S^{-1}(dT)T^\top+S^{-1}T(dT)^\top\right).
\end{equation}
The matrix $S^{-1}$ is symmetric. Cyclicity of the trace and $\tr(A(dT)^\top)=\tr(A^\top dT)$ therefore make the two terms equal, so
\begin{equation}
d\log\Phi_r(T)=-\tr\left(T^\top S^{-1}dT\right)=\ip{-S^{-1}T}{dT}_F.
\end{equation}
By the defining identity $d h(T)=\ip{\nabla_T h(T)}{dT}_F$ for the Frobenius gradient, $\nabla_T\log\Phi_r(T)=-S^{-1}T$. Finally, $\nabla_T\Phi_r(T)=\Phi_r(T)\nabla_T\log\Phi_r(T)$, which proves \cref{eq:char}.
\end{proof}

\subsection{Proof of Theorem~\ref{thm:resolvent}}
When both fields are square-integrable and $g_{r,\sigma}^f$ belongs to the domain of $\mathcal L$, taking the $L^2$ inner product of \cref{eq:resolvent} with $g_{r,\sigma}^f$ gives the energy identity
\begin{equation}\label{eq:energy}
\left\langle g_{r,\sigma}^f,\nabla F_{r,\sigma}\right\rangle_{L^2}=\|g_{r,\sigma}^f\|_{L^2}^2+\frac{\sigma^2}{r}\|\mathcal L^{1/2}g_{r,\sigma}^f\|_{L^2}^2.
\end{equation}
\begin{proof}[Proof of \cref{thm:resolvent}]
For a tempered distribution $f$, averaging its translations against the perturbation law gives
\begin{equation}
\widehat F_{r,\sigma}(T)=\Phi_r(\sigma T)\widehat f(T), \qquad \widehat g_{r,\sigma}^f(T)=i\Phi_r(\sigma T)\left(I_m+\frac{\sigma^2TT^\top}{r}\right)^{-1}T\widehat f(T)
\end{equation}
in the distributional sense. The Fourier multiplier of $I+(\sigma^2/r)\cL$ is left multiplication by $I_m+\sigma^2TT^\top/r$. Applying it to the second expression gives $iT\widehat F_{r,\sigma}(T)$, the Fourier transform of $\nabla F_{r,\sigma}$. This proves \cref{eq:resolvent} as an identity of tempered distributions.

Under the stated Fourier moment condition, substituting the Fourier representation of $f$ into \cref{eq:field}, interchanging expectation and integration, and using \cref{eq:char} gives the classical formula
\begin{equation}
g_{r,\sigma}^f(W)=\int i\Phi_r(\sigma T)\left(I_m+\frac{\sigma^2TT^\top}{r}\right)^{-1}T e^{i\ip{T}{W}_F}\,\mu_f(dT),
\end{equation}
whereas
\begin{equation}
\nabla F_{r,\sigma}(W)=\int i\Phi_r(\sigma T)Te^{i\ip{T}{W}_F}\,\mu_f(dT).
\end{equation}
The moment condition justifies these operations and makes the identity pointwise. At each frequency $T$, the multiplier $I_m+\sigma^2TT^\top/r$ is symmetric with eigenvalues at least one, making its inverse positive, contractive, and firmly nonexpansive. Parseval's identity equates $L^2$ inner products of fields in parameter space with the corresponding inner products of their Fourier coefficients \citep{folland1999real}. Applying this identity to $(I+(\sigma^2/r)\cL)g=\nabla F$ gives \cref{eq:energy}.
\end{proof}

\subsection{Proof of Corollary~\ref{cor:mismatch}}
\begin{proof}[Proof of \cref{cor:mismatch}]
Set $\alpha=\sigma^2/r$ and write $J=J_{r,\sigma}=(I+\alpha\cL)^{-1}$. On the domain of $\cL$, the resolvent identity gives
\begin{equation}
J-I=-\alpha J\cL.
\end{equation}
To see this directly, take any matrix vector field $h$ for which $\cL h$ is defined. Since $J$ is the inverse of $I+\alpha\cL$,
\begin{equation}
J(h+\alpha\cL h)=h.
\end{equation}
Linearity of $J$ then gives $Jh-h=-\alpha J\cL h$. Applying this equality with $h=\nabla F_{r,\sigma}$ and using $g_{r,\sigma}^f=J\nabla F_{r,\sigma}$ yields
\begin{equation}
g_{r,\sigma}^f-\nabla F_{r,\sigma}=-\frac{\sigma^2}{r}J_{r,\sigma}\cL\nabla F_{r,\sigma},
\end{equation}
which proves the exact formula. The Fourier calculation in \cref{thm:resolvent} also shows that $J$ multiplies every eigendirection at every frequency by a number in $[0,1]$. Parseval's identity therefore gives $\norm{Jh}_{L^2}\le\norm{h}_{L^2}$ for every square-integrable field $h$. Taking $L^2$ norms in the exact formula and applying this inequality to $h=\cL\nabla F_{r,\sigma}$ proves \cref{eq:mismatchbound}.
\end{proof}

\subsection{Proofs and examples for conservativity}\label{app:conservativity}

\subsubsection{Proof of Theorem~\ref{thm:conservative}}
For the proof, write $\Phi(t)=\E[e^{it^\top Z}]$ and define
\begin{equation}
(\cT_{Z,\sigma}f)(x)=\frac1\sigma\E[Zf(x+\sigma Z)].
\end{equation}
We prove the equivalence of the following statements:
\begin{enumerate}[label=(\roman*),leftmargin=1.8em]
\item for every $\sigma>0$ and every real trigonometric polynomial $f$, the field $\cT_{Z,\sigma}f$ is conservative;
\item for every $t\in\R^d$, $\nabla\Phi(t)=\lambda(t)t$ for some complex scalar $\lambda(t)$;
\item $\Phi$ is radial;
\item the law of $Z$ is spherically symmetric.
\end{enumerate}
\begin{proof}[Proof of \cref{thm:conservative}]
We first prove the equivalence of (i) and (ii). Fix $t\in\R^d$ and consider the complex Fourier mode $f_t(x)=e^{it^\top x}$. The real and imaginary parts of this mode are $\cos(t^\top x)$ and $\sin(t^\top x)$, so applying $\cT_{Z,\sigma}$ to $f_t$ records its action on both real modes. From the definition of $\Phi$,
\begin{equation}
\nabla\Phi(\sigma t)=i\E\left[Ze^{i\sigma t^\top Z}\right].
\end{equation}
Consequently,
\begin{equation}
(\cT_{Z,\sigma}f_t)(x)=\frac{e^{it^\top x}}{\sigma}\E\left[Ze^{i\sigma t^\top Z}\right]=\frac{1}{i\sigma}\nabla\Phi(\sigma t)e^{it^\top x}. \label{eq:generalconservativemode}
\end{equation}
A vector field of the form $ce^{it^\top x}$ is a gradient field if and only if its coefficient $c\in\mathbb C^d$ is parallel to $t$. Indeed, if $c=\gamma t$, then
\begin{equation}
ce^{it^\top x}=\nabla\left(\frac{\gamma}{i}e^{it^\top x}\right).
\end{equation}
Conversely, the Jacobian of $ce^{it^\top x}$ is $ict^\top e^{it^\top x}$. Any continuously differentiable gradient field has a symmetric Jacobian because its Jacobian is the Hessian of its potential. For $t\ne0$, symmetry of the present Jacobian is equivalent to $ct^\top=tc^\top$. Choose an index $j$ for which $t_j\ne0$. Comparing the $(i,j)$ entries gives $c_it_j=t_ic_j$ for every $i$, and hence $c=(c_j/t_j)t$. Thus the Jacobian is symmetric only when $c$ and $t$ are parallel. Equation~\eqref{eq:generalconservativemode} is therefore conservative precisely when $\nabla\Phi(\sigma t)$ is parallel to $t$. Since $\sigma>0$ and $t$ are arbitrary, this condition is equivalent to (ii). Linearity then extends the conclusion from individual sine and cosine modes to every trigonometric polynomial, proving (i)$\Leftrightarrow$(ii). The zero-frequency mode causes no exception because $\nabla\Phi(0)=i\E[Z]=0$.

We next prove (ii)$\Leftrightarrow$(iii). Suppose (ii) holds. If $v^\top t=0$, then $v$ is tangent at $t$ to the sphere of radius $\norm{t}$, and
\begin{equation}
v^\top\nabla\Phi(t)=\lambda(t)v^\top t=0.
\end{equation}
Thus $\Phi$ has zero directional derivative along every tangent direction to each sphere. Spheres are connected when $d\ge2$, so $\Phi$ is constant on every sphere and hence depends only on $\norm{t}$. This is precisely radiality. Conversely, if $\Phi$ is radial, then it is constant in every direction tangent to a sphere. The gradient is therefore orthogonal to every tangent direction and must be parallel to the radial direction $t$, which gives (ii).

Finally, (iii)$\Leftrightarrow$(iv) is the standard characteristic-function characterization of spherical symmetry \citep{muirhead2009aspects}. If the law of $Z$ is spherically symmetric, then $QZ$ and $Z$ have the same law for every orthogonal matrix $Q$, and hence $\Phi(Q^\top t)=\Phi(t)$. Therefore $\Phi$ depends only on $\norm{t}$. Conversely, if $\Phi$ is radial, the characteristic function of $QZ$ is
\begin{equation}
\E\left[e^{it^\top QZ}\right]=\Phi(Q^\top t)=\Phi(t).
\end{equation}
Uniqueness of characteristic functions then implies that $QZ$ and $Z$ have the same law, so the law is spherically symmetric. The assumption $d\ge2$ is necessary because in one dimension every continuous scalar field is conservative, regardless of whether the perturbation law is symmetric.
\end{proof}

\subsubsection{An explicit nonconservative mode}
For the Gaussian-product law, spherical symmetry fails because different singular directions are attenuated by different amounts. A single diagonal Fourier mode is enough to display the resulting nonconservativity.
\begin{corollary}[A bounded smooth nonconservative example]
\label{cor:curl}
Let $m=n=2$, $T=\diag(1,2)$, and $f(W)=\cos\ip{T}{W}_F$. Then
\begin{equation}
g_{r,\sigma}^f(W) =-\Phi_r(\sigma T) \left(I+\frac{\sigma^2TT^\top}{r}\right)^{-1}T \sin\ip{T}{W}_F. \label{eq:witness}
\end{equation}
For every finite $r$ and $\sigma>0$, its Jacobian is nonsymmetric at $W=0$ and at every $W$ with $\cos\ip{T}{W}_F\ne0$. Hence, the field is not conservative.
\end{corollary}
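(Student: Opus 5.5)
The formula \eqref{eq:witness} follows directly from the single-mode computation \eqref{eq:multiplier}. Since $\Phi_r$ is real and even, and $J(-T)=-J(T)$, I will write
\[
\cos\ip{T}{W}_F=\tfrac12\bigl(e^{i\ip{T}{W}_F}+e^{-i\ip{T}{W}_F}\bigr).
\]
Applying \eqref{eq:multiplier} to each exponential and using $\Phi_r(-\sigma T)=\Phi_r(\sigma T)$ gives
\[
\tfrac{i}{2}\Phi_r(\sigma T)J(T)\bigl(e^{i\ip{T}{W}_F}-e^{-i\ip{T}{W}_F}\bigr)=-\Phi_r(\sigma T)J(T)\sin\ip{T}{W}_F,
\]
which is \eqref{eq:witness}. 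With $T=\diag(1,2)$ we have $TT^\top=\diag(1,4)$, so
\[
J(T)=\diag\Bigl(\frac{1}{1+\sigma^2/r},\ \frac{2}{1+4\sigma^2/r}\Bigr)=:C .
\]
Note also that $\Phi_r(\sigma T)=\bigl[(1+\sigma^2/r)(1+4\sigma^2/r)\bigr]^{-r/2}>0$.

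Next I will compute the Jacobian after vectorizing $W\in\R^{4}$, so that the field is $x\mapsto -\Phi\,c\,\sin(t^\top x)$ with $c=\mathrm{vec}(C)$ and $t=\mathrm{vec}(T)$. Its Jacobian is
\[
-\Phi\cos(t^\top x)\,c\,t^\top .
\]
As in the proof of \cref{thm:conservative}, this rank-one matrix, with a nonzero scalar prefactor, is symmetric if and only if $c\parallel t$. The prefactor is nonzero exactly when $\cos\ip{T}{W}_F\ne0$, and in particular at $W=0$.

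The remaining step is to check that $c$ is not parallel to $t$. Both vectors are supported on the two diagonal entries: $t$ has diagonal entries $(1,2)$, and $c$ has diagonal entries $\bigl((1+\alpha)^{-1},\,2(1+4\alpha)^{-1}\bigr)$ with $\alpha=\sigma^2/r>0$. They are parallel if and only if the ratios of the second to the first diagonal entries agree:
\[
\frac{2(1+\alpha)}{1+4\alpha}=2 \iff 1+\alpha=1+4\alpha \iff \alpha=0 ,
\]
which is impossible. Concretely, the entries of $c\,t^\top$ coupling the $(1,1)$ and $(2,2)$ coordinates are $2/(1+\alpha)$ and $1\cdot 2/(1+4\alpha)\cdot$ in transposed positions, namely $c_{11}t_{22}=2/(1+\alpha)$ versus $c_{22}t_{11}=2/(1+4\alpha)$. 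These differ for every $\alpha>0$.

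Nonconservativity then follows. A $C^1$ gradient field has a symmetric Jacobian everywhere, but this field's Jacobian fails to be symmetric at $W=0$. There is no real obstacle in this argument. The only point requiring care is tracking the sign and factor of $i$ when combining the two conjugate modes, which is why I verify the evenness of $\Phi_r$ and the oddness of $J$ explicitly.
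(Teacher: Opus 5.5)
Your proposal is correct and follows the paper's proof: after vectorizing, the Jacobian is the rank-one matrix $-\Phi_r(\sigma T)\cos\ip{T}{W}_F\,\operatorname{vec}(J(T))\operatorname{vec}(T)^\top$, which is symmetric only if $J(T)=\diag\bigl((1+\sigma^2/r)^{-1},\,2(1+4\sigma^2/r)^{-1}\bigr)$ is parallel to $T=\diag(1,2)$, and it is not. Your derivation of the field formula from the two conjugate exponential modes and your entrywise witness $c_{11}t_{22}\ne c_{22}t_{11}$ fill in details the paper leaves implicit; note that the fragment just before ``namely'' is garbled, though the ``namely'' clause itself is correct.
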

\begin{proof}
Recall that $J(T)=(I+\sigma^2TT^\top/r)^{-1}T$. After vectorization, differentiating \cref{eq:witness} gives
\begin{equation}
Dg_{r,\sigma}^f(W)=-\Phi_r(\sigma T)\cos\ip{T}{W}_F\,\operatorname{vec}(J(T))\operatorname{vec}(T)^\top.
\end{equation}
A nonzero rank-one matrix $uv^\top$ is symmetric only if $u$ and $v$ are parallel, since its column and row spaces are spanned by $u$ and $v$, respectively, and these spaces coincide for a symmetric matrix. Here,
\begin{equation}
J(T)=\diag\left(\frac{1}{1+\sigma^2/r},\frac{2}{1+4\sigma^2/r}\right),
\end{equation}
which is not a scalar multiple of $T=\diag(1,2)$ for finite $r$ and $\sigma>0$. The displayed Jacobian is therefore nonsymmetric whenever $\cos\ip{T}{W}_F\ne0$.
\end{proof}

\subsubsection{Proof of Proposition~\ref{prop:unstable}}
\begin{proof}[Proof of \cref{prop:unstable}]
Let $t_q=\operatorname{vec}(T_q)$. The Hessian of $f$ at zero is
\begin{equation}
-t_1t_1^\top-\frac14t_2t_2^\top-\varepsilon I_4\prec0.
\end{equation}
Smoothing multiplies the first two cosine coefficients by $\Phi_1(T_1)=\Phi_1(T_2)=1/(3\sqrt2)$ and every coordinate cosine by $\Phi_1(e_{ij})=1/\sqrt2$:
\begin{equation}
\nabla^2F_{1,1}(0)=-\frac{1}{3\sqrt2}\left(t_1t_1^\top+\frac14t_2t_2^\top\right)-\frac{\varepsilon}{\sqrt2}I_4\prec0.
\end{equation}
Hence, the origin remains a strict local maximum after smoothing.

We next compute the Jacobian of the population field. For a cosine mode with frequency $T$, \cref{eq:witness} gives the contribution
\begin{equation}
-\Phi_1(T)\operatorname{vec}(J(T))\operatorname{vec}(T)^\top
\end{equation}
at the origin. We have
\begin{equation}
J(T_1)=\begin{pmatrix}0&-1/3\\-1/3&1/6\end{pmatrix},
\qquad
J(T_2)=\begin{pmatrix}0&1/3\\1/3&1/6\end{pmatrix}.
\end{equation}
Therefore, the contribution of the first two cosine modes is
\begin{equation}
A_0=-\frac{1}{3\sqrt2}\left(\operatorname{vec}(J(T_1))t_1^\top+\frac14\operatorname{vec}(J(T_2))t_2^\top\right).
\end{equation}
Substituting the displayed matrices and expanding the determinant gives
\begin{equation}
\det(\lambda I_4-A_0)=\lambda^2\left(\lambda^2+\frac{35\sqrt2}{144}\lambda-\frac1{81}\right).
\end{equation}
The positive root of the quadratic factor is
\begin{equation}
\lambda_+(A_0)=\frac{\sqrt{1737}-35}{144\sqrt2}.
\end{equation}
For a coordinate mode $T=e_{ij}$, we have $J(e_{ij})=e_{ij}/2$ and $\Phi_1(e_{ij})=1/\sqrt2$. The corresponding Jacobian contribution is $-\varepsilon\operatorname{vec}(e_{ij})\operatorname{vec}(e_{ij})^\top/(2\sqrt2)$. Summing over the four coordinates gives $-\varepsilon I_4/(2\sqrt2)$. This scalar shift subtracts $\varepsilon/(2\sqrt2)$ from every eigenvalue of $A_0$, so the larger eigenvalue of the full Jacobian is
\begin{equation}
\frac{\sqrt{1737}-35}{144\sqrt2} -\frac{\varepsilon}{2\sqrt2},
\end{equation}
which is positive at $\varepsilon=0.01$.
\end{proof}

\subsection{Proofs and bounds near quadratic objectives}\label{app:local}

\subsubsection{Proof of Proposition~\ref{prop:quadraticexact}}
\begin{proof}[Proof of \cref{prop:quadraticexact}]
Applying the antithetic identity gives
\begin{equation}
g_{r,\sigma}^f(W)=\E\left[E_r\ip{\nabla f(W)}{E_r}_F\right]=\nabla f(W),
\end{equation}
where the last equality follows from $\E[\operatorname{vec}(E_r)\operatorname{vec}(E_r)^\top]=I$.
\end{proof}

\subsubsection{Proof of Proposition~\ref{prop:smallradius}}
\begin{proof}[Proof of \cref{prop:smallradius}]
Taylor expansion of the antithetic update gives
\begin{equation}
g_{r,\sigma}^f(W)=\nabla f(W)+\frac{\sigma^2}{6}\E\left[E_rD^3f(W)[E_r,E_r,E_r]\right]+R_{r,\sigma}(W), \label{eq:antitheticthird}
\end{equation}
where identity covariance was used for the first term. The order-$\sigma^2$ expectation is
\begin{equation}
\E\left[E_rD^3f(W)[E_r,E_r,E_r]\right]=3\nabla\Delta f(W)+\frac{6}{r}\cD f(W). \label{eq:fourthmomentcontraction}
\end{equation}
This identity follows by writing $E_r=r^{-1/2}\sum_{q=1}^r a_qb_q^\top$ and applying the Gaussian fourth-moment formula \citep{isserlis1918formula} separately to the independent row and column factors. The terms that also occur for a dense Gaussian matrix give $3\nabla\Delta f$. The remaining product-law contribution gives $6\cD f/r$. Substitution into \cref{eq:antitheticthird} yields the two order-$\sigma^2$ terms in \cref{eq:smallradius}. Appendix~\ref{app:fourthmoments} provides a more in-depth version of this calculation, including the provenance of the coefficient $1/r$.

Finally, let $M_5=\sup_W\norm{D^5f(W)}_{\mathrm{op}}$. Taylor's theorem bounds the matrix-valued remainder in \cref{eq:antitheticthird} by
\begin{equation}
\norm{R_{r,\sigma}(W)}_F\leq \frac{\sigma^4M_5}{120}\E[\norm{E_r}_F^6].
\end{equation}
\end{proof}

\subsubsection{Nonasymptotic bounds under standard smoothness}\label{sec:stability}
For $d=mn$, the exact fourth moment of the perturbation norm is
\begin{equation}\label{eq:mu4standard}
\mu_4(r):=\mathbb E[\|E_r\|_F^4]=d(d+2)+\frac{2d(m+n+1)}{r}.
\end{equation}
\begin{lemma}[Field error under standard smoothness]\label{lem:standardfield}
Suppose $h$ is $L$-smooth. Then, for every $W$,
\begin{equation}
\norm{g_{r,\sigma}^h(W)-\nabla h(W)}_F
\le \frac{L\sigma}{2}\mu_4(r)^{3/4}. \label{eq:Lsmoothfield}
\end{equation}
If, in addition, the Hessian is $M$-Lipschitz as an operator, meaning that $\norm{\nabla^2h(W)-\nabla^2h(V)}_{\mathrm{op}}\le M\norm{W-V}_F$ for all $W,V$, then
\begin{equation}
\norm{g_{r,\sigma}^h(W)-\nabla h(W)}_F
\le \frac{M\sigma^2}{6}\mu_4(r). \label{eq:Hessianfield}
\end{equation}
\end{lemma}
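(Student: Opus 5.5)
The plan is to work with the antithetic representation of the mean field and reduce both bounds to a scalar Taylor remainder multiplied by $\|E_r\|_F$. Since $E_r$ and $-E_r$ have the same law, \cref{eq:field} can be rewritten as
\begin{equation}
g_{r,\sigma}^h(W)=\E\left[E_r\,\frac{h(W+\sigma E_r)-h(W-\sigma E_r)}{2\sigma}\right].
\end{equation}
Identity covariance gives $\E[E_r\ip{\nabla h(W)}{E_r}_F]=\nabla h(W)$, as in the proof of \cref{prop:quadraticexact}. Subtracting, I will write
\begin{equation}
g_{r,\sigma}^h(W)-\nabla h(W)=\E[E_r\,\rho(E_r)],\qquad
\rho(E)=\frac{h(W+\sigma E)-h(W-\sigma E)}{2\sigma}-\ip{\nabla h(W)}{E}_F.
\end{equation}
Jensen's inequality for the Frobenius norm then gives
\begin{equation}
\|g_{r,\sigma}^h(W)-\nabla h(W)\|_F\le\E\bigl[\|E_r\|_F\,|\rho(E_r)|\bigr].
\end{equation}
The remaining work is to bound $|\rho(E)|$ by a power of $\|E\|_F$ and to control the resulting moment using \cref{eq:mu4standard}. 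That moment is finite, so all the expectations involved are well defined, given that $h$ grows at most quadratically under $L$-smoothness.

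For the $L$-smooth case, I will apply the standard descent-lemma bound separately at $+\sigma E$ and at $-\sigma E$:
\begin{equation}
\bigl|h(W\pm\sigma E)-h(W)\mp\sigma\ip{\nabla h(W)}{E}_F\bigr|\le \frac{L\sigma^2}{2}\|E\|_F^2.
\end{equation}
Subtracting the minus case from the plus case cancels $h(W)$ and doubles the linear term. The resulting error is at most $L\sigma^2\|E\|_F^2$, and dividing by $2\sigma$ gives $|\rho(E)|\le L\sigma\|E\|_F^2/2$. The field error is therefore at most $\tfrac{L\sigma}{2}\E\|E_r\|_F^3$. Lyapunov's inequality gives $\E\|E_r\|_F^3\le(\E\|E_r\|_F^4)^{3/4}=\mu_4(r)^{3/4}$, which is \cref{eq:Lsmoothfield}.

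For the Hessian-Lipschitz case, I will use the second-order Taylor remainder in integral form along the segment $W+s\sigma E$:
\begin{equation}
\bigl|h(W\pm\sigma E)-h(W)\mp\sigma\ip{\nabla h(W)}{E}_F-\tfrac{\sigma^2}{2}\ip{E}{\nabla^2h(W)E}_F\bigr|\le\frac{M\sigma^3}{6}\|E\|_F^3.
\end{equation}
The quadratic terms are even in $E$, so they cancel in the antithetic difference. This is the same cancellation that underlies \cref{prop:quadraticexact}. The difference therefore deviates from $2\sigma\ip{\nabla h(W)}{E}_F$ by at most $M\sigma^3\|E\|_F^3/3$, and hence $|\rho(E)|\le M\sigma^2\|E\|_F^3/6$. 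Multiplying by $\|E_r\|_F$ and taking expectations gives $\tfrac{M\sigma^2}{6}\E\|E_r\|_F^4=\tfrac{M\sigma^2}{6}\mu_4(r)$, which is \cref{eq:Hessianfield}.

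I do not expect a real obstacle. The only delicate points are to check that the remainder constants combine correctly after the antithetic subtraction, since the two sides add rather than cancel, and to justify the moment formula \cref{eq:mu4standard}. That formula is stated as given; if needed, it follows from the fourth-moment computation of \cref{app:fourthmoments} applied to $f=\|\cdot\|_F^4$.
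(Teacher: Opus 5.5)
Your proposal is correct and follows the paper's proof essentially step for step. Both rewrite the field antithetically, subtract the identity-covariance term, and apply the two-sided first-order bound (respectively the second-order Taylor bound, where the even quadratic terms cancel) at $W\pm\sigma E$, giving scalar remainders $L\sigma\|E\|_F^2/2$ and $M\sigma^2\|E\|_F^3/6$; both then finish with $\mathbb{E}[\|E_r\|_F^3]\le\mu_4(r)^{3/4}$ and $\mathbb{E}[\|E_r\|_F^4]=\mu_4(r)$.
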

\begin{proof}
For any matrix $E$, $L$-smoothness gives
\begin{equation}
\left|\frac{h(W+\sigma E)-h(W-\sigma E)}{2\sigma}-\ip{\nabla h(W)}{E}_F\right|\le\frac{L\sigma}{2}\norm{E}_F^2.
\end{equation}
Multiplying by $E=E_r$, taking expectations, and using identity covariance yields \cref{eq:Lsmoothfield}, since $\E[\norm{E_r}_F^3]\le\mu_4(r)^{3/4}$. If the Hessian is $M$-Lipschitz, the second-order Taylor remainders at $W\pm\sigma E$ are bounded by $M\sigma^3\norm{E}_F^3/6$. The quadratic terms cancel in the antithetic difference, leaving
\begin{equation}
\left|\frac{h(W+\sigma E)-h(W-\sigma E)}{2\sigma}-\ip{\nabla h(W)}{E}_F\right|\le\frac{M\sigma^2}{6}\norm{E}_F^3.
\end{equation}
Averaging after multiplication by $E_r$ gives \cref{eq:Hessianfield}.
\end{proof}

\subsection{Complete covariance calculation}\label{app:variance}

The total-variance formula in \cref{thm:variance} follows from the complete entrywise covariance below.

\begin{theorem}[Exact finite-rank covariance]
\label{detail:thm:variance}
For all indices $i,k\in[m]$ and $j,\ell\in[n]$,
\begin{equation}
\Cov(X_{ij},X_{k\ell})
=\norm{G}_F^2\delta_{ik}\delta_{j\ell}+G_{ij}G_{k\ell}+\frac2r\left[
\delta_{ik}(G^\top G)_{j\ell}
+\delta_{j\ell}(GG^\top)_{ik}
+G_{i\ell}G_{kj}
\right].
\label{eq:covariance}
\end{equation}
Define the variance factor
\begin{equation}
\kappa_r:=mn+1+\frac{2(m+n+1)}{r}. \label{eq:variancefactor}
\end{equation}
Then
\begin{equation}
\E[\norm{X-G}_F^2] =\kappa_r\norm{G}_F^2. \label{eq:mse}
\end{equation}
For independent copies $X_1,\ldots,X_N$ and their average $\widehat G_N=\frac1N\sum_{s=1}^NX_s$,
\begin{equation}
\E[\norm{\widehat G_N-G}_F^2]=\frac{1}{N}\E[\norm{X-G}_F^2]=\frac{\kappa_r}{N}\norm{G}_F^2. \label{detail:eq:populationmse}
\end{equation}
\end{theorem}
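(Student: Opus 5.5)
The plan is to compute the fourth moments of $E_r$ directly and contract them against $G$. Since $X=E_r\langle E_r,G\rangle_F$ and $\E[X]=G$ by identity covariance, we have
\[
\Cov(X_{ij},X_{k\ell})=\sum_{p,q}G_{pq}G_{p'q'}\,\E[E_{ij}E_{pq}E_{k\ell}E_{p'q'}]\;-\;G_{ij}G_{k\ell},
\]
with the double sum running over $(p,q)$ and $(p',q')$. Everything therefore reduces to the fourth-moment tensor of $E_r=r^{-1/2}\sum_s a_sb_s^\top$.

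Expanding the four entries gives
\[
\E[E_{ij}E_{pq}E_{k\ell}E_{p'q'}]=r^{-2}\sum_{s_1,\dots,s_4}\E\Bigl[\textstyle\prod_t a_{s_t,\cdot}\Bigr]\,\E\Bigl[\textstyle\prod_t b_{s_t,\cdot}\Bigr],
\]
because the $a$'s and $b$'s are independent. Only two kinds of index patterns survive, since odd moments vanish.

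\emph{Paired indices} ($s_1=s_2\ne s_3=s_4$ and the other two pairings). There are $r(r-1)$ choices per pairing. Each contributes products of Kronecker deltas in which the row and column indices are paired consistently, exactly the Gaussian (Isserlis) pattern.

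\emph{All four equal.} There are $r$ such choices. Here Isserlis is applied separately to $a$ and to $b$, which gives $3\times 3=9$ pairing combinations. The $3$ "matched" ones, where rows and columns use the same pairing, reproduce the Gaussian terms. The $6$ "mismatched" ones are new.

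Collecting, the fourth moment equals the dense-Gaussian Isserlis tensor, with coefficient $r(r-1)/r^2+r/r^2=1$ for the matched part, plus $r^{-1}$ times the sum of the six mismatched delta products. This is the same bookkeeping used for \cref{eq:fourthmomentcontraction}, and I would cite it from \cref{app:fourthmoments}.

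Next, contract with $G\otimes G$. The dense part gives $\|G\|_F^2\delta_{ik}\delta_{j\ell}+2G_{ij}G_{k\ell}$, and subtracting $G_{ij}G_{k\ell}$ leaves the first two terms of \cref{eq:covariance}. For the mismatched part, take a row pairing such as $(i,k)(p,p')$ with a column pairing $(j,q)(\ell,q')$. Each of the six combinations contracts to one of
\[
\delta_{ik}(G^\top G)_{j\ell},\qquad \delta_{j\ell}(GG^\top)_{ik},\qquad G_{i\ell}G_{kj},
\]
and each of these appears exactly twice. This gives the factor $2/r$.

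I expect this enumeration of the six mismatched pairings, with the right multiplicities, to be the main obstacle. I would check it by listing the three row pairings against the three column pairings in a $3\times3$ table and striking the diagonal. As a sanity check, taking $m=n=1$ gives $\E[E^4]=3+6/r$, which matches the scalar product-normal moment $(3r^2+6r)/r^2$.

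Finally, set $k=i$, $\ell=j$ and sum to get the trace:
\[
mn\|G\|_F^2+\|G\|_F^2+\frac2r\bigl(m\|G\|_F^2+n\|G\|_F^2+\|G\|_F^2\bigr)=\kappa_r\|G\|_F^2,
\]
using $\sum_{ij}G_{ij}G_{ij}=\|G\|_F^2$ for the last term. The population statement \cref{detail:eq:populationmse} then follows because the $X_s-G$ are independent and centered, so the cross terms vanish and the MSE of the average is $N^{-1}$ times the single-direction MSE.
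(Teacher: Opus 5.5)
Your proposal is correct and follows the paper's proof essentially step for step. You substitute $E_r=r^{-1/2}\sum_s a_sb_s^\top$, apply Isserlis separately to the independent row and column factors using the same matched/mismatched bookkeeping as in \cref{app:fourthmoments}, and contract against $G\otimes G$; you then subtract $G_{ij}G_{k\ell}$, take the trace, and use independence for the $1/N$ factor. Your explicit $3\times3$ enumeration, in which each of $\delta_{ik}(G^\top G)_{j\ell}$, $\delta_{j\ell}(GG^\top)_{ik}$ and $G_{i\ell}G_{kj}$ arises exactly twice, checks out and fills in the contraction step that the paper leaves implicit.
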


\begin{proof}
Substitute $E_r=r^{-1/2}\sum_{q=1}^ra_qb_q^\top$ into $\E[X_{ij}X_{k\ell}]$ and apply the Gaussian fourth-moment identity to the independent factors $a_q$ and $b_q$. Summing the resulting fourth-moment terms against the entries of $G$ gives
\begin{equation}
\E[X_{ij}X_{k\ell}]=\norm{G}_F^2\delta_{ik}\delta_{j\ell}+2G_{ij}G_{k\ell}+\frac2r\left[\delta_{ik}(G^\top G)_{j\ell}+\delta_{j\ell}(GG^\top)_{ik}+G_{i\ell}G_{kj}\right].
\end{equation}
Since $\E[X]=G$, subtracting $G_{ij}G_{k\ell}$ proves \cref{eq:covariance}. Taking $i=k$, $j=\ell$, and summing yields
\begin{equation}
\tr\Cov(X)=mn\norm{G}_F^2+\norm{G}_F^2+\frac2r\left[m\norm{G}_F^2+n\norm{G}_F^2+\norm{G}_F^2\right],
\end{equation}
which is \cref{eq:mse}. Independence gives the factor $1/N$ in \cref{eq:populationmse}.
\end{proof}

Dense Gaussian directions and Gaussian-product directions share the variance factor $mn+1$. Finite rank adds only the product-law correction $2(m+n+1)/r$. The relative variance increase of finite-rank EGGROLL over dense Gaussian ES in the affine model is therefore
\begin{equation}
\rho_{m,n,r}=\frac{2(m+n+1)}{r(mn+1)}. \label{detail:eq:relative}
\end{equation}
For a square matrix of width $w$, the rank-one relative increase is $2(2w+1)/(w^2+1)=O(1/w)$. \Cref{tab:qwenvariance} evaluates this quantity at the hidden widths of the Qwen3 models used in this work \citep{yang2025qwen3}. These widths describe, for example, their square attention-output matrices. The $16\times16$ matrices in the transformer audit are extracted sub-blocks chosen to make the finite-rank surcharge measurable, not complete model weight matrices.

\begin{table}[tbp]
\centering
\papertablestyle
\caption{Rank-one variance increase over dense Gaussian ES for square attention-output matrices at Qwen3 hidden widths.}
\label{tab:qwenvariance}
\begin{tabular*}{\textwidth}{@{\extracolsep{\fill}}lrr@{}}
\toprule
Model & Width $w$ & Relative increase \\
\midrule
Qwen3-0.6B & $1024$ & $0.391\%$ \\
Qwen3-1.7B & $2048$ & $0.195\%$ \\
Qwen3-4B   & $2560$ & $0.156\%$ \\
Qwen3-8B   & $4096$ & $0.098\%$ \\
Qwen3-14B  & $5120$ & $0.078\%$ \\
\bottomrule
\end{tabular*}
\end{table}

A conclusion to draw is that a rank-one law can be singular with respect to a dense Gaussian law while having nearly the same affine-model variance. Singularity describes the support of the perturbation law, whereas estimator variance depends on a particular combination of its fourth moments. The rank-dependent contribution grows as $m+n$, while the shared dense-Gaussian contribution grows as $mn$.

For a network, independence confines the additional finite-rank variance to within-block terms. \Cref{app:network} gives the exact formula and shows how each block is weighted by its local gradient energy.

\subsection{Fixed-radius rank expansion}
\label{sec:expansion}

This appendix proves the fixed-radius expansion stated in \cref{thm:rankexpansion}. We hold $\sigma$ fixed and expand the finite-rank mean field around its dense-Gaussian limit as $r\to\infty$, identifying both the leading $1/r$ correction and the $O(r^{-2})$ remainder.

For a fixed frequency $T$, expanding the exact characteristic function as $r\to\infty$ using the matrix-logarithm series \citep{higham2008functions} gives
\begin{equation}
\log\Phi_r(T) =-\frac12\norm{T}_F^2 +\frac{1}{4r}\tr\left((TT^\top)^2\right) -\frac{1}{6r^2}\tr\left((TT^\top)^3\right) +O(r^{-3}). \label{eq:logexpansion}
\end{equation}
The leading term is the logarithm of the characteristic function of a dense Gaussian and depends only on $\norm{T}_F$. The first finite-rank correction also depends on how this norm is distributed across singular directions through $\tr((TT^\top)^2)$. Thus, the same anisotropy that produced the resolvent in \cref{sec:law} appears as the leading $1/r$ error at fixed radius.

The frequency-wise expansion must be strengthened before it can be integrated against a general objective. We therefore expand the score-weighted perturbation law itself and control its remainder in a polynomially weighted integral norm. This yields a uniform field bound for objectives of polynomial growth without requiring a Fourier density for $f$. For a matrix displacement $Z$, define the matrix-valued measure
\begin{equation}
\nu_{r,\sigma}(dZ):=\frac{Z}{\sigma^2}\,\mathbb P(\sigma E_r\in dZ).
\end{equation}
The population field can then be written as
\begin{equation}
g_{r,\sigma}^f(W)=\int f(W+Z)\,\nu_{r,\sigma}(dZ).
\end{equation}
Thus, an expansion of $\nu_{r,\sigma}$ immediately becomes an expansion of the update field. For these kernels, write $\widehat\nu(T)=\int e^{i\ip{T}{Z}_F}\nu(dZ)$, with a positive sign because the field evaluates $f(W+Z)$. The transform of the update measure is then the multiplier derived in \cref{sec:law}:
\begin{equation}\label{eq:multiplier-repeated}
\widehat\nu_{r,\sigma}(T)=i\Phi_r(\sigma T)\left(I_m+\frac{\sigma^2}{r}TT^\top\right)^{-1}T.
\end{equation}
Recall from \cref{sec:law} that $\Phi_r(\sigma T)=\det(I_m+\sigma^2TT^\top/r)^{-r/2}$. We expand both rank-dependent factors in powers of $1/r$:
\begin{equation}\label{eq:expansions}
\begin{gathered}\Phi_r(\sigma T)=e^{-\sigma^2\norm{T}_F^2/2}\left[1+\frac{\sigma^4}{4r}\tr\left((TT^\top)^2\right)+O(r^{-2})\right],\\ \left(I_m+\frac{\sigma^2}{r}TT^\top\right)^{-1}=I_m-\frac{\sigma^2}{r}TT^\top+O(r^{-2}).\end{gathered}
\end{equation}
Consequently,
\begin{equation}
\widehat\nu_{r,\sigma}(T)=i e^{-\sigma^2\norm{T}_F^2/2}T+\frac{1}{r}\widehat\beta_\sigma(T)+O(r^{-2}),
\end{equation}
where the matrix-valued correction kernel $\beta_\sigma$ is defined by
\begin{equation}
\widehat\beta_\sigma(T)=i e^{-\sigma^2\norm{T}_F^2/2}\left[\frac{\sigma^4}{4}\tr\left((TT^\top)^2\right)T-\sigma^2TT^\top T\right],
\label{eq:Bsymbol}
\end{equation}
and its operation on the objective is denoted by
\begin{equation}
\cB_\sigma f(W):=\int f(W+Z)\,\beta_\sigma(dZ).
\end{equation}
The two terms in \cref{eq:Bsymbol} have distinct origins. The term containing $\tr((TT^\top)^2)T$ records the difference between finite-rank and dense-Gaussian smoothing, while the term containing $TT^\top T$ records the score mismatch identified in \cref{sec:law}. The main text states the resulting uniform rank expansion. The constants may depend on $m,n,p,\sigma$, and $C_f$.

\subsection{Proof of Theorem~\ref{thm:rankexpansion}}

\begin{proof}[Proof of \cref{thm:rankexpansion}]
We expand the update kernel in the reciprocal rank $t=1/r$ and bound the remainder before integrating against $f$. First set $\sigma=1$ and write $D=mn$. For $t>0$, define
\begin{equation}
\phi_t(T)=\det(I_m+tTT^\top)^{-1/(2t)}, \qquad A_t(T)=(I_m+tTT^\top)^{-1}, \qquad m_t(T)=i\phi_t(T)A_t(T)T.
\end{equation}
At $t=0$, set $\phi_0(T)=e^{-\norm{T}_F^2/2}$ and $A_0(T)=I_m$. Then $m_{1/r}=\widehat\nu_{r,1}$, $m_0=\widehat\nu_{\infty,1}$, and $\partial_t m_0=\widehat\beta_1$. These functions are smooth at $t=0$ because
\begin{equation}
\ell_t(T):=\log\phi_t(T)=-\frac12\int_0^1\tr\left[TT^\top(I_m+utTT^\top)^{-1}\right]du. \label{eq:loginterpolation}
\end{equation}
In particular, $\partial_t\ell_0=\tr((TT^\top)^2)/4$ and $\partial_t A_0=-TT^\top$, recovering the two terms in \cref{eq:Bsymbol}.

To control the remainder, let $\alpha$ be a multi-index for the $D$ entries of $T$, so that $\partial_T^\alpha$ has total derivative order $|\alpha|$. For every fixed order $k$, there are constants $C_k$ and an integer $K_k$, independent of $0\le t\le1$, such that
\begin{equation}
\norm{\partial_T^\alpha\partial_t^2m_t(T)}_F\le C_k(1+\norm{T}_F)^{K_k}\phi_t(T), \qquad |\alpha|\le k. \label{eq:kernelderivativeenvelope}
\end{equation}
To verify this bound, differentiate \cref{eq:loginterpolation} under the integral. Every derivative of an inverse matrix is obtained from $\partial B^{-1}=-B^{-1}(\partial B)B^{-1}$. All inverse factors here have operator norm at most one, while derivatives of $TT^\top$ are polynomial in $T$. Thus every fixed mixed derivative of $\ell_t$ and $A_t$ has a polynomial bound, uniformly for $0\le t\le1$. Repeated differentiation of $e^{\ell_t}$ leaves the factor $\phi_t$ multiplied by products of these bounded derivatives. Applying the product rule to $m_t=i\phi_t A_tT$ proves \cref{eq:kernelderivativeenvelope}.

The determinant supplies a common decay bound. For $0<t\le t_0\le1$,
\begin{equation}
\phi_t(T)\le(1+t\norm{T}_F^2)^{-1/(2t)}\le(1+t_0\norm{T}_F^2)^{-1/(2t_0)}. \label{eq:kernelcommondecay}
\end{equation}
The first inequality follows by expanding the product of $1+t s_j^2$, where $s_j$ are the singular values of $T$. The second follows because $\log(1+ta)/t$ decreases with $t$ for $a\ge0$. The bound also holds at $t=0$ by continuity. Choose $t_0$ so that $1/t_0>K_k+D/2$. The right-hand side of \cref{eq:kernelderivativeenvelope}, with \cref{eq:kernelcommondecay}, is then square-integrable in $T$, uniformly over $0\le t\le t_0$.

Taylor's formula with an integral remainder therefore gives
\begin{equation}
\partial_T^\alpha(m_t-m_0-t\partial_t m_0)=t^2\int_0^1(1-v)\partial_T^\alpha\partial_t^2m_{vt}\,dv, \qquad \norm{\partial_T^\alpha(m_t-m_0-t\partial_t m_0)}_{L^2}\le C_k' t^2.
\end{equation}
For general fixed $\sigma>0$, the update multiplier is $\sigma^{-1}m_t(\sigma T)$. Rescaling $T$ and taking $t=1/r$ therefore proves, for $r\ge r_0:=\lceil1/t_0\rceil$,
\begin{equation}
\left\|\partial_T^\alpha\left(\widehat\nu_{r,\sigma}-\widehat\nu_{\infty,\sigma}-\frac1r\widehat\beta_\sigma\right)\right\|_{L^2}\le\frac{C_{\sigma,k}}{r^2}, \qquad |\alpha|\le k. \label{eq:kernelsobolev}
\end{equation}

Finally choose an integer $q>D/2$ and take $k=p+q$. Let $\rho_{r,\sigma}$ be the inverse transform of the remainder, using the positive-sign kernel convention of \cref{eq:multiplier-repeated}. Cauchy--Schwarz and Plancherel's identity \citep{folland1999real} imply
\begin{equation}
\int(1+\norm{Z}_F)^p\norm{\rho_{r,\sigma}(Z)}_F\,dZ\le C_{D,p,q}\sum_{|\alpha|\le p+q}\norm{\partial_T^\alpha\widehat\rho_{r,\sigma}}_{L^2}\le\frac{C_{\sigma,p}}{r^2}. \label{eq:weightedkernelremainder}
\end{equation}
Indeed, apply Cauchy--Schwarz with the square-integrable weight $(1+\norm{Z}_F)^{-q}$, then bound the weighted $L^2$ norm by the finitely many monomials $Z^\alpha\rho_{r,\sigma}$ of degree at most $p+q$. Plancherel converts those monomials into Fourier derivatives. The correction kernel $\beta_\sigma$ has a Schwartz density because its transform is a polynomial times a Gaussian. Uniqueness of Fourier transforms of finite measures identifies $\rho_{r,\sigma}(Z)\,dZ$ with $\nu_{r,\sigma}-\nu_{\infty,\sigma}-\beta_\sigma/r$.

Since $|f(W+Z)|\le 2C_f(1+\norm{W}_F)^p(1+\norm{Z}_F)^p$, integration against this remainder gives
\begin{equation}
\norm{g_{r,\sigma}^f(W)-g_{\infty,\sigma}^f(W)-\frac1r\cB_\sigma f(W)}_F\le \frac{2C_f C_{\sigma,p}}{r^2}(1+\norm{W}_F)^p,
\end{equation}
which proves \cref{eq:rankexpansion}. Multiplying the expansion by $r$ shows that $r(g_{r,\sigma}^f-g_{\infty,\sigma}^f)$ converges to $\cB_\sigma f$ in $\norm{\cdot}_{\infty,p}$. If this limit is nonzero, the difference cannot decay faster than $1/r$.
\end{proof}

The expansion identifies the dense-Gaussian limit and the leading finite-rank error $\cB_\sigma f/r$. The two terms in \cref{eq:Bsymbol} respectively arise from the change in smoothing law and the score mismatch, while the remaining error is $O(r^{-2})$.

Since $g_{\infty,\sigma}^f=\nabla F_{\infty,\sigma}$, both fields in this comparison use the same fixed perturbation radius. Their difference therefore isolates the effect of finite rank rather than the smoothing shared with dense Gaussian ES. In particular,
\begin{equation}
\norm{g_{r,\sigma}^f-\nabla F_{\infty,\sigma}}_{\infty,p}\le\frac{K_{\sigma,p}}{r}+\frac{R_{\sigma,p}}{r^2}, \qquad K_{\sigma,p}:=\norm{\cB_\sigma f}_{\infty,p}. \label{eq:rankfieldbound}
\end{equation}
For bounded $f$, this is an ordinary uniform bound. For polynomially growing $f$, the denominator in $\norm{\cdot}_{\infty,p}$ allows the field error to grow at the same polynomial rate with $W$. When $\cB_\sigma f$ is nonzero, the leading term does not vanish, so the $1/r$ rate cannot be improved in this norm. This strengthens the $O(1/r)$ comparison of \citet{sarkar2026evolution}: their Edgeworth argument establishes the rate, whereas the kernel expansion identifies its coefficient and leaves an $O(r^{-2})$ residual. Appendix~\ref{app:convergence} combines the field-error bounds with the finite-population variance from \cref{sec:variance} to derive finite-iteration guarantees for approaching stationary points of the original objective, and, at fixed perturbation radius, of the dense-Gaussian-smoothed objective.

\section{Proofs for LOO-ROLL}\label{app:loo}

\subsection{Proof of Proposition~\ref{prop:looroll}}

\begin{proof}[Proof of \cref{prop:looroll}]
Conditional on $\mathcal F_t$, the term $E_{t,s}F_{t,s}/\sigma$ has expectation $g_{r,\sigma}^f(W_t)$. The baseline $\overline F_{t,-s}$ depends only on the other directions and is therefore conditionally independent of $E_{t,s}$. Centering gives $\mathbb E[E_{t,s}\overline F_{t,-s}\mid\mathcal F_t]=0$. Averaging over $s$ proves the claim.
\end{proof}

\subsection{Proof of Proposition~\ref{prop:loomse}}

\begin{proof}[Proof of \cref{prop:loomse}]
In the affine model, the constant fitness cancels from every centered score. Write $z_i=\operatorname{vec}(E_i)$, $v=\operatorname{vec}(G)$, and $y_i=z_i^\top v$. The vectors $z_i$ are independent, centered, and have identity covariance. With $\bar z=N^{-1}\sum_i z_i$ and $\bar y=N^{-1}\sum_i y_i$, the vectorized estimator is the sample covariance applied to $v$:
\begin{equation}
\operatorname{vec}(\widehat g^{\mathrm{LOO}}(W;N))=\frac{1}{N-1}\sum_{i=1}^N(z_i-\bar z)(y_i-\bar y).
\end{equation}
Expanding the two means gives the following centered decomposition:
\begin{equation}
\operatorname{vec}(\widehat g^{\mathrm{LOO}}(W;N))-v=\frac1N\sum_i(z_i y_i-v)-\frac{1}{N(N-1)}\sum_{i<j}(z_i y_j+z_j y_i).
\end{equation}
Every cross-pair summand has conditional mean zero given either of its two directions. Consequently, these summands are uncorrelated with the first sum and with one another, including when two pairs share an index. By \cref{thm:variance}, $\E[\norm{z_i y_i-v}^2]=\kappa_r\norm{v}^2$. Independence and identity covariance also give
\begin{equation}
\E[\norm{z_i y_j+z_j y_i}^2]=2\E[\norm{z_i}^2]\E[y_j^2]+2\norm{\E[z_i y_i]}^2=2(d+1)\norm{v}^2, \qquad i\ne j.
\end{equation}
There are $N(N-1)/2$ unordered pairs. Adding their variances therefore proves \cref{eq:loomse}. Substituting $2N$ for the LOO-ROLL population and dividing by the antithetic variance $\kappa_r\norm{G}_F^2/N$ proves \cref{eq:looequalmse}.
\end{proof}

\section{Additional LLM comparisons}\label{app:experimentdetails}
The fixed-corpus rank comparisons are reported in \cref{tab:fixedvalidation,tab:denselmranks}, while \cref{tab:loorolldense} reports the next-token prediction results.

\begin{table}
\centering
\papertablestyle
\caption{Fixed-validation changes in held-out task reward from rank one to rank eight, in percentage points, over five matched seeds. Positive values favor rank eight and negative values favor rank one. The evaluation set contains 256 examples and is shared across ranks and budgets. ``Matched evaluations'' uses the same ES iterations and model evaluations. ``Matched wall time'' extends rank-one training to the measured rank-eight runtime. Brackets give paired 95\% Student-$t$ intervals.}
\label{tab:fixedvalidation}
\begin{tabular*}{\textwidth}{@{\extracolsep{\fill}}llrr@{}}
\toprule
Model & Task & Matched evaluations & Matched wall time \\
\midrule
Qwen3-0.6B & Countdown & $-.60\%\;[-3.28,2.07]$ & $-1.57\%\;[-3.29,.15]$ \\
Qwen3-0.6B & GSM8K     & $+2.50\%\;[-.82,5.82]$ & $-5.70\%\;[-14.57,3.16]$ \\
Qwen3-1.7B & Countdown & $+1.50\%\;[-.29,3.30]$ & $+1.88\%\;[-.25,4.01]$ \\
Qwen3-1.7B & GSM8K     & $+.94\%\;[-.76,2.64]$ & $-1.33\%\;[-2.88,.23]$ \\
\bottomrule
\end{tabular*}
\end{table}

\begin{table}
\centering
\papertablestyle
\caption{Percentage change in next-token prediction loss from rank one to rank eight over five paired seeds and 256 fixed held-out sequences. Negative values indicate lower loss at rank eight. Brackets give paired 95\% Student-$t$ intervals. Bold entries exclude zero.}
\label{tab:denselmranks}
\begin{tabular*}{\textwidth}{@{\extracolsep{\fill}}lrr@{}}
\toprule
Model & Matched evaluations & Matched wall time \\
\midrule
Qwen3-0.6B & $\bm{-.58\%\;[-.86,-.30]}$ & $-.11\%\;[-.29,.07]$ \\
Qwen3-1.7B & $\bm{-2.23\%\;[-2.53,-1.92]}$ & $\bm{-.93\%\;[-1.34,-.53]}$ \\
Qwen3-4B   & $\bm{-1.76\%\;[-1.95,-1.57]}$ & $\bm{+1.92\%\;[1.48,2.35]}$ \\
Qwen3-8B   & $\bm{-2.18\%\;[-2.60,-1.77]}$ & $\bm{+2.14\%\;[1.95,2.33]}$ \\
\bottomrule
\end{tabular*}
\end{table}

\begin{table}[tbp]
\centering
\papertablestyle
\caption{Percentage change in fixed-validation negative log likelihood from antithetic EGGROLL to LOO-ROLL. Values are mean $\pm$ sample standard deviation over five paired seeds, computed as $100(\text{LOO-ROLL}-\text{EGGROLL})/\text{EGGROLL}$. Negative values indicate lower loss and favor LOO-ROLL, while positive values favor EGGROLL. Bold marks mean improvements for LOO-ROLL.}
\label{tab:loorolldense}
\begin{tabular*}{\textwidth}{@{\extracolsep{\fill}}lrrr@{}}
\toprule
Model & Equal iterations & Equal evaluations & Equal wall time \\
\midrule
Qwen3-0.6B & $+0.20\%\pm0.44$ & $+0.11\%\pm0.30$ & $\bm{-0.55\%\pm0.46}$ \\
Qwen3-1.7B & $+0.21\%\pm0.58$ & $\bm{-0.10\%\pm0.34}$ & $\bm{-1.20\%\pm0.62}$ \\
Qwen3-8B & $+0.53\%\pm0.57$ & $+0.31\%\pm0.49$ & $\bm{-2.16\%\pm0.81}$ \\
\bottomrule
\end{tabular*}
\end{table}

\section{Network extensions and further theoretical results}\label{app:network}

The main results are stated for one matrix so that the finite-rank mechanism remains visible. A network objective couples many parameter matrices, but EGGROLL samples their perturbations independently. This section gives the corresponding joint resolvent, field-error, and variance statements.

\begin{corollary}[Network resolvent]\label{cor:networkresolvent}
Let $W=(W_1,\ldots,W_Q)$, where $W_q\in\R^{m_q\times n_q}$, and let the blocks $E_{r_q}^{(q)}$ be independent Gaussian-product perturbations. Define
\begin{equation}
\begin{gathered}
F_{\bm r,\sigma}(W)=\E\big[f(W_1+\sigma E_{r_1}^{(1)},\ldots,W_Q+\sigma E_{r_Q}^{(Q)})\big],\\
g_q(W)=\frac1\sigma\E\left[E_{r_q}^{(q)}f(W_1+\sigma E_{r_1}^{(1)},\ldots,W_Q+\sigma E_{r_Q}^{(Q)})\right].
\end{gathered}
\end{equation}
Let $\cL_q$ be the operator in \cref{eq:Ldef}, acting on the $q$th vector-field component and differentiating with respect to $W_q$. If $f$ admits the analogous Fourier representation and moment condition on the product parameter space, then
\begin{equation}
\left(I+\frac{\sigma^2}{r_q}\cL_q\right)g_q=\nabla_{W_q}F_{\bm r,\sigma}\quad\text{for every }q.\label{eq:networkresolvent}
\end{equation}
Equivalently, $g=J_{\bm r,\sigma}\nabla F_{\bm r,\sigma}$ for $J_{\bm r,\sigma}=\operatorname{diag}_q(I+\sigma^2\cL_q/r_q)^{-1}$. The operator $J_{\bm r,\sigma}$ is self-adjoint, positive, contractive, and firmly nonexpansive on the direct-sum $L^2$ space.
\end{corollary}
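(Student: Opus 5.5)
The plan is to repeat the Fourier-mode argument of \cref{thm:resolvent} on the product parameter space. A frequency is a tuple $T=(T_1,\ldots,T_Q)$ with $\ip{T}{W}=\sum_q\ip{T_q}{W_q}_F$. First, for a single mode $f_T(W)=e^{i\ip{T}{W}}$, independence of the blocks makes the joint characteristic function factor as $\prod_p\Phi_{r_p}(\sigma T_p)$. Hence $F_{\bm r,\sigma}$ applied to $f_T$ equals $\prod_p\Phi_{r_p}(\sigma T_p)\,f_T$, and $\nabla_{W_q}F_{\bm r,\sigma}=iT_q\prod_p\Phi_{r_p}(\sigma T_p)f_T$.

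Next, for $g_q$ we condition on all blocks $p\neq q$. Those factors contribute $\prod_{p\ne q}\Phi_{r_p}(\sigma T_p)$, and the $q$th block contributes $\sigma^{-1}\E[E^{(q)}e^{i\sigma\ip{T_q}{E^{(q)}}_F}]=(i\sigma)^{-1}\nabla\Phi_{r_q}(\sigma T_q)$. By \cref{thm:law} this gives
\[
g_q=i\Bigl(\prod_p\Phi_{r_p}(\sigma T_p)\Bigr)\Bigl(I_{m_q}+\tfrac{\sigma^2}{r_q}T_qT_q^\top\Bigr)^{-1}T_q\,f_T .
\]
The operator $\cL_q$ differentiates only in $W_q$, so on a mode $Ce^{i\ip{T}{W}}$ it acts as left multiplication of $C$ by $T_qT_q^\top$. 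Applying $I+\sigma^2\cL_q/r_q$ to $g_q$ therefore cancels the inverse and returns $\nabla_{W_q}F_{\bm r,\sigma}$ on each mode.

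To pass to general $f$, we integrate against $\mu_f(dT)$ exactly as in \cref{thm:resolvent}. The moment condition $\int(1+\|T\|^3)\,|\mu_f|(dT)<\infty$ justifies Fubini and the differentiation under the integral. For $\cL_q g_q$ we need a third moment, which suffices because the multiplier $T_qT_q^\top(I+\cdot)^{-1}T_q$ grows at most cubically, and the $\Phi$ factors are bounded by one. This yields \cref{eq:networkresolvent} pointwise for every $q$.

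For the operator properties, Plancherel on the product space turns $J_{\bm r,\sigma}$ into a frequency-wise block-diagonal multiplier $\operatorname{diag}_q(I+\sigma^2T_qT_q^\top/r_q)^{-1}$. Each block is real symmetric with spectrum in $(0,1]$. So at each frequency the multiplier is symmetric positive with norm at most one, hence self-adjoint, positive and contractive. Firm nonexpansiveness follows from $\ip{Jh}{h}\ge\|Jh\|^2$, which holds because $M\succeq M^2$ for a symmetric $M$ with $0\preceq M\preceq I$. The only delicate step is the integrability bookkeeping for $\cL_q g_q$, and it is handled as in the single-matrix case.
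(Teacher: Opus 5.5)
Your proposal is correct and follows the paper's proof. Block independence factors the joint characteristic function, and differentiating only the $q$th factor via \cref{thm:law} gives the multiplier $i\prod_p\Phi_{r_p}(\sigma T_p)(I_{m_q}+\sigma^2T_qT_q^\top/r_q)^{-1}T_q$; you compare it with $iT_q\prod_p\Phi_{r_p}(\sigma T_p)$ for $\nabla_{W_q}F_{\bm r,\sigma}$ and obtain the operator properties frequency-wise on each diagonal block, as in \cref{thm:resolvent}, with your integrability bookkeeping and the check $M\succeq M^2$ filling in details the paper leaves implicit.
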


\begin{proof}
Independence gives the joint characteristic function
\begin{equation}
\Phi_{\bm r}(T_1,\ldots,T_Q)=\prod_{q=1}^Q\Phi_{r_q}(T_q).
\end{equation}
Substitution of the Fourier representation of $f$, followed by differentiation of the $q$th factor and application of \cref{eq:char}, gives the multiplier
\begin{equation}
i\Phi_{\bm r}(\sigma\bm T)\left(I_{m_q}+\frac{\sigma^2T_qT_q^\top}{r_q}\right)^{-1}T_q.
\end{equation}
The multiplier of $\nabla_{W_q}F_{\bm r,\sigma}$ is $iT_q\Phi_{\bm r}(\sigma\bm T)$. The blockwise identity follows, and the remaining properties follow from \cref{thm:resolvent} on each diagonal block.
\end{proof}

For the smoothness bounds, write $d_q=m_qn_q$, $D=\sum_qd_q$, and $\mathbf E=(E_{r_1}^{(1)},\ldots,E_{r_Q}^{(Q)})$. Equip this product space with the direct-sum Frobenius norm $\norm{\mathbf E}^2=\sum_q\norm{E_{r_q}^{(q)}}_F^2$. Independence across blocks gives
\begin{equation}
\mu_4(\bm r):=\E[\norm{\mathbf E}^4]=D(D+2)+\sum_{q=1}^Q\frac{2d_q(m_q+n_q+1)}{r_q}.\label{eq:mu4network}
\end{equation}

\begin{corollary}[Network field error under standard smoothness]\label{cor:networkfield}
Let $h$ be a loss on the product of the block spaces, and define $g_{\bm r,\sigma}^h$ using the joint perturbation $\mathbf E$. If $h$ is $L$-smooth in the direct-sum norm, then
\begin{equation}
\norm{g_{\bm r,\sigma}^h(\mathbf W)-\nabla h(\mathbf W)}\le\frac{L\sigma}{2}\mu_4(\bm r)^{3/4}.\label{eq:networkLsmoothfield}
\end{equation}
If the Hessian is $M$-Lipschitz in the same norm, then
\begin{equation}
\norm{g_{\bm r,\sigma}^h(\mathbf W)-\nabla h(\mathbf W)}\le\frac{M\sigma^2}{6}\mu_4(\bm r).\label{eq:networkHessianfield}
\end{equation}
\end{corollary}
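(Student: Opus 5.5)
We will prove \cref{cor:networkfield} exactly as in \cref{lem:standardfield}, with the single-block perturbation replaced by the joint perturbation $\mathbf E$ and the Frobenius norm replaced by the direct-sum norm. The three ingredients are a pointwise Taylor bound for the antithetic difference, identity covariance of $\mathbf E$, and the moment formula \cref{eq:mu4network}.

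\textbf{Identity covariance.} Each block $E_{r_q}^{(q)}$ has identity covariance by the normalization in \cref{eq:perturbation}. The blocks are independent and centered, so the cross-block covariances vanish and $\E[\operatorname{vec}(\mathbf E)\operatorname{vec}(\mathbf E)^\top]=I_D$. Consequently, for every $\mathbf G$ in the product space, $\E[\mathbf E\ip{\mathbf G}{\mathbf E}]=\mathbf G$. The field $g_{\bm r,\sigma}^h$ is unchanged if the raw fitness is replaced by its antithetic symmetrization, because $\mathbf E$ and $-\mathbf E$ have the same law. Hence
\begin{equation}
g_{\bm r,\sigma}^h(\mathbf W)-\nabla h(\mathbf W)=\E\left[\mathbf E\left(\frac{h(\mathbf W+\sigma\mathbf E)-h(\mathbf W-\sigma\mathbf E)}{2\sigma}-\ip{\nabla h(\mathbf W)}{\mathbf E}\right)\right].
\end{equation}

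\textbf{Pointwise Taylor bounds.} Under $L$-smoothness in the direct-sum norm, the standard quadratic upper and lower bounds at $\mathbf W\pm\sigma\mathbf E$ show that the scalar bracket has absolute value at most $L\sigma\norm{\mathbf E}^2/2$. Under an $M$-Lipschitz Hessian, we expand each of $h(\mathbf W\pm\sigma\mathbf E)$ to second order. Each remainder is at most $M\sigma^3\norm{\mathbf E}^3/6$, and the Hessian terms cancel in the difference. Since we divide by $2\sigma$, the bracket is at most $M\sigma^2\norm{\mathbf E}^3/6$.

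\textbf{Moments.} Using $\norm{\E[\mathbf E\,\xi]}\le\E[\norm{\mathbf E}\,|\xi|]$, the two cases give bounds of $\frac{L\sigma}{2}\E\norm{\mathbf E}^3$ and $\frac{M\sigma^2}{6}\E\norm{\mathbf E}^4$, respectively. Jensen (or Lyapunov) gives $\E\norm{\mathbf E}^3\le\mu_4(\bm r)^{3/4}$, and \cref{eq:mu4network} supplies $\E\norm{\mathbf E}^4=\mu_4(\bm r)$.

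The only step needing care is \cref{eq:mu4network} itself. Write $\norm{\mathbf E}^2=\sum_q X_q$ with $X_q=\norm{E_{r_q}^{(q)}}_F^2$. Independence gives
\begin{equation}
\E\Bigl[\bigl(\textstyle\sum_q X_q\bigr)^2\Bigr]=\sum_q\E[X_q^2]+\sum_{q\ne q'}d_qd_{q'},
\end{equation}
using $\E[X_q]=d_q$. Substituting \cref{eq:mu4standard} for each $\E[X_q^2]$ and collecting terms, $\sum_q d_q(d_q+2)+\sum_{q\ne q'}d_qd_{q'}=D^2+2D$, and the remaining terms are the stated rank corrections. We take \cref{eq:mu4standard} as given; it follows from the Isserlis computation of \cref{app:fourthmoments}.
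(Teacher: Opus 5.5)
Your proposal is correct and takes the same route as the paper. The paper's proof just observes that the pointwise antithetic Taylor bounds of \cref{lem:standardfield} depend only on the norm of the full perturbation, with identity covariance handling the first-order term and \cref{eq:mu4network} supplying the fourth moment; your explicit antithetic symmetrization of the field and your derivation of \cref{eq:mu4network} from \cref{eq:mu4standard} via cross-block independence fill in details the paper leaves implicit.
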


\begin{proof}
The smoothness inequalities in \cref{lem:standardfield} depend only on the norm of the full perturbation. Identity covariance supplies the first-order term, and \cref{eq:mu4network} supplies the required fourth moment.
\end{proof}

\begin{corollary}[Network-level variance]\label{cor:network}
Let block $q=1,\ldots,Q$ have shape $m_q\times n_q$, rank $r_q$, independent perturbation, and gradient $G_q$. Write $D=\sum_qm_qn_q$ and $\norm{G}^2=\sum_q\norm{G_q}_F^2$. The concatenated single-direction estimator satisfies
\begin{equation}
\E[\norm{X-G}^2]=(D+1)\norm{G}^2+\sum_{q=1}^Q\frac{2(m_q+n_q+1)}{r_q}\norm{G_q}_F^2.\label{eq:networkmse}
\end{equation}
\end{corollary}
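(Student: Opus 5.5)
The plan is to reduce \cref{cor:network} to the single-block covariance in \cref{detail:thm:variance} by exploiting independence across blocks. In the affine model the single-direction estimator is $X=\mathbf E\langle\mathbf E,G\rangle$, where $\langle\mathbf E,G\rangle=\sum_q\langle E^{(q)},G_q\rangle_F$. Its $q$th block is therefore
\begin{equation*}
X_q=E^{(q)}\langle E^{(q)},G_q\rangle_F+\sum_{p\ne q}E^{(q)}\langle E^{(p)},G_p\rangle_F.
\end{equation*}
Because every block has identity covariance and the blocks are independent and centered, $\E[X]=G$. The quantity to compute is then $\E[\|X\|^2]-\|G\|^2$.

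I would expand
\begin{equation*}
\|X\|^2=\|\mathbf E\|^2\langle\mathbf E,G\rangle^2=\sum_{q}\|E^{(q)}\|_F^2\sum_{p,p'}\langle E^{(p)},G_p\rangle_F\langle E^{(p')},G_{p'}\rangle_F
\end{equation*}
and take expectations term by term, using independence across blocks. If $p\ne p'$, at least one of the two inner products involves a block different from the other two indices, so centering kills the term. This leaves only the terms with $p=p'$, which fall into two cases.

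In the first case $p\ne q$. Independence factorizes the term as
\begin{equation*}
\E\|E^{(q)}\|_F^2\cdot\E\langle E^{(p)},G_p\rangle_F^2=d_q\|G_p\|_F^2,
\end{equation*}
using identity covariance for both factors.

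In the second case $p=q$, and the term is the single-block quantity $\E[\|E^{(q)}\|_F^2\langle E^{(q)},G_q\rangle_F^2]=\E\|X^{(q)}_{\mathrm{single}}\|_F^2$. By \cref{detail:thm:variance}, summing \cref{eq:covariance} on the diagonal and adding back $\|G_q\|_F^2$, this equals
\begin{equation*}
\bigl(d_q+2+2(m_q+n_q+1)/r_q\bigr)\|G_q\|_F^2.
\end{equation*}

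Summing both cases gives
\begin{equation*}
\sum_q\sum_{p\ne q}d_q\|G_p\|^2+\sum_q(d_q+2)\|G_q\|^2+\sum_q\tfrac{2(m_q+n_q+1)}{r_q}\|G_q\|^2.
\end{equation*}
The first two sums combine to $\sum_p\|G_p\|^2(D-d_p)+\sum_q(d_q+2)\|G_q\|^2=(D+2)\|G\|^2$. Subtracting $\|G\|^2$ yields \cref{eq:networkmse}.

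The only delicate step is the bookkeeping in the term with $p=q$. There one must take care to reuse the exact single-block fourth moment, including the extra $2\|G_q\|^2$ from $\E[X_{ij}X_{k\ell}]$, rather than the centered covariance. One must also confirm that cross-block terms contribute only through second moments, so that no rank-dependent correction leaks across blocks. That absence of leakage is precisely the claim that the finite-rank surcharge is confined within blocks.
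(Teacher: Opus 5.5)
Your proof is correct and follows essentially the paper's route. In both, independence across blocks reduces every cross-block contribution to second moments, which agree with the Gaussian ones by identity covariance; the paper phrases this as vanishing cross-block fourth cumulants on top of the $D$-dimensional Gaussian term $(D+1)\|G\|^2$. The single-block theorem then supplies the within-block fourth moment. Your block-index expansion of $\|\mathbf E\|^2\langle\mathbf E,G\rangle^2$, with the uncentered $d_q+2$ bookkeeping and the recombination $(D-d_p)+(d_p+2)=D+2$, is a more explicit version of the same argument that avoids the cumulant language.
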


\begin{proof}
Concatenate the matrix blocks into one vector. The Gaussian part of the fourth moment produces $(D+1)\norm{G}^2$. Independence makes the cross-block fourth cumulants vanish, and \cref{thm:variance} gives the remaining within-block terms.
\end{proof}

The additional variance from block $q$ is weighted by $\norm{G_q}_F^2$. Its contribution therefore depends on both the dimensions of the block and the amount of gradient energy concentrated there. The same formula supplies a network-level relative-noise constant for the convergence results in Appendix~\ref{app:convergence}. One valid choice is
\begin{equation}
C^2=D+1+\max_q\frac{2(m_q+n_q+1)}{r_q}.\label{eq:networkC}
\end{equation}

\subsection{Secondary corrections suggested by the analysis}
\label{app:secondaryremedies}

The fixed-radius expansion and variance calculation suggest two further modifications. Although both are theoretically valid estimators, neither improved on ordinary EGGROLL in our model-scale studies. We record the results in this appendix rather than presenting them as viable practical contributions.

\subsubsection{Rank extrapolation}

The coefficient in \cref{thm:rankexpansion} can be cancelled by combining two ranks. For $s>r$, define
\begin{equation}
g_{r,s,\sigma}^{\mathrm{RE}}:=\frac{s g_{s,\sigma}^f-r g_{r,\sigma}^f}{s-r}. \label{eq:generalrichardson}
\end{equation}
The resulting bias can be bounded explicitly for any chosen rank ratio.

\begin{corollary}[Rank extrapolation at a prescribed tolerance]
\label{cor:rankrichardson}
Under the assumptions of \cref{thm:rankexpansion}, let $s=\alpha r$ for some $\alpha>1$ such that $s$ is an integer. For all sufficiently large $r$,
\begin{equation}
\norm{g_{r,s,\sigma}^{\mathrm{RE}}-g_{\infty,\sigma}^f}_{\infty,p}\le \frac{R_{\sigma,p}}{r^2}\frac{\alpha+1}{\alpha(\alpha-1)}. \label{eq:generalrichardsonbound}
\end{equation}
If $\delta=\varepsilon r^2/R_{\sigma,p}$, it is sufficient to choose
\begin{equation}
\alpha\ge\frac{\delta+1+\sqrt{(\delta+1)^2+4\delta}}{2\delta}. \label{eq:alphamin}
\end{equation}
\end{corollary}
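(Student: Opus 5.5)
The proof applies \cref{thm:rankexpansion} at ranks $r$ and $s=\alpha r$ and combines the two expansions so that the $1/r$ terms cancel. Write $\rho_r:=g_{r,\sigma}^f-g_{\infty,\sigma}^f-\frac1r\cB_\sigma f$ and likewise $\rho_s$. For $r\ge r_0$ (and hence $s\ge r_0$), the theorem gives $\norm{\rho_r}_{\infty,p}\le R_{\sigma,p}/r^2$ and $\norm{\rho_s}_{\infty,p}\le R_{\sigma,p}/s^2=R_{\sigma,p}/(\alpha^2r^2)$. Substituting $g_{r,\sigma}^f=g_{\infty,\sigma}^f+\cB_\sigma f/r+\rho_r$ and its rank-$s$ analogue into \cref{eq:generalrichardson} gives the numerator
\begin{equation*}
s g_{s,\sigma}^f-r g_{r,\sigma}^f=(s-r)g_{\infty,\sigma}^f+\cB_\sigma f-\cB_\sigma f+s\rho_s-r\rho_r .
\end{equation*}
The coefficient $\cB_\sigma f$ therefore cancels exactly, and we obtain $g^{\mathrm{RE}}_{r,s,\sigma}-g_{\infty,\sigma}^f=(s\rho_s-r\rho_r)/(s-r)$. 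This cancellation is the key step, and it uses only that $\cB_\sigma f$ is independent of the rank.

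Next we bound the remainder with the triangle inequality in $\norm{\cdot}_{\infty,p}$, which is a seminorm-type supremum and so satisfies it:
\begin{equation*}
\norm{g^{\mathrm{RE}}_{r,s,\sigma}-g_{\infty,\sigma}^f}_{\infty,p}\le\frac{1}{(\alpha-1)r}\left(\alpha r\frac{R_{\sigma,p}}{\alpha^2r^2}+r\frac{R_{\sigma,p}}{r^2}\right)=\frac{R_{\sigma,p}}{r^2}\cdot\frac{1/\alpha+1}{\alpha-1}=\frac{R_{\sigma,p}}{r^2}\frac{\alpha+1}{\alpha(\alpha-1)},
\end{equation*}
which is exactly \cref{eq:generalrichardsonbound}. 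The phrase ``sufficiently large $r$'' just means $r\ge r_0$, so that both ranks fall within the range where \cref{thm:rankexpansion} applies.

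For the tolerance statement, we require the right-hand side to be at most $\varepsilon$. With $\delta=\varepsilon r^2/R_{\sigma,p}$, this is equivalent to $\alpha+1\le\delta\alpha(\alpha-1)$, that is, $\delta\alpha^2-(\delta+1)\alpha-1\ge0$. The quadratic has roots $\bigl((\delta+1)\pm\sqrt{(\delta+1)^2+4\delta}\bigr)/(2\delta)$. The smaller root is negative, so the inequality holds for every $\alpha$ at least the larger root, which is \cref{eq:alphamin}. The larger root exceeds $1$, since the quadratic equals $-2<0$ at $\alpha=1$, so the constraint $\alpha>1$ is automatically satisfied. The main subtlety is bookkeeping rather than analysis: the remainder constant $R_{\sigma,p}$ must be uniform over all ranks $\ge r_0$, which is how \cref{thm:rankexpansion} is stated, and $s$ must be an integer so that $g_{s,\sigma}^f$ is defined.
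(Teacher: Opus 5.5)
Your proposal is correct and follows essentially the same route as the paper: you substitute the rank expansion at ranks $r$ and $\alpha r$ into the extrapolation formula so that $\mathcal{B}_\sigma f$ cancels, bound the remainder by the triangle inequality, and solve $\delta\alpha^2-(\delta+1)\alpha-1\ge0$ for the threshold. Your checks that the smaller root is negative and the larger root exceeds $1$ fill in details the paper leaves implicit.
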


\begin{proof}
Write $g_{k,\sigma}^f=g_{\infty,\sigma}^f+\cB_\sigma f/k+R_k$ with $\norm{R_k}_{\infty,p}\le R_{\sigma,p}/k^2$. Substitution into \cref{eq:generalrichardson} cancels $\cB_\sigma f$ and gives \cref{eq:generalrichardsonbound}. Solving $(\alpha+1)/(\alpha(\alpha-1))\le\delta$ gives \cref{eq:alphamin}.
\end{proof}

Cancellation of the mean-field term comes at the cost of estimating two fields, and the extrapolation weights amplify their sampling errors. If the two estimates are independent and use $N_r$ and $N_s$ directions with single-direction variances $V_r$ and $V_s$, then
\begin{equation}
\E\left[\norm{\widehat g_{r,s,\sigma}^{\mathrm{RE}}-g_{r,s,\sigma}^{\mathrm{RE}}}_F^2\right]=\frac{s^2V_s/N_s+r^2V_r/N_r}{(s-r)^2}. \label{eq:richardsonvariance}
\end{equation}
At a fixed budget of 256 fitness evaluations, transformer-block experiments compare ordinary rank-one EGGROLL with extrapolation using ranks $(1,2)$ and $(1,4)$. Across two blocks and three radii, the extrapolated estimator has respectively $7.63$--$8.67$ and $2.35$--$2.62$ times the MSE of ordinary EGGROLL. The leading bias is below the sampling resolution in these settings, so cancelling it does not repay the variance introduced by dividing the population between two ranks.

\subsubsection{Variance reduction with a reference point}

The dominant term in \cref{eq:mse} is shared with dense Gaussian ES. Following zeroth-order variance reduction for language-model fine-tuning \citep{pmlr-v235-gautam24a}, let $\xi$ denote the sampled data and define
\begin{equation}
Y_{r,\sigma}(W;\xi,E):=E\frac{\ell(W+\sigma E;\xi)-\ell(W-\sigma E;\xi)}{2\sigma}, \qquad g_{r,\sigma}(W):=\E\left[Y_{r,\sigma}(W;\xi,E)\right].
\end{equation}
At a reference point $\widetilde W$, let $\overline g_B(\widetilde W)$ be an independent unbiased average of $B$ updates and set
\begin{equation}
Y^{\mathrm{VR}}(W,\widetilde W):=Y_{r,\sigma}(W;\xi,E)-Y_{r,\sigma}(\widetilde W;\xi,E)+\overline g_B(\widetilde W). \label{eq:eggrollsvrg}
\end{equation}
The corresponding control-variate guarantee for Gaussian-product directions is as follows.

\begin{proposition}[Low-rank variance reduction]
\label{prop:eggrollsvrg}
The estimator in \cref{eq:eggrollsvrg} is unbiased for $g_{r,\sigma}(W)$. If
\begin{equation}
\E\left[\norm{Y_{r,\sigma}(W;\xi,E)-Y_{r,\sigma}(V;\xi,E)}_F^2\right]\le L_Y^2\norm{W-V}_F^2
\end{equation}
and $\E[\norm{\overline g_B(\widetilde W)-g_{r,\sigma}(\widetilde W)}_F^2]\le V_{\mathrm{ref}}/B$, then
\begin{equation}
\E\left[\norm{Y^{\mathrm{VR}}(W,\widetilde W)-g_{r,\sigma}(W)}_F^2\right]\le L_Y^2\norm{W-\widetilde W}_F^2+\frac{V_{\mathrm{ref}}}{B}. \label{eq:eggrollsvrgbound}
\end{equation}
\end{proposition}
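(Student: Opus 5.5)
The proof is a standard control-variate argument: first unbiasedness by linearity, then an orthogonal decomposition of the error into two independent centered pieces whose second moments are bounded separately.

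\textbf{Unbiasedness.} Let $(\xi,E)$ be a fresh draw, independent of the reference average $\overline g_B(\widetilde W)$. By definition, $\E[Y_{r,\sigma}(W;\xi,E)]=g_{r,\sigma}(W)$ and $\E[Y_{r,\sigma}(\widetilde W;\xi,E)]=g_{r,\sigma}(\widetilde W)$. Unbiasedness of the reference gives $\E[\overline g_B(\widetilde W)]=g_{r,\sigma}(\widetilde W)$. Linearity of expectation then yields
\[
\E[Y^{\mathrm{VR}}]=g_{r,\sigma}(W)-g_{r,\sigma}(\widetilde W)+g_{r,\sigma}(\widetilde W)=g_{r,\sigma}(W).
\]
The only requirement is integrability of each term, which the stated second-moment assumptions supply.

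\textbf{Error decomposition.} Write
\[
Y^{\mathrm{VR}}-g_{r,\sigma}(W)=U+V,
\]
where
\[
U=Y_{r,\sigma}(W;\xi,E)-Y_{r,\sigma}(\widetilde W;\xi,E)-\bigl(g_{r,\sigma}(W)-g_{r,\sigma}(\widetilde W)\bigr),\qquad
V=\overline g_B(\widetilde W)-g_{r,\sigma}(\widetilde W).
\]
Both pieces are centered. Moreover, $U$ depends only on the fresh draw $(\xi,E)$, while $V$ depends only on the independent reference samples. Hence
\[
\E\langle U,V\rangle_F=\langle \E U,\E V\rangle_F=0,
\qquad\text{so}\qquad
\E\|U+V\|_F^2=\E\|U\|_F^2+\E\|V\|_F^2.
\]

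\textbf{Bounding each piece.} For $U$, the variance of a random matrix is at most its raw second moment, since subtracting the mean minimizes mean-square deviation. Therefore
\[
\E\|U\|_F^2\le\E\|Y_{r,\sigma}(W;\xi,E)-Y_{r,\sigma}(\widetilde W;\xi,E)\|_F^2\le L_Y^2\|W-\widetilde W\|_F^2,
\]
by the mean-square Lipschitz assumption. For $V$, the assumption gives $\E\|V\|_F^2\le V_{\mathrm{ref}}/B$ directly. Adding the two bounds proves \cref{eq:eggrollsvrgbound}.

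The only point requiring care is the independence that kills the cross term. The reference average must be computed from samples independent of the current $(\xi,E)$, which the statement stipulates. If the reference point is chosen adaptively, the argument is run conditionally on $\widetilde W$ and on the history that determined it.
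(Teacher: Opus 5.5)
Your proof is correct and follows essentially the same approach as the paper: unbiasedness by linearity, where the reference field cancels, then orthogonality of the two centered errors from the independence of the reference average, with the variance bounded by the raw second moment and the reference assumption supplying the two terms. One small notational point: naming the reference error $V$ clashes with the generic point $V$ in the Lipschitz hypothesis.
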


\begin{proof}
Taking expectations in \cref{eq:eggrollsvrg} cancels the field at the reference point and leaves $g_{r,\sigma}(W)$. Independence of the reference estimate makes the centered errors orthogonal, and the two assumed bounds give \cref{eq:eggrollsvrgbound}.
\end{proof}

We tested this estimator on Qwen3-0.6B and Qwen3-1.7B with five paired seeds using next-token prediction loss and GSM8K rewards. \Cref{tab:svrgpilot} reports held-out reward relative to matched EGGROLL checkpoints. All four means favor ordinary EGGROLL, with the 1.7B next-token prediction loss difference excluding zero. These results do not establish a practical benefit from maintaining the reference estimate and its additional evaluations.

\begin{table}[tbp]
\centering
\papertablestyle
\caption{SVRG improvement over EGGROLL over five paired seeds. For next-token prediction, entries are the reduction in held-out loss, computed as EGGROLL loss minus SVRG loss. For GSM8K, entries are SVRG reward minus EGGROLL reward in percentage points. Positive values favor SVRG and negative values favor EGGROLL. Brackets give paired 95\% Student-$t$ intervals, and bold entries have intervals that exclude zero.}
\label{tab:svrgpilot}
\begin{tabular*}{\textwidth}{@{\extracolsep{\fill}}llr@{}}
\toprule
Model & Objective & Difference \\
\midrule
Qwen3-0.6B & Next-token prediction loss & $-.0161\;[-.0488,.0166]$ \\
Qwen3-0.6B & GSM8K & $-6.17\%\;[-17.43,5.08]$ \\
Qwen3-1.7B & Next-token prediction loss & $\bm{-.0197\;[-.0321,-.0072]}$ \\
Qwen3-1.7B & GSM8K & $-1.25\%\;[-2.98,.48]$ \\
\bottomrule
\end{tabular*}
\end{table}

\subsection{Convergence consequences of the field bounds}\label{app:convergence}

This appendix translates the mean-field error and finite-population variance bounds into standard stationarity guarantees under smoothness.

\begin{lemma}[Descent with accumulated field error]\label{thm:meanstationarity}
Suppose $h$ is $L$-smooth and bounded below by $h_\star$. Let $g=\nabla h+b$ and $x_{t+1}=x_t-\eta g(x_t)$. If $0<\eta\le1/L$, then
\begin{equation}
\frac1T\sum_{t=0}^{T-1}\norm{\nabla h(x_t)}^2\le\frac{2(h(x_0)-h_\star)}{\eta T}+\frac1T\sum_{t=0}^{T-1}\norm{b(x_t)}^2.\label{eq:meanstationarity}
\end{equation}
\end{lemma}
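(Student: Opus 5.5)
The plan is the standard descent-lemma argument for inexact gradient descent, with the inexactness absorbed by a Young-type inequality. First, $L$-smoothness of $h$ gives, for every iterate,
\begin{equation*}
h(x_{t+1})\le h(x_t)-\eta\ip{\nabla h(x_t)}{g(x_t)}+\frac{L\eta^2}{2}\norm{g(x_t)}^2.
\end{equation*}
Since $\eta\le1/L$, the last term is at most $\frac{\eta}{2}\norm{g(x_t)}^2$. Writing $\nabla_t=\nabla h(x_t)$ and $b_t=b(x_t)$, so that $g(x_t)=\nabla_t+b_t$, the per-step decrease is therefore bounded by
\begin{equation*}
h(x_{t+1})\le h(x_t)-\eta\ip{\nabla_t}{\nabla_t+b_t}+\frac{\eta}{2}\norm{\nabla_t+b_t}^2.
\end{equation*}

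The key step is to simplify the right-hand side exactly, using the polarization identity rather than a lossy Cauchy--Schwarz bound. Expanding $\norm{\nabla_t+b_t}^2=\norm{\nabla_t}^2+2\ip{\nabla_t}{b_t}+\norm{b_t}^2$, the cross terms $-\eta\ip{\nabla_t}{b_t}$ and $+\eta\ip{\nabla_t}{b_t}$ cancel. This leaves
\begin{equation*}
h(x_{t+1})\le h(x_t)-\frac{\eta}{2}\norm{\nabla_t}^2+\frac{\eta}{2}\norm{b_t}^2.
\end{equation*}
This cancellation is what produces the coefficient one in front of the accumulated field error, and it is the step where a careless bound would lose a constant.

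Next, rearrange to $\frac{\eta}{2}\norm{\nabla_t}^2\le h(x_t)-h(x_{t+1})+\frac{\eta}{2}\norm{b_t}^2$ and sum over $t=0,\ldots,T-1$. The sum telescopes to $h(x_0)-h(x_T)$, which is at most $h(x_0)-h_\star$ by the lower bound. Multiplying by $2/(\eta T)$ gives \cref{eq:meanstationarity}.

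No real obstacle is expected. The only point to check is that no assumption on $b$ is needed beyond its values at the iterates, since the inequality is deterministic and pathwise. This is what allows it to be combined later with \cref{lem:standardfield} or \cref{eq:rankfieldbound}, which bound $\norm{b}$ uniformly.
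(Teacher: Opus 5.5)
Your proof is correct and matches the paper's argument. The paper combines the descent inequality with the identity $-\langle a_t,a_t+b_t\rangle=-\tfrac12\|a_t\|^2-\tfrac12\|a_t+b_t\|^2+\tfrac12\|b_t\|^2$, discards the nonpositive term $-\tfrac{\eta}{2}(1-L\eta)\|a_t+b_t\|^2$, and then telescopes using $h(x_T)\ge h_\star$; this is algebraically the same as your bounding $L\eta^2/2$ by $\eta/2$ and then expanding the square.
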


\begin{proof}
Write $a_t=\nabla h(x_t)$ and $b_t=b(x_t)$. Smoothness and the identity $-\ip{a_t}{a_t+b_t}=-\norm{a_t}^2/2-\norm{a_t+b_t}^2/2+\norm{b_t}^2/2$ give
\begin{equation}
h(x_{t+1})\le h(x_t)-\frac\eta2\norm{a_t}^2-\frac\eta2(1-L\eta)\norm{a_t+b_t}^2+\frac\eta2\norm{b_t}^2.
\end{equation}
Summation proves the claim.
\end{proof}
Define
\begin{equation}
B_{r,\sigma}^{(L)}:=\frac{L\sigma}{2}\mu_4(r)^{3/4},\qquad B_{r,\sigma}^{(M)}:=\frac{M\sigma^2}{6}\mu_4(r).\label{eq:standardbiases}
\end{equation}
Combining \cref{thm:meanstationarity} with \cref{lem:standardfield} gives the following direct consequence.

\begin{theorem}[Mean-field convergence under standard smoothness]\label{thm:explicitmeanstationarity}
Suppose $h$ is $L$-smooth and bounded below by $h_\star$. Fix $\varepsilon>0$, use $\eta=1/L$, and set $H_0=h(x_0)-h_\star$. Either of the sufficient choices
\begin{equation}
0<\sigma\le\sigma_L(r,\varepsilon):=\frac{\sqrt2\,\varepsilon}{L\mu_4(r)^{3/4}},\label{eq:sigmastandardL}
\end{equation}
or, when the Hessian is $M$-Lipschitz,
\begin{equation}
0<\sigma\le\sigma_M(r,\varepsilon):=\left(\frac{3\sqrt2\,\varepsilon}{M\mu_4(r)}\right)^{1/2}\label{eq:sigmastandardM}
\end{equation}
ensures $T^{-1}\sum_{t=0}^{T-1}\norm{\nabla h(x_t)}^2\le\varepsilon^2$ for $T\ge\lceil4LH_0/\varepsilon^2\rceil$.
\end{theorem}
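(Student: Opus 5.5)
The plan is to combine \cref{thm:meanstationarity} with the field-error bounds of \cref{lem:standardfield}. The iteration uses the mean field $g=g_{r,\sigma}^h$; write it as $g=\nabla h+b$ with $b=g_{r,\sigma}^h-\nabla h$. (For a loss $h$ the update is a descent step, and the sign convention matches the lemma.) \cref{lem:standardfield} bounds the error uniformly in $x$. Under $L$-smoothness, $\norm{b(x)}\le B_{r,\sigma}^{(L)}$. When the Hessian is also $M$-Lipschitz, $\norm{b(x)}\le B_{r,\sigma}^{(M)}$. Consequently the averaged squared bias in \cref{eq:meanstationarity} is at most $(B_{r,\sigma}^{(\cdot)})^2$.

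With $\eta=1/L$, which satisfies $0<\eta\le1/L$, \cref{eq:meanstationarity} becomes
\[
\frac1T\sum_{t<T}\norm{\nabla h(x_t)}^2\le\frac{2LH_0}{T}+\bigl(B_{r,\sigma}^{(\cdot)}\bigr)^2 .
\]
I split the target $\varepsilon^2$ evenly between the two terms.

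For the optimization term, $T\ge\lceil4LH_0/\varepsilon^2\rceil$ gives $2LH_0/T\le\varepsilon^2/2$.

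For the bias term I need $(B^{(\cdot)})^2\le\varepsilon^2/2$, i.e. $B^{(\cdot)}\le\varepsilon/\sqrt2$.
\begin{itemize}
\item In the $L$ case this reads $L\sigma\mu_4(r)^{3/4}/2\le\varepsilon/\sqrt2$, which rearranges to $\sigma\le\sqrt2\,\varepsilon/(L\mu_4(r)^{3/4})=\sigma_L$.
\item In the $M$ case it reads $M\sigma^2\mu_4(r)/6\le\varepsilon/\sqrt2$, which gives $\sigma^2\le 6\varepsilon/(\sqrt2 M\mu_4(r))=3\sqrt2\,\varepsilon/(M\mu_4(r))$. This is exactly $\sigma_M$.
\end{itemize}
Both thresholds match the stated formulas, and adding the two halves yields $\varepsilon^2$.

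There is no real obstacle. The only point to check is that the bias bound of \cref{lem:standardfield} holds uniformly in $W$, so it applies at every iterate $x_t$; it does, because the lemma's bound is independent of $W$. The monotonicity in $\sigma$ of both bias bounds justifies the "$0<\sigma\le$" form.
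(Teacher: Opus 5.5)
Your proposal is correct and is essentially the paper's proof: the thresholds $\sigma_L$ and $\sigma_M$ are exactly what make the uniform field-error bound of \cref{lem:standardfield} at most $\varepsilon/\sqrt2$. Substituting this bound, $\eta=1/L$, and $T\ge 4LH_0/\varepsilon^2$ into \cref{eq:meanstationarity} gives $\varepsilon^2/2+\varepsilon^2/2$, and your arithmetic recovering both thresholds checks out.
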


\begin{proof}
The applicable radius makes the corresponding field-error bound at most $\varepsilon/\sqrt2$. Substitution into \cref{eq:meanstationarity} completes the proof.
\end{proof}

\subsubsection{Finite-population convergence}
\label{sec:dynamics}

A finite-population update contains sampling noise around the mean field and systematic error relative to a chosen reference gradient. The next result separates these contributions, allowing either the original objective or dense Gaussian ES at the same radius to serve as the reference. As in \cref{sec:stability}, we minimize a loss $h$ through $x_{t+1}=x_t-\eta\widehat g_t$.

\begin{theorem}[Nonconvex convergence under relative population noise]
\label{thm:nonconvex}
Suppose $h$ is $L$-smooth and bounded below by $h_\star$. Let $(\mathcal F_t)_{t\ge0}$ be the filtration generated by $x_0$ and all perturbation and evaluation randomness revealed before iteration $t$, so that $x_t$ is $\mathcal F_t$-measurable. At every iterate, assume
\begin{align}
\E[\widehat g_t\mid\mathcal F_t]&=g(x_t),
\label{eq:relativeoraclemean}\\
\E[\norm{\widehat g_t-g(x_t)}^2\mid\mathcal F_t]
&\le \frac{G^2+C^2\norm{g(x_t)}^2}{N}
\label{eq:relativeoraclevariance}
\end{align}
for $C,G\ge0$. If $0<\eta\le[L(1+C^2/N)]^{-1}$, then
\begin{equation}
\frac1T\sum_{t=0}^{T-1}\E[\norm{\nabla h(x_t)}^2] \le \frac{2(h(x_0)-h_\star)}{\eta T} +\frac1T\sum_{t=0}^{T-1}\E[\norm{g(x_t)-\nabla h(x_t)}^2]+\frac{L\eta G^2}{N}. \label{eq:nonconvex}
\end{equation}
\end{theorem}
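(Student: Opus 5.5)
We follow the one-step descent argument from the proof of \cref{thm:meanstationarity}, replacing the deterministic update by $\widehat g_t$ and conditioning on $\mathcal F_t$. Write $a_t=\nabla h(x_t)$, $g_t=g(x_t)$, and $b_t=g_t-a_t$.

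\textbf{Step 1: conditional descent.} $L$-smoothness gives
\begin{equation*}
h(x_{t+1})\le h(x_t)-\eta\ip{a_t}{\widehat g_t}+\frac{L\eta^2}{2}\norm{\widehat g_t}^2 .
\end{equation*}
Take $\E[\cdot\mid\mathcal F_t]$. By \cref{eq:relativeoraclemean}, the linear term becomes $-\eta\ip{a_t}{g_t}$. For the quadratic term, the bias--variance split together with \cref{eq:relativeoraclevariance} gives
\begin{equation*}
\E[\norm{\widehat g_t}^2\mid\mathcal F_t]\le\Bigl(1+\frac{C^2}{N}\Bigr)\norm{g_t}^2+\frac{G^2}{N}.
\end{equation*}

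\textbf{Step 2: polarization.} Apply the same identity as in \cref{thm:meanstationarity},
\begin{equation*}
-\ip{a_t}{g_t}=-\tfrac12\norm{a_t}^2-\tfrac12\norm{g_t}^2+\tfrac12\norm{b_t}^2 .
\end{equation*}
This yields
\begin{equation*}
\E[h(x_{t+1})\mid\mathcal F_t]\le h(x_t)-\frac\eta2\norm{a_t}^2-\frac\eta2\Bigl(1-L\eta\Bigl(1+\frac{C^2}{N}\Bigr)\Bigr)\norm{g_t}^2+\frac\eta2\norm{b_t}^2+\frac{L\eta^2G^2}{2N}.
\end{equation*}
The step-size condition $\eta\le[L(1+C^2/N)]^{-1}$ makes the coefficient of $\norm{g_t}^2$ nonpositive, so that term can be dropped.

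\textbf{Step 3: telescoping.} Take full expectations by the tower property, sum over $t=0,\ldots,T-1$, and use $h(x_T)\ge h_\star$. Multiplying by $2/(\eta T)$ then gives exactly \cref{eq:nonconvex}. The noise term has the form $L\eta G^2/N$ because the factor $\eta^2/2$ becomes $\eta$ after the rescaling.

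\textbf{Main obstacle.} The only delicate point is that the relative-noise part $C^2\norm{g_t}^2/N$ must be absorbed by the $-\frac\eta2\norm{g_t}^2$ term produced by polarization, not by $\norm{a_t}^2$. This is precisely why the step-size condition involves $1+C^2/N$. We also need integrability so that the conditional expectations are finite, which we take as implicit in the assumptions. With $b_t=g_t-a_t$, the bias term appears unchanged, so either the original objective or the dense-Gaussian smoothed objective can serve as $h$.
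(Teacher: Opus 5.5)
Your proposal is correct and matches the paper's proof step for step: conditional descent from $L$-smoothness with the bias--variance bound $\E[\norm{\widehat g_t}^2\mid\mathcal F_t]\le(1+C^2/N)\norm{g_t}^2+G^2/N$, the same polarization identity with $b_t=g_t-\nabla h(x_t)$, and the step-size condition to drop the $\norm{g_t}^2$ term. The final steps, taking full expectations and telescoping, are also the same. Your explicit tracking of the $2/(\eta T)$ rescaling and your integrability remark only spell out details that the paper leaves implicit.
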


\begin{proof}
Smoothness and conditional expectation give
\begin{equation}
\E_t[h(x_{t+1})] \le h(x_t)-\eta\ip{\nabla h(x_t)}{g(x_t)} +\frac{L\eta^2}{2}\left[\left(1+\frac{C^2}{N}\right)\norm{g(x_t)}^2+\frac{G^2}{N}\right].
\end{equation}
Write $g=\nabla h+b$ and use
\begin{equation}
-\ip{\nabla h}{g}=-\frac12\norm{\nabla h}^2-\frac12\norm{g}^2+\frac12\norm{b}^2.
\end{equation}
The step-size condition makes the total coefficient of $\norm{g}^2$ nonpositive. Taking full expectation and summing over $t$ gives \cref{eq:nonconvex}.
\end{proof}

We first measure stationarity with respect to the original loss $h$. In this comparison, the field error includes both smoothing and finite-rank score mismatch, and the standard smoothness bounds from \cref{sec:benign} control them together.

\begin{corollary}[Convergence relative to the original loss]\label{cor:explicitnonconvex}
Suppose that $h$ is $L$-smooth and bounded below by $h_\star$, that the finite-population estimator satisfies \cref{eq:relativeoraclemean,eq:relativeoraclevariance} with $g=g_{r,\sigma}^h$, and set $0<\eta\le[L(1+C^2/N)]^{-1}$. Define $B_{r,\sigma}^{\mathrm{std}}=B_{r,\sigma}^{(L)}$. If the Hessian is $M$-Lipschitz, one may instead take $B_{r,\sigma}^{\mathrm{std}}=B_{r,\sigma}^{(M)}$. Then
\begin{equation}
\frac1T\sum_{t=0}^{T-1}\E[\norm{\nabla h(x_t)}^2]
\le\frac{2(h(x_0)-h_\star)}{\eta T}+\left(B_{r,\sigma}^{\mathrm{std}}\right)^2+\frac{L\eta G^2}{N}. \label{eq:explicitnonconvex}
\end{equation}
If $\sigma\le\sigma_L(r,\varepsilon)$, or $\sigma\le\sigma_M(r,\varepsilon)$ under the Lipschitz-Hessian assumption, then the middle term is at most $\varepsilon^2/2$.
\end{corollary}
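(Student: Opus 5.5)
The corollary follows by specializing \cref{thm:nonconvex} and bounding its middle term with \cref{lem:standardfield}. First I would apply \cref{thm:nonconvex} with $g=g_{r,\sigma}^h$. The mean and relative-variance hypotheses \cref{eq:relativeoraclemean,eq:relativeoraclevariance} are assumed directly, and the step-size condition $0<\eta\le[L(1+C^2/N)]^{-1}$ is the same as there. This gives \cref{eq:nonconvex}, whose middle term is the averaged squared field error $T^{-1}\sum_t\E[\norm{g_{r,\sigma}^h(x_t)-\nabla h(x_t)}^2]$.

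The second step bounds this field error pointwise in $x_t$. Under $L$-smoothness, \cref{eq:Lsmoothfield} gives $\norm{g_{r,\sigma}^h(W)-\nabla h(W)}_F\le B_{r,\sigma}^{(L)}$ for every $W$. Under the additional $M$-Lipschitz Hessian assumption, \cref{eq:Hessianfield} gives the bound $B_{r,\sigma}^{(M)}$ instead. Since these bounds are uniform in $W$, they hold at the random iterates almost surely. Squaring, taking expectations and averaging over $t$ then bounds the middle term by $(B_{r,\sigma}^{\mathrm{std}})^2$, which proves \cref{eq:explicitnonconvex}.

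For the radius statement, I would substitute the thresholds \cref{eq:sigmastandardL,eq:sigmastandardM} into the definitions \cref{eq:standardbiases}. With $\sigma\le\sigma_L$ we get $B^{(L)}\le \tfrac{L}{2}\mu_4^{3/4}\cdot\sqrt2\varepsilon/(L\mu_4^{3/4})=\varepsilon/\sqrt2$. With $\sigma\le\sigma_M$ we get $B^{(M)}\le \tfrac{M}{6}\mu_4\cdot 3\sqrt2\varepsilon/(M\mu_4)=\varepsilon/\sqrt2$. In either case the squared bias is at most $\varepsilon^2/2$.

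There is no serious obstacle here. The only point needing care is the measurability and uniformity of the field bound at random iterates, and \cref{lem:standardfield} holds for every $W$ deterministically, so this is immediate. I would also note that $\mu_4(r)$ from \cref{eq:mu4standard} enters only through these thresholds.
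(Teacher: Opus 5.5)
Your proposal is correct and matches the paper's argument: apply \cref{thm:nonconvex} with $g=g_{r,\sigma}^h$, then bound its field-error term uniformly in $W$ using \cref{lem:standardfield}. Your substitution of $\sigma_L$ and $\sigma_M$ into \cref{eq:standardbiases}, giving $B_{r,\sigma}^{\mathrm{std}}\le\varepsilon/\sqrt2$, correctly supplies the radius claim that the paper's one-line proof leaves implicit.
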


\begin{proof}
Apply \cref{lem:standardfield} to the field-error term in \cref{eq:nonconvex}.
\end{proof}

Dividing the target tolerance among mean-field error, the finite optimization horizon, and irreducible population noise gives an explicit iteration complexity for a single matrix. Replacing $\mu_4(r)$ by $\mu_4(\bm r)$ from \cref{eq:mu4network} gives the corresponding network statement.

\begin{corollary}[Explicit finite-population complexity]
\label{cor:explicitcomplexity}
Under the assumptions of \cref{cor:explicitnonconvex}, fix $\varepsilon>0$ and let $\Delta=h(x_0)-h_\star$. Set $\eta=1/(2L)$ and choose
\begin{equation}
0<\sigma\le\frac{2\varepsilon}{\sqrt3\,L\mu_4(r)^{3/4}}. \label{eq:complexitysigmaL}
\end{equation}
If the Hessian is $M$-Lipschitz, the alternative choice
\begin{equation}
0<\sigma\le\left(\frac{2\sqrt3\,\varepsilon}{M\mu_4(r)}\right)^{1/2} \label{eq:complexitysigmaM}
\end{equation}
is sufficient. If
\begin{equation}
N\ge\max\left\{C^2,\frac{3G^2}{2\varepsilon^2}\right\} \label{eq:complexityN}
\end{equation}
and
\begin{equation}
T\ge\left\lceil\frac{12L\Delta}{\varepsilon^2}\right\rceil, \label{eq:complexityT}
\end{equation}
then
\begin{equation}
\frac1T\sum_{t=0}^{T-1}\E\left[\norm{\nabla h(x_t)}^2\right]\le\varepsilon^2.
\end{equation}
\end{corollary}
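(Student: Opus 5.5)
The plan is to apply \cref{thm:nonconvex} with $g=g_{r,\sigma}^h$ and then split the tolerance $\varepsilon^2$ equally into three parts of size $\varepsilon^2/3$. The three right-hand terms of \cref{eq:explicitnonconvex} are the optimization horizon, the squared field error, and the population noise. Each will be made at most $\varepsilon^2/3$.

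First, check the step-size condition. With $\eta=1/(2L)$, the requirement $\eta\le[L(1+C^2/N)]^{-1}$ reduces to $1+C^2/N\le2$, that is $N\ge C^2$. This is the first entry of \cref{eq:complexityN}. So \cref{cor:explicitnonconvex} applies and gives \cref{eq:explicitnonconvex}.

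Second, bound the three terms in turn.

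\begin{itemize}
\item \textbf{Horizon term.} It equals $4L\Delta/T$. Under \cref{eq:complexityT} this is at most $4L\Delta\varepsilon^2/(12L\Delta)=\varepsilon^2/3$.
\item \textbf{Noise term.} It equals $L\eta G^2/N=G^2/(2N)$. This is at most $\varepsilon^2/3$ exactly when $N\ge 3G^2/(2\varepsilon^2)$, which is the second entry of \cref{eq:complexityN}.
\item \textbf{Field-error term, $L$-smooth case.} By \cref{eq:standardbiases}, $B_{r,\sigma}^{(L)}=L\sigma\mu_4(r)^{3/4}/2$. Substituting the bound \cref{eq:complexitysigmaL} gives $B\le\varepsilon/\sqrt3$, hence $B^2\le\varepsilon^2/3$.
\item \textbf{Field-error term, Lipschitz-Hessian case.} Here $B_{r,\sigma}^{(M)}=M\sigma^2\mu_4(r)/6$. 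Under \cref{eq:complexitysigmaM}, $\sigma^2\le 2\sqrt3\,\varepsilon/(M\mu_4(r))$, so $B\le 2\sqrt3\,\varepsilon/6=\varepsilon/\sqrt3$. Again $B^2\le\varepsilon^2/3$.
\end{itemize}

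Summing the three bounds yields $\varepsilon^2$.

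There is no real obstacle. The only care points are confirming that the constants in \cref{eq:complexitysigmaL,eq:complexitysigmaM} give exactly $\varepsilon/\sqrt3$, and tracking the factor $2$ from $\eta=1/(2L)$ in the horizon and noise terms. For the network version, $\mu_4(r)$ is replaced by $\mu_4(\bm r)$, and \cref{cor:networkfield} is used in place of \cref{lem:standardfield}.
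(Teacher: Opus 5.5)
Your proof is correct and follows the paper's argument: you check the step-size condition via $N\ge C^2$, then bound each of the three terms of \cref{eq:explicitnonconvex} by $\varepsilon^2/3$, namely the horizon term $4L\Delta/T$, the squared field error (with $B\le\varepsilon/\sqrt3$ under either radius choice), and the noise term $G^2/(2N)$. Your constant checks for both radius choices are accurate.
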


\begin{proof}
The radius choices make the field-error bound at most $\varepsilon/\sqrt3$. The requirement $N\ge C^2$ gives $\eta\le[L(1+C^2/N)]^{-1}$, while the second part of \cref{eq:complexityN} bounds the irreducible term by $G^2/(2N)\le\varepsilon^2/3$. Finally, \cref{eq:complexityT} bounds the optimization term by $4L\Delta/T\le\varepsilon^2/3$. Substitution into \cref{eq:explicitnonconvex} proves the claim.
\end{proof}

The guarantees above concern the original loss. To compare finite-rank EGGROLL directly with dense Gaussian ES at the same radius, let $H_{\infty,\sigma}(x):=\E[h(x+\sigma Z)]$ for $Z\sim\cN(0,I)$. Using this smoothed loss as the reference removes the smoothing error shared by both methods and leaves the finite-rank discrepancy derived in \cref{sec:expansion}.

\begin{corollary}[Convergence relative to dense Gaussian ES]\label{cor:denseesconvergence}
Suppose $h$ satisfies the assumptions of \cref{thm:rankexpansion}, $H_{\infty,\sigma}$ is $L_\sigma$-smooth and bounded below by $H_{\infty,\sigma}^\star$, and the finite-population estimator satisfies \cref{eq:relativeoraclemean,eq:relativeoraclevariance} with $g=g_{r,\sigma}^h$. For the constants $K_{\sigma,p},R_{\sigma,p},r_0$ in \cref{eq:rankfieldbound}, every $r\ge r_0$ and $0<\eta\le[L_\sigma(1+C^2/N)]^{-1}$ satisfy
\begin{equation}
\begin{aligned}\frac1T\sum_{t=0}^{T-1}\E\left[\norm{\nabla H_{\infty,\sigma}(x_t)}^2\right]
&\le\frac{2(H_{\infty,\sigma}(x_0)-H_{\infty,\sigma}^\star)}{\eta T}+\frac{L_\sigma\eta G^2}{N}\\
&\quad+\left(\frac{K_{\sigma,p}}{r}+\frac{R_{\sigma,p}}{r^2}\right)^2\frac1T\sum_{t=0}^{T-1}\E\left[(1+\norm{x_t})^{2p}\right].\end{aligned}\label{eq:denseesconvergence}
\end{equation}
If the displayed parameter moments are uniformly bounded along the trajectory, the mean-field contribution of finite rank to the squared-stationarity guarantee is $O(r^{-2})$ whenever the leading coefficient is nonzero. For bounded fitness, $p=0$ and no trajectory-moment condition is needed.
\end{corollary}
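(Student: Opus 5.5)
The plan is to apply \cref{thm:nonconvex} with the dense-Gaussian-smoothed loss $H_{\infty,\sigma}$ as the reference objective. The estimator is still centered on the finite-rank field $g=g_{r,\sigma}^h$. \Cref{thm:nonconvex} only requires the reference objective to be $L$-smooth and bounded below, and the oracle conditions \cref{eq:relativeoraclemean,eq:relativeoraclevariance} to hold for some field $g$. With $H_{\infty,\sigma}$ in place of $h$, $L_\sigma$ in place of $L$, and $H_{\infty,\sigma}^\star$ as the lower bound, the step-size condition $0<\eta\le[L_\sigma(1+C^2/N)]^{-1}$ is exactly the one assumed. The theorem then gives
\begin{equation*}
\frac1T\sum_{t=0}^{T-1}\E[\norm{\nabla H_{\infty,\sigma}(x_t)}^2]\le\frac{2(H_{\infty,\sigma}(x_0)-H_{\infty,\sigma}^\star)}{\eta T}+\frac1T\sum_{t=0}^{T-1}\E[\norm{g_{r,\sigma}^h(x_t)-\nabla H_{\infty,\sigma}(x_t)}^2]+\frac{L_\sigma\eta G^2}{N}.
\end{equation*}
This accounts for the first line of \cref{eq:denseesconvergence}. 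The proof of \cref{thm:nonconvex} uses only smoothness of the reference function, conditional unbiasedness, and the variance bound, so I will check that no step secretly uses $g=\nabla h+b$ with $h$ being the loss that generates $g$. It does not: $b$ is simply defined as $g-\nabla H_{\infty,\sigma}$.

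The second step bounds the field-error term. By Stein's identity in \cref{sec:law}, $g_{\infty,\sigma}^h=\nabla F_{\infty,\sigma}=\nabla H_{\infty,\sigma}$, where $F_{\infty,\sigma}$ is the dense-smoothed $h$ and coincides with $H_{\infty,\sigma}$. The growth assumption $|h|\le C_f(1+\norm{W}_F^p)$ and the interchange of derivative and expectation are guaranteed by the hypotheses of \cref{thm:rankexpansion}. Then \cref{eq:rankfieldbound}, valid for $r\ge r_0$, gives pointwise
\begin{equation*}
\norm{g_{r,\sigma}^h(x)-\nabla H_{\infty,\sigma}(x)}\le\left(\frac{K_{\sigma,p}}{r}+\frac{R_{\sigma,p}}{r^2}\right)(1+\norm{x})^p,
\end{equation*}
by the definition of $\norm{\cdot}_{\infty,p}$. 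Squaring, evaluating at $x_t$, taking expectations, and averaging over $t$ produces the third term of \cref{eq:denseesconvergence}.

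The remaining remarks are direct. If $\sup_t\E[(1+\norm{x_t})^{2p}]<\infty$, the finite-rank contribution is $(K_{\sigma,p}/r+R_{\sigma,p}/r^2)^2=O(r^{-2})$, and it is of exact order $r^{-2}$ when $K_{\sigma,p}\ne0$. For bounded fitness we take $p=0$, so the weight equals one and no moment condition is needed.

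The only real obstacle is bookkeeping rather than a new argument. The norm $\norm{x_t}$ should be read as the Frobenius (direct-sum) norm in which \cref{thm:rankexpansion} is stated. The expectation of the squared weight must be finite; otherwise the bound is vacuous but remains valid, since the right side is then infinite.
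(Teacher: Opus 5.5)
Your proposal is correct and follows the paper's proof: you apply \cref{thm:nonconvex} with $H_{\infty,\sigma}$ as the reference loss, use $g_{\infty,\sigma}^h=\nabla H_{\infty,\sigma}$, and bound the field-error term by squaring the pointwise consequence of \cref{eq:rankfieldbound}. Your check that the proof of \cref{thm:nonconvex} only sets $b=g-\nabla H_{\infty,\sigma}$, without tying $g$ to the reference loss, is what the paper's one-line proof relies on implicitly.
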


\begin{proof}
Equation~\eqref{eq:rankfieldbound} gives
\begin{equation}
\norm{g_{r,\sigma}^h(x_t)-\nabla H_{\infty,\sigma}(x_t)}\le\left(\frac{K_{\sigma,p}}r+\frac{R_{\sigma,p}}{r^2}\right)(1+\norm{x_t})^p.
\end{equation}
Square this bound and apply \cref{thm:nonconvex} with $H_{\infty,\sigma}$ as the reference loss.
\end{proof}

The two reference objectives should not be conflated. Relative to $h$, the error contains smoothing and finite-rank effects and must be controlled through both $\sigma$ and $r$. Relative to dense Gaussian ES at the same $\sigma$, the shared smoothing disappears and the mean-field difference is $O(1/r)$. The finite-population noise is the same in either comparison.

The constants in the variance assumption can be connected to quantities that are calculable or measurable for EGGROLL. The constant $G^2$ allows noise to remain when the mean update vanishes. The term $C^2\norm{g(x)}^2$ is relative or multiplicative noise, also called a strong-growth component \citep{vaswani2019fast,wang2023zeroth}. For one matrix in the exact affine model, \cref{thm:variance} gives $C^2=mn+1+2(m+n+1)/r$. Appendix~\ref{app:network} gives the corresponding network expression. Relative noise restricts the stable step size, but it vanishes with the mean field and creates no additional error floor. The additive component $G^2$ produces the final term in \cref{eq:nonconvex}. Setting $C=0$ recovers the usual bounded-variance assumption, while the exact affine EGGROLL model has $G=0$.

\subsection{Fourth moments of the Gaussian-product law}
\label{app:fourthmoments}

This appendix supplies the fourth-moment calculation used in \cref{prop:smallradius}. The coefficient in \cref{eq:fourthmomentcontraction} follows from the fourth moment of four entries of $E_r$. At rank one, $(E_1)_{ij}=a_i b_j$, where $a$ and $b$ are independent standard Gaussian vectors. For arbitrary row indices $i_1,\ldots,i_4$ and column indices $j_1,\ldots,j_4$,
\begin{equation}
\E\left[\prod_{s=1}^4(E_1)_{i_sj_s}\right]=\left(\delta_{i_1i_2}\delta_{i_3i_4}+\delta_{i_1i_3}\delta_{i_2i_4}+\delta_{i_1i_4}\delta_{i_2i_3}\right)
\left(\delta_{j_1j_2}\delta_{j_3j_4}+\delta_{j_1j_3}\delta_{j_2j_4}+\delta_{j_1j_4}\delta_{j_2j_3}\right). \label{eq:rankonefourth}
\end{equation}
Each parenthesis contains the three possible pairings of four Gaussian factors. Expanding their product gives nine terms. In three of them, the row and column indices use the same pairing. For example, pairing the first entry with the second and the third with the fourth in both factors gives
\begin{equation}
\delta_{i_1i_2}\delta_{j_1j_2}\delta_{i_3i_4}\delta_{j_3j_4}.
\end{equation}
These three terms are precisely the fourth moment of a dense Gaussian matrix with independent entries. The remaining six terms use different pairings for the row and column indices, such as
\begin{equation}
\delta_{i_1i_2}\delta_{i_3i_4}\delta_{j_1j_3}\delta_{j_2j_4},
\end{equation}
and arise from the rank-one factorization.

For general rank, write $E_r=r^{-1/2}\sum_{q=1}^r a_qb_q^\top$. A nonzero fourth-moment term either draws all four factors from one summand or draws two factors from each of two summands. The three dense-Gaussian terms receive both kinds of contributions, with total coefficient
\begin{equation}
\frac{r}{r^2}+\frac{r(r-1)}{r^2}=1.
\end{equation}
Each of the six additional terms requires all four factors to come from the same rank-one summand and therefore has coefficient $r/r^2=1/r$. Consequently, the fourth moment of $E_r$ consists of the three dense-Gaussian terms plus the six additional terms scaled by $1/r$. To apply this identity in \cref{eq:antitheticthird}, write the $(i_0,j_0)$ entry as
\begin{equation}
\sum_{i_1,j_1,i_2,j_2,i_3,j_3}\partial_{i_1j_1}\partial_{i_2j_2}\partial_{i_3j_3}f(W)\,
\E\left[(E_r)_{i_0j_0}(E_r)_{i_1j_1}(E_r)_{i_2j_2}(E_r)_{i_3j_3}\right].
\end{equation}
The three dense-Gaussian terms give the same quantity, $\partial_{i_0j_0}\Delta f(W)$, after relabeling indices. The six additional terms give the same quantity, $(\cD f(W))_{i_0j_0}$, by symmetry of the third derivative. Summing them proves
\begin{equation}
\E\left[E_rD^3f(W)[E_r,E_r,E_r]\right]=3\nabla\Delta f(W)+\frac6r\cD f(W).
\end{equation}

\section{Complete empirical results}\label{app:empirical}
\subsection{Training budgets}

\Cref{tab:looprotocol} gives the direction counts, batch sizes, update budgets, runtimes, and achieved time matching for all fourteen primary comparisons.

\begin{table}[tbp]
\centering
\papertablestyle
\caption{Training budgets for the fourteen LOO-ROLL comparisons. $N$ counts independent EGGROLL directions, $B$ is the training batch size, $T_E$ is the EGGROLL update count, and $T_L$ is the mean number of LOO-ROLL updates completed at equal time. Hours are mean EGGROLL runtimes over five seeds. NTP denotes next-token prediction.}
\label{tab:looprotocol}
\begin{tabular*}{\textwidth}{@{\extracolsep{\fill}}llrrrrrr@{}}
\toprule
Model & Task & $N$ & $B$ & $T_E$ & $T_L$ & Hours & Time ratio \\
\midrule
Qwen3-0.6B & Countdown & 128 & 8 & 150 & 374.4 & 2.668 & 0.9970 \\
Qwen3-0.6B & GSM8K & 128 & 4 & 150 & 368.4 & 4.145 & 0.9978 \\
Qwen3-0.6B & NTP & 32 & 4 & 80 & 131.8 & 0.151 & 1.0088 \\
Qwen3-1.7B & Countdown & 128 & 8 & 200 & 496.0 & 4.187 & 0.9980 \\
Qwen3-1.7B & GSM8K & 128 & 4 & 200 & 485.6 & 6.190 & 0.9985 \\
Qwen3-1.7B & NTP & 32 & 4 & 160 & 268.0 & 0.293 & 1.0199 \\
Qwen3-1.7B & MATH & 128 & 4 & 200 & 482.8 & 8.838 & 0.9985 \\
Qwen3-8B & GSM8K & 128 & 4 & 120 & 303.6 & 8.929 & 1.0009 \\
Qwen3-8B & NTP & 32 & 4 & 120 & 203.4 & 0.363 & 1.0271 \\
Qwen3-8B & MATH & 128 & 4 & 200 & 484.2 & 18.686 & 0.9998 \\
Qwen3-14B & GSM8K & 128 & 4 & 120 & 323.4 & 14.544 & 0.9675 \\
Qwen3-14B & MATH & 128 & 4 & 200 & 522.8 & 28.948 & 0.9998 \\
SmolLM2-1.7B & Countdown & 128 & 8 & 200 & 485.0 & 4.521 & 0.9981 \\
SmolLM2-1.7B & GSM8K & 128 & 4 & 200 & 444.8 & 3.250 & 0.9981 \\
\bottomrule
\end{tabular*}
\end{table}

\subsection{Paired statistical comparisons}\label{app:loostatistics}

The main text reports mean performance under matched evaluation and wall-time budgets. Here, we test the paired seed-level differences underlying those comparisons. \Cref{tab:looevaltests} gives the individual two-sided tests at equal evaluation cost. \Cref{tab:looholm} gives both the individual and Holm-adjusted $p$-values for the fourteen equal-wall-time comparisons, treating them as one family.

\begin{table}[tbp]
\centering
\papertablestyle
\caption{Two-sided paired tests at equal evaluation cost, with five paired seeds and full-test GSM8K and MATH-500 evaluation. The 14B MATH-500 comparison favors LOO-ROLL with $p=0.04591$. The other comparisons have $p\ge0.05$.}
\label{tab:looevaltests}
\begin{tabular*}{\textwidth}{@{\extracolsep{\fill}}llr@{}}
\toprule
Model & Task & Individual $p$ \\
\midrule
Qwen3-0.6B & Countdown & 0.19198 \\
Qwen3-0.6B & GSM8K & 0.09112 \\
Qwen3-0.6B & Next-token prediction & 0.46333 \\
Qwen3-1.7B & Countdown & 0.68251 \\
Qwen3-1.7B & GSM8K & 0.51541 \\
Qwen3-1.7B & MATH-500 & 0.89277 \\
Qwen3-1.7B & Next-token prediction & 0.53072 \\
Qwen3-8B & GSM8K & 0.11829 \\
Qwen3-8B & MATH-500 & 0.46159 \\
Qwen3-8B & Next-token prediction & 0.23088 \\
Qwen3-14B & GSM8K & 0.84124 \\
Qwen3-14B & MATH-500 & 0.04591 \\
SmolLM2-1.7B & Countdown & 0.64704 \\
SmolLM2-1.7B & GSM8K & 0.53016 \\
\bottomrule
\end{tabular*}
\end{table}

\begin{table}[tbp]
\centering
\papertablestyle
\caption{Two-sided paired tests for the fourteen equal-time LOO-ROLL comparisons. Holm adjustment treats all fourteen settings as one family. Bold adjusted values are below $0.05$, and all nine corresponding effects favor LOO-ROLL.}
\label{tab:looholm}
\begin{tabular*}{\textwidth}{@{\extracolsep{\fill}}llrr@{}}
\toprule
Model & Task & Individual $p$ & Holm-adjusted $p$ \\
\midrule
Qwen3-0.6B & GSM8K & $0.000020$ & $\bm{0.00026}$ \\
Qwen3-1.7B & GSM8K & $0.001532$ & $\bm{0.0123}$ \\
Qwen3-1.7B & MATH-500 & $0.000062$ & $\bm{0.00074}$ \\
Qwen3-8B & GSM8K & $0.000004$ & $\bm{0.00006}$ \\
Qwen3-8B & MATH-500 & $0.000373$ & $\bm{0.00335}$ \\
Qwen3-14B & GSM8K & $0.000199$ & $\bm{0.00199}$ \\
Qwen3-14B & MATH-500 & $0.000125$ & $\bm{0.00137}$ \\
SmolLM2-1.7B & Countdown & $0.654584$ & $1.000000$ \\
SmolLM2-1.7B & GSM8K & $0.003708$ & $\bm{0.0260}$ \\
Qwen3-0.6B & Countdown & $0.710583$ & $1.000000$ \\
Qwen3-1.7B & Countdown & $0.022238$ & $0.088953$ \\
Qwen3-0.6B & Next-token prediction loss & $0.055755$ & $0.167266$ \\
Qwen3-1.7B & Next-token prediction loss & $0.012190$ & $0.060951$ \\
Qwen3-8B & Next-token prediction loss & $0.003985$ & $\bm{0.0260}$ \\
\bottomrule
\end{tabular*}
\end{table}

\subsection{Rank comparisons and transformer audits}

\Cref{tab:five-seed,tab:primaryfull,tab:pairedfull,tab:confirmationfull} report the complete rank and radius comparisons summarized in \cref{fig:endtoend}, including the stress-radius runs omitted from the main fixed-validation comparison. Rank columns give mean final training fitness $\pm$ standard error over three independent seeds, with higher values indicating better performance. The experiments use commit \texttt{bcc215e} of the \href{https://github.com/ESHyperscale/eggroll-vllm}{eggroll-vLLM} repository with the LOO-ROLL modifications described in \cref{sec:remedies}.

\begin{figure}[tbp]
    \centering
    \includegraphics[width=\textwidth]{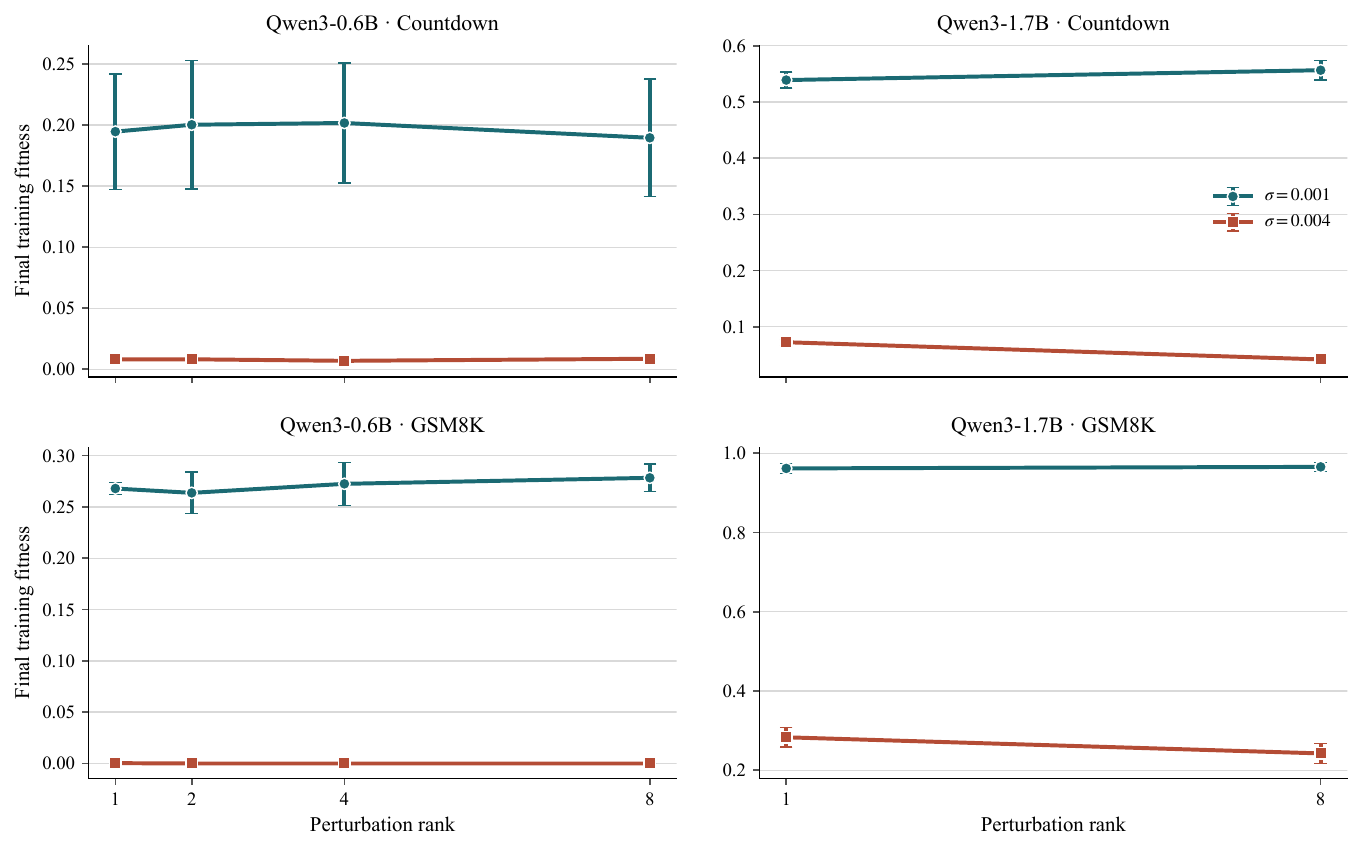}
\caption{Mean final training fitness for the 0.6B and 1.7B rank studies. Bars are standard errors over three seeds. At $\sigma=10^{-3}$, rank differences are small relative to seed variation. Increasing the radius to $\sigma=4\times10^{-3}$ produces a much larger decline in fitness across ranks.}
    \label{fig:endtoend}
\end{figure}

\begin{table}[tbp]
\centering
\papertablestyle
\caption{Expanded useful-radius comparison over five matched seeds. Reward changes are rank eight minus rank one in percentage points, so positive values favor rank eight. Wall overhead is the percentage increase in mean iteration time from rank one to rank eight. Brackets give paired 95\% Student-$t$ intervals. The bold entry excludes zero.}
\label{tab:five-seed}
\begin{tabular*}{\textwidth}{@{\extracolsep{\fill}}llrr@{}}
\toprule
Model & Task & Reward change & Rank-8 wall overhead \\
\midrule
Qwen3-0.6B & Countdown & $-.69\%\;[-2.73,1.34]$ & $+12\%$ \\
Qwen3-0.6B & GSM8K     & $+2.09\%\;[-.23,4.41]$ & $+12\%$ \\
Qwen3-1.7B & Countdown & $+.28\%\;[-1.61,2.17]$ & $+22\%$ \\
Qwen3-1.7B & GSM8K     & $\bm{+.76\%\;[.07,1.46]}$ & $+12\%$ \\
Qwen3-4B   & GSM8K     & $+.21\%\;[-1.30,1.73]$ & $+25\%$ \\
Qwen3-8B   & GSM8K     & $-.70\%\;[-1.58,.18]$ & $+19\%$ \\
\bottomrule
\end{tabular*}
\end{table}

\begin{table}[tbp]
\centering
\papertablestyle
\caption{Complete Qwen3-0.6B rank study. Entries are mean final training fitness $\pm$ standard error over three seeds. Higher values are better.}
\label{tab:primaryfull}
\begin{tabular*}{\textwidth}{@{\extracolsep{\fill}}llrrrr@{}}
\toprule
Task & $\sigma$ & Rank 1 & Rank 2 & Rank 4 & Rank 8 \\
\midrule
Countdown & $.001$ & $.195\pm.047$ & $.200\pm.053$ & $.202\pm.049$ & $.190\pm.048$ \\
Countdown & $.004$ & $.0081\pm.0007$ & $.0082\pm.0007$ & $.0069\pm.0007$ & $.0086\pm.0004$ \\
GSM8K & $.001$ & $.268\pm.006$ & $.264\pm.020$ & $.272\pm.021$ & $.278\pm.013$ \\
GSM8K & $.004$ & $.0003\pm.0003$ & $.0000\pm.0000$ & $.0000\pm.0000$ & $.0000\pm.0000$ \\
\bottomrule
\end{tabular*}
\end{table}

\begin{table}[tbp]
\centering
\papertablestyle
\caption{Paired rank-eight minus rank-one reward changes over three matched seeds, in percentage points. Positive values favor rank eight. Brackets give two-sided 95\% Student-$t$ intervals, and bold entries exclude zero.}
\label{tab:pairedfull}
\begin{tabular*}{\textwidth}{@{\extracolsep{\fill}}llrr@{}}
\toprule
Model & Task & $\sigma=.001$ & $\sigma=.004$ \\
\midrule
Qwen3-0.6B & Countdown & $-.51\%\;[-5.83,4.82]$ & $+.04\%\;[-.42,.51]$ \\
Qwen3-0.6B & GSM8K     & $+1.04\%\;[-5.42,7.50]$ & $-.03\%\;[-.17,.11]$ \\
Qwen3-1.7B & Countdown & $+1.75\%\;[-.15,3.66]$ & $\bm{-3.06\%\;[-5.18,-.94]}$ \\
Qwen3-1.7B & GSM8K     & $+.39\%\;[-1.50,2.29]$ & $\bm{-4.07\%\;[-4.44,-3.70]}$ \\
\bottomrule
\end{tabular*}
\end{table}

\begin{table}[tbp]
\centering
\papertablestyle
\caption{Complete Qwen3-1.7B rank study. Rank columns give mean final training fitness $\pm$ standard error over three seeds. Higher fitness is better. The final column gives the paired rank-eight minus rank-one reward change in percentage points.}
\label{tab:confirmationfull}
\begin{tabular*}{\textwidth}{@{\extracolsep{\fill}}llrrr@{}}
\toprule
Task & $\sigma$ & Rank 1 & Rank 8 & Reward change \\
\midrule
Countdown & $.001$ & $.539\pm.014$ & $.556\pm.017$ & $+1.75\%$ \\
Countdown & $.004$ & $.073\pm.005$ & $.042\pm.004$ & $-3.06\%$ \\
GSM8K & $.001$ & $.962\pm.013$ & $.966\pm.011$ & $+.39\%$ \\
GSM8K & $.004$ & $.283\pm.025$ & $.243\pm.025$ & $-4.07\%$ \\
\bottomrule
\end{tabular*}
\end{table}

The training results combine mean-field effects, population noise, and the resulting optimization trajectory. The transformer-block audit isolates the first two at a fixed model state. \Cref{tab:fieldfull} compares each finite-rank estimator with dense Gaussian ES through its single-direction MSE and compares its estimated mean with the backpropagated gradient through cosine similarity. The rank-one MSE is $26$--$32\%$ above the dense baseline and falls to $3$--$4\%$ above it at rank eight, while the estimated mean retains a cosine near $0.98$ at every rank. Finite-rank variance is therefore visible in these small blocks, whereas the mean direction changes little over the tested radii.

\begin{table}[tbp]
\centering
\papertablestyle
\caption{Comparison of finite-rank and dense-Gaussian ES updates in two Qwen3-0.6B transformer blocks. Relative MSE is the single-direction squared error about the backpropagated block gradient, divided by the corresponding dense-Gaussian MSE, so $1$ denotes equal sampling error and lower values are better. Mean-field cosine is the cosine similarity between the average of 8,192 antithetic updates and the backpropagated block gradient, so $1$ denotes perfect directional agreement. Values average four perturbation radii within each of three independent perturbation seeds and are reported as means $\pm$ standard errors over seeds.}
\label{tab:fieldfull}
\begin{tabular*}{\textwidth}{@{\extracolsep{\fill}}llrrrr@{}}
\toprule
Quantity & Block & Rank 1 & Rank 2 & Rank 4 & Rank 8 \\
\midrule
Relative MSE & MLP & $1.319\pm.036$ & $1.168\pm.009$ & $1.090\pm.009$ & $1.040\pm.002$ \\
Relative MSE & Attention & $1.264\pm.018$ & $1.127\pm.018$ & $1.089\pm.025$ & $1.028\pm.015$ \\
Mean-field cosine & MLP & $.9817\pm.0013$ & $.9830\pm.0011$ & $.9837\pm.0003$ & $.9827\pm.0007$ \\
Mean-field cosine & Attention & $.9800\pm.0011$ & $.9832\pm.0009$ & $.9836\pm.0006$ & $.9842\pm.0012$ \\
\bottomrule
\end{tabular*}
\end{table}

\subsection{Controlled numerical experiments}\label{app:syntheticchecks}

These experiments examine the parts of the theory that have exact numerical answers. Computing the covariance directly from sampled perturbation entries reproduces the tensor predicted by \cref{eq:covariance}, without evaluating the formula itself. Independent calculations also reproduce the resolvent multiplier at individual frequencies, the two Hessians in \cref{prop:unstable}, the cubic identity in \cref{prop:smallradius}, and the $1/r$ rank expansion in \cref{thm:rankexpansion}. Together, these calculations verify the algebra before any model or optimization effects enter.

The finite-sampling experiment asks whether Monte Carlo estimates reproduce those exact quantities. \Cref{tab:variancecheck} uses $100{,}000$ directions for a $3\times5$ affine problem. The empirical single-direction variance is within $0.9\%$ of \cref{eq:mse} at every rank. The same calculation recovers the stability transition in \cref{fig:witnessdynamics}: the largest Jacobian real part is $0.02925$ at rank one and $-0.00444$ at rank two.

\begin{table}[tbp]
\centering
\papertablestyle
\caption{Monte Carlo verification of the exact single-direction variance in \cref{eq:mse}. Results use $100{,}000$ independent sampled directions for a normalized $3\times5$ affine gradient.}
\label{tab:variancecheck}
\begin{tabular*}{\textwidth}{@{\extracolsep{\fill}}rrrr@{}}
\toprule
Rank $r$ & Exact variance & Monte Carlo variance & Relative error \\
\midrule
1 & $34.00$ & $34.24$ & $0.70\%$ \\
2 & $25.00$ & $25.21$ & $0.86\%$ \\
4 & $20.50$ & $20.62$ & $0.57\%$ \\
8 & $18.25$ & $18.19$ & $0.35\%$ \\
\bottomrule
\end{tabular*}
\end{table}

The quadratic case provides an exact test of the finite-population dynamics. Let $h(W)=\norm{W}_F^2/2$, so that the exact population field is $W$ by \cref{prop:quadraticexact}. If $\widehat G_N$ averages $N$ directions, unbiasedness and \cref{eq:populationmse} give
\begin{equation}
\E\left[\norm{W-\eta\widehat G_N}_F^2\mid W\right]=\left[1-2\eta+\eta^2\left(1+\frac{\kappa_r}{N}\right)\right]\norm{W}_F^2.
\end{equation}
Drawing a fresh independent population at every step and iterating this identity yields
\begin{equation}
\frac{\E[\norm{W_T}_F^2]}{\norm{W_0}_F^2}=\left[1-2\eta+\eta^2\left(1+\frac{\kappa_r}{N}\right)\right]^T. \label{eq:quadraticfinitepopulation}
\end{equation}
We test an $8\times8$ problem at $\eta=0.3$ using 4,000 independent one-step trials per rank--population pair and 5,000 independent 25-step trajectories.

\begin{figure*}[tbp]
    \centering
    \includegraphics[width=\textwidth]{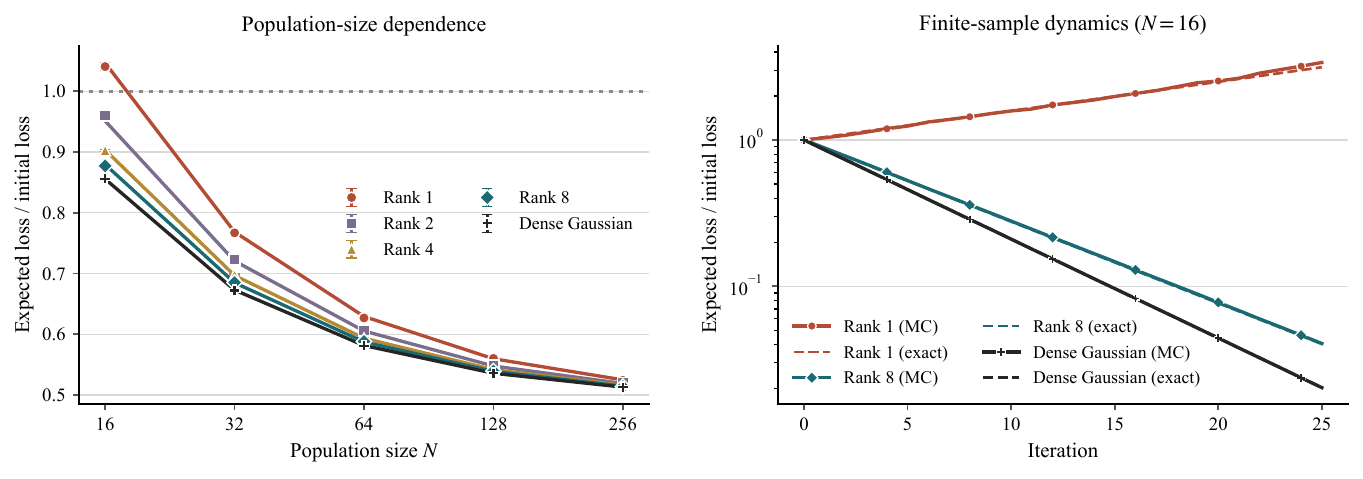}
\caption{Finite-sample validation of \cref{eq:quadraticfinitepopulation}. \textbf{Left:} Monte Carlo one-step ratios agree with the exact curves across populations and ranks. The horizontal line marks stability. \textbf{Right:} at $N=16$, the formula predicts the full mean-square trajectory. Rank one is unstable, whereas rank eight and dense Gaussian ES contract. Marked curves are Monte Carlo means and dashed curves are exact.}
    \label{fig:finitepopulationvalidation}
\end{figure*}

The synthetic result makes the finite-population prediction visible in a controlled setting. With few directions, the term $\kappa_r/N$ can move the iteration across the stability boundary. Increasing $N$ removes this difference, while increasing the matrix width makes the rank-one contribution small relative to the common $mn+1$ term. The transformer experiments in \cref{sec:experiments} examine these predictions in a nonlinear model.

\section{Population centering and standardization}
\label{app:normalization}

The operator analysis concerns the uncentered and unstandardized antithetic estimator, while the end-to-end experiments retain the population centering and standardization specified by EGGROLL. We therefore distinguish the direction of one realized update from the expectation of the normalized estimator. The proposition shows that prompt-wise centering and one common scale do not rotate a realized update. The corollary nevertheless shows that a scale computed from the same random population can change the expected field.

\begin{proposition}[Population centering and common standardization preserve each realized direction]
\label{prop:normalization}
Let pair $j$ use perturbations $+E_j$ and $-E_j$, with prompt-level fitnesses $Y_{j,+,p}$ and $Y_{j,-,p}$. For arbitrary prompt offsets $c_p$, define
\begin{equation}
Z_{j,\pm}=\frac1P\sum_{p=1}^P(Y_{j,\pm,p}-c_p).
\end{equation}
Then $Z_{j,+}-Z_{j,-}=P^{-1}\sum_p(Y_{j,+,p}-Y_{j,-,p})$ exactly. If all $Z_{j,\pm}$ are subsequently divided by the same random scalar $S>0$, the ES update is exactly $S^{-1}$ times the unstandardized update. Consequently, per-prompt population centering has no effect on the update, and standardization changes only its realized magnitude.
\end{proposition}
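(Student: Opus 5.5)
The argument is a direct algebraic computation, so the plan is to write out the antithetic update explicitly and track where the offsets and the scale enter. First, I will fix the form of the ES update that uses the normalized scores. Following \cref{eq:antithetic}, with $N$ pairs and the prompt-averaged fitness in place of $f$, the update is
\begin{equation}
\widehat g=\frac{1}{N}\sum_{j=1}^N E_j\,\frac{Z_{j,+}-Z_{j,-}}{2\sigma}.
\end{equation}
I take this as the definition of the centered estimator. The unstandardized, uncentered version is the same expression with $Z_{j,\pm}$ replaced by $P^{-1}\sum_p Y_{j,\pm,p}$.

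The first claim is the identity for $Z_{j,+}-Z_{j,-}$. Expanding the definition gives
\begin{equation}
Z_{j,+}-Z_{j,-}=\frac1P\sum_p\bigl[(Y_{j,+,p}-c_p)-(Y_{j,-,p}-c_p)\bigr].
\end{equation}
Each offset $c_p$ appears once with each sign inside the sum, so it cancels termwise. This holds exactly for arbitrary $c_p$. In particular, it holds when $c_p$ is a population mean computed from the same random fitnesses, because the cancellation is pointwise for every realization and never uses independence.

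The second claim concerns standardization. If every score is divided by the same $S>0$, then each difference becomes $(Z_{j,+}-Z_{j,-})/S$. Since $S$ does not depend on $j$, it factors out of the sum over pairs, and the update equals $S^{-1}$ times the unstandardized update. A positive scalar changes neither the direction nor the sign, so only the magnitude changes. Combining the two steps, centering leaves the update unchanged and standardization rescales it.

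The only point that needs care is interpretive rather than technical. The statement is about realized updates, not expectations. Because $S$ is random and correlated with the $E_j$, it cannot be pulled out of an expectation, and the expected field can change. I would state this caveat explicitly and defer it to the subsequent corollary. I expect no step to be a genuine obstacle.
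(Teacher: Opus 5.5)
Your proposal is correct and matches the paper's own proof: the offsets $c_p$ cancel termwise in each pair difference, and because the antithetic update is linear in those differences, a common positive divisor $S$ factors out as $S^{-1}$. Your caveat that the random $S$ cannot be pulled out of an expectation is the point the paper takes up next in \cref{cor:normalizerbias}.
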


\begin{proof}
The offsets cancel algebraically in every pair difference. The ES update is linear in those differences, so dividing every score by the same positive $S$ multiplies their weighted sum by $S^{-1}$ and cannot change its realized direction.
\end{proof}

\begin{corollary}[Bias induced by a same-population normalizer]
\label{cor:normalizerbias}
At a fixed iterate, let $\widehat g=g+\xi$ with deterministic $g$ and $\E[\xi]=0$, and let $\alpha=S^{-1}$ have finite second moment. Then
\begin{equation}
\left\|\E[\alpha\widehat g]-\E[\alpha]g\right\| \le \sqrt{\Var(\alpha)\E[\norm{\xi}^2]}. \label{eq:normalizerbias}
\end{equation}
If $\alpha$ is measurable with respect to a sigma-field $\mathcal G$ and $\E[\xi\mid\mathcal G]=0$, the left side is zero. Independence of $\alpha$ and $\xi$ is sufficient. A same-population normalizer can otherwise change the expected field.
\end{corollary}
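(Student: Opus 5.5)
The plan is to split the bias around the mean normalizer. Since $\E[\widehat g]=g$ and $g$ is deterministic, I would write
\begin{equation*}
\E[\alpha\widehat g]-\E[\alpha]g=\E[\alpha g]+\E[\alpha\xi]-\E[\alpha]g=\E[\alpha\xi].
\end{equation*}
Because $\E[\xi]=0$, I may subtract any constant multiple of $\xi$ without changing this expectation. Choosing the constant $\E[\alpha]$ gives $\E[\alpha\xi]=\E[(\alpha-\E[\alpha])\xi]$. The whole bias is then the covariance between the random scale and the sampling error.

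To bound this vector, I would first pass the norm inside the expectation. This is Jensen's inequality, or simply the triangle inequality for Bochner integrals:
\begin{equation*}
\norm{\E[(\alpha-\E\alpha)\xi]}\le\E\big[|\alpha-\E\alpha|\,\norm{\xi}\big].
\end{equation*}
Cauchy--Schwarz on the scalar product of $|\alpha-\E\alpha|$ and $\norm{\xi}$ then gives $\sqrt{\Var(\alpha)\,\E[\norm{\xi}^2]}$, which is \cref{eq:normalizerbias}.

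The integrability needed along the way comes from the same inequality. $\alpha$ is square integrable by hypothesis. If $\E\norm{\xi}^2=\infty$ the bound is trivial, and otherwise $\alpha\xi$ is integrable by Cauchy--Schwarz, so $\E[\alpha\xi]$ is well defined.

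For the conditional statement, I would use the tower property. If $\alpha$ is $\mathcal G$-measurable, then $\E[\alpha\xi]=\E[\alpha\,\E[\xi\mid\mathcal G]]=0$. Pulling $\alpha$ out of the conditional expectation is justified by the integrability established above.

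Independence is a special case. Take $\mathcal G=\sigma(\alpha)$; then independence of $\xi$ from $\alpha$ gives $\E[\xi\mid\mathcal G]=\E[\xi]=0$.

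The final sentence of the corollary is only a remark. The bound is not zero in general, and a same-population $S$ correlates with $\xi$, so the expected field can change. I do not expect any genuine obstacle. The only point needing care is the integrability of $\alpha\xi$, which Cauchy--Schwarz handles.
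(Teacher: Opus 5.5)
Your proposal is correct and matches the paper's proof: the same rewriting of the bias as $\E[(\alpha-\E[\alpha])\xi]$ using $\E[\xi]=0$, Cauchy--Schwarz for the bound, and the tower property $\E[\alpha\xi]=\E[\alpha\,\E[\xi\mid\mathcal G]]=0$ for the conditional case. Your extra steps make explicit what the paper leaves implicit: passing the norm inside the expectation, the integrability remark, and taking $\mathcal G=\sigma(\alpha)$ for the independence case.
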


\begin{proof}
Since $\E[\xi]=0$,
\begin{equation}
\E[\alpha\widehat g]-\E[\alpha]g=\E[(\alpha-\E[\alpha])\xi].
\end{equation}
Cauchy--Schwarz gives \cref{eq:normalizerbias}. If $\alpha$ is $\mathcal G$-measurable and $\E[\xi\mid\mathcal G]=0$, then $\E[\alpha\xi]=\E[\alpha\E[\xi\mid\mathcal G]]=0$.
\end{proof}

The bound becomes explicit in the population size once the update mean and empirical scale satisfy concentration bounds.

\begin{proposition}[$N$-dependent standardization bias]
\label{prop:normalizerN}
For each population size $N$, write $\widehat g_N=g+\xi_N$ and $\alpha_N=S_N^{-1}$. Suppose that, for constants $V,A<\infty$ independent of $N$,
\begin{equation}
\E[\norm{\xi_N}^2]\le\frac{V}{N}, \qquad \Var(\alpha_N)\le\frac{A}{N}.
\label{eq:normalizerconcentration}
\end{equation}
Then
\begin{equation}
\left\|\E[\alpha_N\widehat g_N]-\E[\alpha_N]g\right\|\le\frac{\sqrt{AV}}{N}.
\label{eq:normalizerN}
\end{equation}
Consequently, same-population standardization changes the raw population field by an $O(N^{-1})$ term, apart from the scalar factor $\E[\alpha_N]$.
\end{proposition}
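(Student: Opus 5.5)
The bound follows from \cref{cor:normalizerbias} applied separately at each population size $N$; no new probabilistic argument should be needed. Fix $N$ and write $\widehat g_N=g+\xi_N$, where $g$ is deterministic at the fixed iterate. The assumption $\E[\norm{\xi_N}^2]\le V/N<\infty$ makes $\xi_N$ square integrable. Its mean is zero because $\widehat g_N$ is an unbiased estimate of $g$ before standardization. This is the setting of \cref{prop:looroll} and of the antithetic estimator, whose means are the raw field. I take this centering as part of the standing hypotheses, as in \cref{cor:normalizerbias}. The assumption $\Var(\alpha_N)\le A/N<\infty$ gives $\alpha_N$ a finite second moment, so \cref{cor:normalizerbias} applies with $\alpha=\alpha_N$ and $\xi=\xi_N$.

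The first step is the decomposition from that corollary:
\begin{equation}
\E[\alpha_N\widehat g_N]-\E[\alpha_N]g=\E[(\alpha_N-\E[\alpha_N])\xi_N].
\end{equation}
It uses only $\E[\xi_N]=0$, which removes the term $\E[\alpha_N]\E[\xi_N]$. The second step applies Cauchy--Schwarz in $L^2$ of the probability space, componentwise or directly to the vector-valued integrand. The left side is then bounded by
\begin{equation}
\sqrt{\Var(\alpha_N)\,\E[\norm{\xi_N}^2]}.
\end{equation}
The third step substitutes \cref{eq:normalizerconcentration}:
\begin{equation}
\sqrt{\frac{A}{N}\cdot\frac{V}{N}}=\frac{\sqrt{AV}}{N}.
\end{equation}
This is \cref{eq:normalizerN}.

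For the consequence, write $\E[\alpha_N\widehat g_N]=\E[\alpha_N]g+e_N$ with $\norm{e_N}\le\sqrt{AV}/N$. The expected standardized update is therefore the raw field $g$ rescaled by the scalar $\E[\alpha_N]$, plus an $O(N^{-1})$ perturbation. The key point is that the two $O(N^{-1/2})$ fluctuations multiply.

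I expect no step to be a real obstacle. The only point to check is the bookkeeping: the bound must be applied to the centered scale $\alpha_N-\E[\alpha_N]$ rather than to $\alpha_N$ itself. The centered scale is what makes $\Var(\alpha_N)$, and not $\E[\alpha_N^2]$, appear, and this is what gives the product rate $1/N$ instead of $1/\sqrt N$.
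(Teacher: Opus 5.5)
Your proposal is correct and follows the paper's own proof, which applies the Cauchy--Schwarz bound of \cref{cor:normalizerbias} at each fixed $N$ and substitutes the two rates in \cref{eq:normalizerconcentration}. You were also right to state explicitly that $g$ is deterministic and $\E[\xi_N]=0$, since the proposition inherits both conditions from the corollary without restating them.
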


\begin{proof}
Apply \cref{eq:normalizerbias} and substitute the two bounds in \cref{eq:normalizerconcentration}.
\end{proof}

The first condition in \cref{eq:normalizerconcentration} is the ordinary $N^{-1}$ variance reduction from averaging independent member updates. The second follows, for example, when the empirical scale concentrates at rate $N^{-1/2}$ around a positive population scale and its inverse has uniformly controlled second moments. A deterministic floor on the empirical scale supplies a direct sufficient condition. More precisely, if $S_N\ge s_0>0$ almost surely, the population scale is $s>0$, and $\E[(S_N-s)^2]\le C_S/N$, then
\begin{equation}
\Var(S_N^{-1})\le \E[(S_N^{-1}-s^{-1})^2]\le\frac{C_S}{s_0^2s^2N},
\end{equation}
so \cref{eq:normalizerN} holds with $A=C_S/(s_0^2s^2)$. Thus, the raw resolvent field remains the leading expected direction as the population grows.

\subsection{Transformer-block audit}

The released EGGROLL implementation divides all member scores by one standard deviation computed from the same population. We test the resulting finite-population effect with 4,096 coupled directions for each of two Qwen3-0.6B blocks, four radii from $10^{-3}$ to $8\times10^{-3}$, and three seeds. For each block, radius, and seed, the directions are partitioned into independent populations of size $N$. After fitting the best scalar multiple of the raw field, the directional residual is the norm of the remaining orthogonal component divided by the norm of the standardized field.

\begin{table}[tbp]
\centering
\papertablestyle
\caption{Comparison of mean updates before and after population score standardization, averaged over two transformer blocks, four perturbation radii, and three seeds. Cosine measures directional agreement between the raw and standardized means, with $1$ indicating identical directions. Residual is the component of the standardized mean that cannot be explained by rescaling the raw mean, expressed as a percentage of the standardized mean's norm. Higher cosine and lower residual indicate a smaller directional effect.}
\label{tab:normalizationaudit}
\begin{tabular*}{\textwidth}{@{\extracolsep{\fill}}rrrrr@{}}
\toprule
$N$ & Dense cosine & Dense residual & Rank-one cosine & Rank-one residual \\
\midrule
32  & $.99954$ & $3.03\%$ & $.99858$ & $5.28\%$ \\
64  & $.99979$ & $2.04\%$ & $.99924$ & $3.87\%$ \\
128 & $.99990$ & $1.44\%$ & $.99970$ & $2.42\%$ \\
\bottomrule
\end{tabular*}
\end{table}

\Cref{tab:normalizationaudit} shows that the normalized and raw fields remain closely aligned, and that their difference decreases with population size. At the training value $N=128$, the rank-one cosine is $0.99970$ with a $2.42\%$ directional residual. The dense control gives $0.99990$ and $1.44\%$. Fits across ranks $1,2,4,8$ do not produce a coefficient whose direction and magnitude remain stable under the predicted $\sigma^2$ scaling, leaving that correction below the resolution of this audit.

\end{document}